\pgfplotsset{compat=1.18}       
\newtheorem{theorem}{Theorem}
\newtheorem{lemma}[theorem]{Lemma}
\newtheorem{proposition}[theorem]{Proposition}
\newtheorem{remark}[theorem]{Remark}
\newcommand{\cf}{\emph{cf.}\xspace}
\newcommand{\bdmath}{\begin{dmath}}
\newcommand{\edmath}{\end{dmath}}
\newcommand{\beq}{\begin{equation}}
\newcommand{\eeq}{\end{equation}}
\newcommand{\bdm}{\begin{displaymath}}
\newcommand{\edm}{\end{displaymath}}
\newcommand{\bea}{\begin{eqnarray}}
\newcommand{\eea}{\end{eqnarray}}
\newcommand{\beal}{\beq \begin{array}{ll}}
\newcommand{\eeal}{\end{array} \eeq}
\newcommand{\beas}{\begin{eqnarray*}}
\newcommand{\eeas}{\end{eqnarray*}}
\newcommand{\ba}{\begin{array}}
\newcommand{\ea}{\end{array}}
\newcommand{\bit}{\begin{itemize}}
\newcommand{\eit}{\end{itemize}}
\newcommand{\ben}{\begin{enumerate}}
\newcommand{\een}{\end{enumerate}}
\newcommand{\calE}{{\cal E}}
\newcommand{\calG}{{\cal G}}
\newcommand{\calM}{{\cal M}}
\newcommand{\ie}{\emph{i.e.,}\xspace}
\newcommand{\hide}[1]{}
\newcommand{\wrt}{w.r.t.\xspace}
\newcommand{\hiddenText}{{\color{gray} hidden text.}}
\newcommand{\hideWithText}[1]{\hiddenText}
\newcommand{\kron}{\otimes}
\newcommand{\subject}{\text{ subject to }}
\newcommand{\norm}[1]{\left\| #1 \right\|}
\newcommand{\tran}{^{\mathsf{T}}}
\newcommand{\trace}[1]{\mathrm{tr}\left(#1\right)}
\newcommand{\rank}[1]{\mathrm{rank}\left(#1\right)}
\newcommand{\inv}{^{-1}}
\newcommand{\zero}{{\mathbf 0}}
\newcommand{\eye}{{\mathbf I}}
\newcommand{\Real}[1]{ { {\mathbb R}^{#1} } }
\newcommand{\SEthree}{\ensuremath{\mathrm{SE}(3)}\xspace}
\newcommand{\SOthree}{\ensuremath{\mathrm{SO}(3)}\xspace}
\newcommand{\Othree}{\ensuremath{\mathrm{O}(3)}\xspace}
\newcommand{\scenario}[1]{{\smaller \sf#1}\xspace}
\newcommand{\gtsam}{{\smaller\sf gtsam}\xspace}
\newcommand{\blue}[1]{{\color{blue}#1}}
\newcommand{\linkToPdf}[1]{\href{#1}{\blue{(pdf)}}}
\newcommand{\linkToPpt}[1]{\href{#1}{\blue{(ppt)}}}
\newcommand{\linkToCode}[1]{\href{#1}{\blue{(code)}}}
\newcommand{\linkToWeb}[1]{\href{#1}{\blue{(web)}}}
\newcommand{\linkToVideo}[1]{\href{#1}{\blue{(video)}}}
\newcommand{\linkToMedia}[1]{\href{#1}{\blue{(media)}}}
\newcommand{\award}[1]{\xspace} 
\renewcommand{\norm}[1]{\left\lVert #1 \right\rVert}
\newcommand{\vectorize}[1]{\mathrm{vec}\parentheses{#1}}
\newcommand{\sym}[1]{\mathbb{S}^{#1}}
\newcommand{\bmat}{\left[ \begin{array}}
\newcommand{\emat}{\end{array}\right]}
\newcommand{\parentheses}[1]{\left(#1\right)}
\newcommand{\xmdouble}{\scenario{XM$^2$}}
\newcommand{\nameshort}{\scenario{XM}}
\newcommand{\xmsfm}{\scenario{XM-SfM}}
\newcommand{\sesync}{\scenario{SE-Sync}}
\newcommand{\colmap}{\textsc{Colmap}\xspace}
\newcommand{\glomap}{\textsc{Glomap}\xspace}
\newcommand{\ceres}{\textsc{Ceres}\xspace}
\newcommand{\manopt}{\textsc{Manopt}\xspace}
\renewcommand{\gtsam}{\textsc{Gtsam}\xspace}
\newcommand{\simsync}{\scenario{SIM-Sync}}
\newcommand{\sOthree}{\mathfrak{s}\mathrm{O}(3)}
\newcommand{\qcqp}{\scenario{QCQP}}
\newcommand{\xmbm}{\scenario{BM}}
\newcommand{\sdp}{\scenario{SDP}}
\begin{document}

\title{\vspace{-7mm} Building Rome with Convex Optimization
\vspace{-4mm}}


\author{\authorblockN{
Haoyu Han
and 
Heng Yang
}
\authorblockA{
School of Engineering and Applied Sciences, Harvard University}
\vspace{0.5mm}
\authorblockA{\texttt{\url{https://computationalrobotics.seas.harvard.edu/XM}}}
}



%

\twocolumn[{%
\renewcommand\twocolumn[1][]{#1}%
\maketitle
\vspace{-12mm}
\center

\begin{minipage}{\textwidth}
\centering
\vspace{5mm}
\includegraphics[width=\linewidth]{figures/BAL-demo.pdf}\\
\vspace{-1mm}
(a) BAL Dataset. From left to right: $10155$, $1934$, and $392$ camera frames.\\
\includegraphics[width=\linewidth]{figures/replica-demo.pdf}\\
\vspace{-1mm}
(b) Replica Dataset. All three cases have $2000$ camera frames. \\
\includegraphics[width=\linewidth]{figures/mipnerf-demo.pdf}\\
\vspace{-1mm}
(c) Mip-Nerf Dataset (left: reconstruction, right: novel view synthesis). $185$ camera frames.\\
\includegraphics[width=\linewidth]{figures/imc-demo.pdf}\\
\vspace{-1mm}
(d) IMC Dataset. From left to right: $3765$, $2063$, and $904$ camera frames. \\
\includegraphics[width=\linewidth]{figures/demo-tum.pdf}\\
\vspace{-1mm}
Left (e) TUM Dataset ($798$ and $613$ camera frames); Right (f) C3VD Dataset ($370$ and $613$ camera frames).
\vspace{-1mm}
\captionof{figure}{Faster, scalable, and initialization-free 3D reconstruction powered by conveX bundle adjustMent (\nameshort).
\label{fig:demos}}
\vspace{-1mm}
\end{minipage}

}]


\IEEEpeerreviewmaketitle

 \vspace{-2mm}
\begin{abstract}
   Global bundle adjustment is made easy by depth prediction and convex optimization. We (\emph{i}) propose a scaled bundle adjustment (SBA) formulation that lifts 2D keypoint measurements to 3D with learned depth, (\emph{ii}) design an empirically tight convex semidefinite programming (SDP) relaxation that solves SBA to certifiable global optimality, (\emph{iii}) solve the SDP relaxation at extreme scale with Burer-Monteiro factorization and a CUDA-based trust-region Riemannian optimizer (dubbed \nameshort), (\emph{iv}) build a structure from motion pipeline with \nameshort as the optimization engine and show that \xmsfm compares favorably with existing pipelines in terms of reconstruction quality while being significantly faster, more scalable, and initialization-free. 
\end{abstract}


\section{Introduction}
\label{sec:introduction}

At the heart of modern structure from motion (SfM) and simultaneous localization and mapping (SLAM) sits \emph{bundle adjustment} (BA), the procedure of reconstructing camera poses and 3D landmarks from 2D image keypoints.

\textbf{Classical BA formulation}. Consider a so-called \emph{view graph} illustrated in Fig.~\ref{fig:3d-view-graph} with two types of nodes: 3D points $p_k \in \Real{3}, k=1,\dots,M$ and camera poses $(R_i, t_i) \in \SEthree,i=1,\dots,N$. The set of edges $\calE$ contains visibility information. An edge $(i,k) \in \calE$ indicates the $k$-th 3D point is visible to the $i$-th camera, and a 2D keypoint measurement $u_{ik} \in \Real{2}$ has been obtained regarding the projection of the point onto the camera frame.\footnote{
    We consider BA with calibrated cameras, i.e., $u_{ik}$ has been normalized by camera intrinsics.
} 
The bundle adjustment problem consists of estimating points and poses (i.e., $p_k$'s and $(R_i, t_i)$'s on the nodes) using the 2D keypoint measurements (i.e., $u_{ik}$'s on the edges). This is often formulated as an optimization problem:
\bea\label{eq:ba-colmap}
\min_{\substack{p_k \in \Real{3}, k=1,\dots,M \\ (R_i, t_i) \in \SEthree, i=1,\dots,N } } \sum_{(i,k)\in \calE} \left\| u_{ik} - \pi(R_i p_k + t_i) \right\|^2,
\eea
where $R_i p_k + t_i$ transforms the point $p_k$ to the $i$-th camera frame, $\pi(v):= [v_1 ; v_2]/v_3$ divides the first two coordinates by the depth and projects the 3D point to 2D, and the sum of squared errors evaluates how well the reprojected 2D points agree with the measurements $u_{ik}$ for all $(i,k) \in \calE$. Problem~\eqref{eq:ba-colmap} assumes the availability of a view graph, which is often obtained through feature detection and matching in SfM and SLAM pipelines (to be detailed in \S\ref{sec:xm-sfm}). An important observation is that problem~\eqref{eq:ba-colmap} can only be solved \emph{up to scale}. This is because $\pi(R_i p_k + t_i) \equiv \pi(R_i (\alpha p_k) + (\alpha t_i))$ for any scalar $\alpha > 0$, i.e., scaling the points and translations by a factor of $\alpha$ does not change the objective value of problem~\eqref{eq:ba-colmap}. 

\textbf{Optimization challenges}. Problem~\eqref{eq:ba-colmap} is intuitive to formulate but extremely difficult to optimize. The difficulty comes from two challenges. First, problem~\eqref{eq:ba-colmap} is highly nonconvex. The nonconvexity comes from both the nonconvex feasible set $\SEthree$ and the nonconvex objective function due to the 2D reprojection function $\pi(\cdot)$. Second, problem~\eqref{eq:ba-colmap} can have an extremely large scale. Both the number of camera poses $N$ and the number of points $M$ can range from hundreds to tens of thousands (\cf examples in Fig.~\ref{fig:demos}). Due to these challenges, BA solvers such as \ceres~\cite{agarwal2012ceres}, \gtsam~\cite{dellaert2022gtsam}, and accelerated variants \cite{fan2023decentralization,ren2022megba} require good initializations and can often get stuck in poor local minima (\cf \S\ref{sec:exp-bal}). To address this, the popular SfM pipeline \colmap~\cite{schoenberger2016sfm} employs an \emph{incremental} strategy which starts by reconstructing only two views (i.e.,~\eqref{eq:ba-colmap} with $N=2$) and then sequentially registers additional camera images and associated 3D structure. Incremental SfM ensures stable initialization but makes the pipeline slow and hard to scale to large datasets (\cf \S\ref{sec:exp} where \colmap requires several hours runtime). The recent \emph{global} SfM pipeline \glomap~\cite{pan2025global} replaces the incremental process with rotation averaging and global positioning, but such initialization can still be time-consuming (see results in \S\ref{sec:exp}).

Therefore, the motivating question of this paper is:

\emph{Can we design an algorithm that solves the global bundle adjustment problem~\eqref{eq:ba-colmap} without initialization and at scale?}


\begin{figure}
    \centering
    \includegraphics[width=0.9\linewidth]{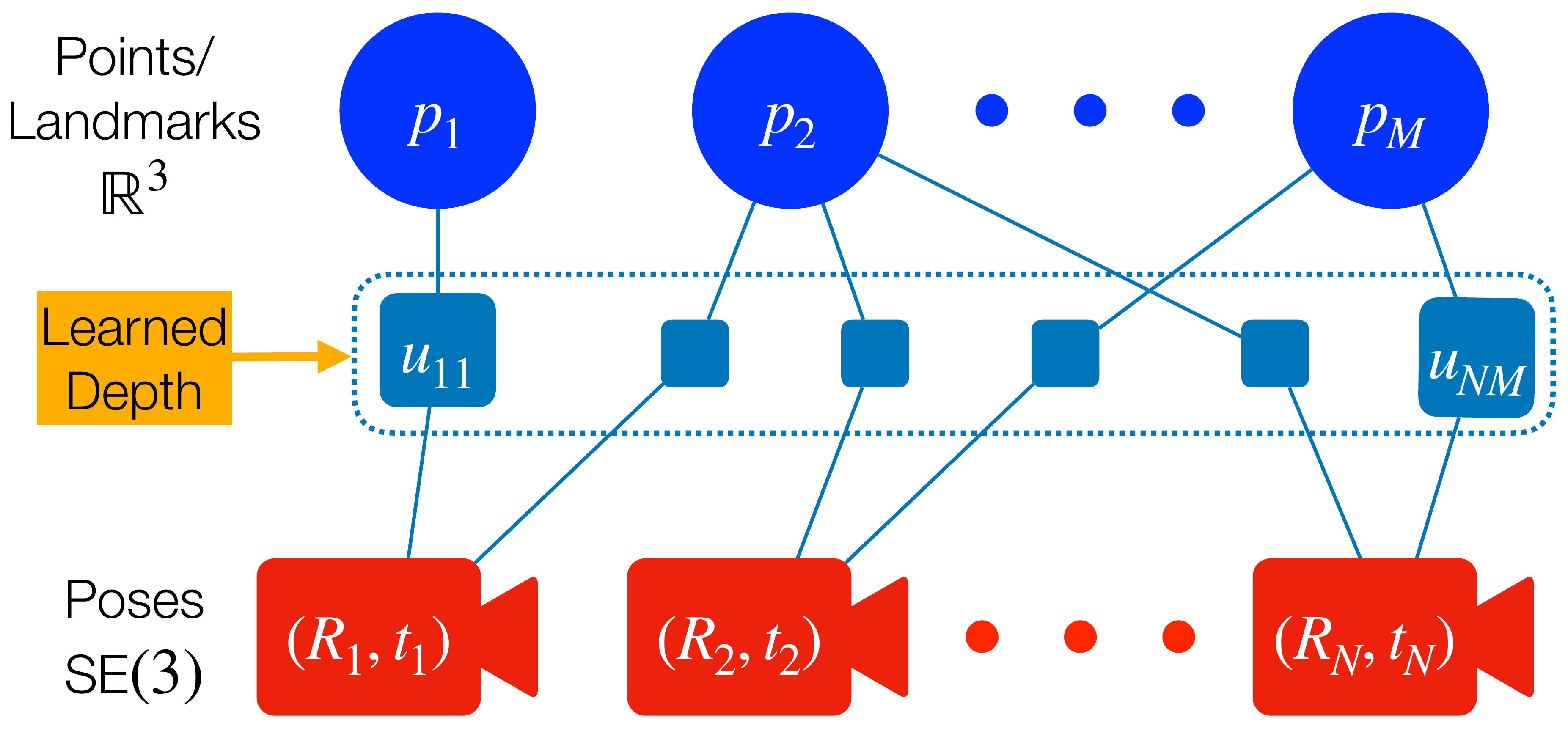}
    \caption{A view graph for the bundle adjustment formulation~\eqref{eq:ba-colmap}. We propose a \emph{scaled bundle adjustment} formulation~\eqref{eq:sba} by lifting the 2D keypoints to 3D with learned depth.}
    \label{fig:3d-view-graph}
    \vspace{-4mm}
\end{figure}

\textbf{BA with learned depth}. We start by designing an approximate formulation to problem~\eqref{eq:ba-colmap} that is simpler to optimize. The key insight is to lift the 2D keypoints $u_{ik}$ as approximate (and noisy) 3D keypoints leveraging off-the-shelf large-scale pretrained monocular depth prediction models, such as~\cite{piccinelli2024unidepth,yang2024depth,bochkovskii2024depth}. Formally, let $d_{ik} > 0$ be the predicted depth of the 2D keypoint $u_{ik}$, we generate a 3D keypoint measurement
\bea\label{eq:3D-lifting}
\tilde{u}_{ik} = d_{ik} \begin{bmatrix}
    u_{ik} \\ 1 
\end{bmatrix}, \quad \forall (i,k) \in \calE.
\eea
In our work, we select a \emph{metric depth} model, which is designed to produce depth estimates that match the true depth when the model is perfectly trained. Further, due to imperfect depth prediction caused by potentially out-of-distribution test images, we incorporate a scaling parameter that allows subsequent optimization to correct erroneous predictions, leading to the following \emph{scaled bundle adjustment} (SBA) formulation
\bea\label{eq:sba}
\min_{\substack{p_k \in \Real{3}, k=1,\dots,M \\ s_i > 0, i=1,\dots,N
\\ (R_i, t_i) \in \SEthree, i=1,\dots,N } } \sum_{(i,k)\in \calE} w_{ik}\left\| R_i (s_i \tilde{u}_{ik}) + t_i - p_k \right\|^2,
\eea
where $s_i$ scales the predicted 3D keypoint $\tilde{u}_{ik}$ in~\eqref{eq:3D-lifting}, $(R_i, t_i)$ transforms the scaled 3D keypoint to the global frame,\footnote{Although we used the same notation $(R_i,t_i)$ in both problems~\eqref{eq:ba-colmap} and~\eqref{eq:sba}, the two transformations are inverse to each other. $(R_i,t_i)$ in~\eqref{eq:ba-colmap} transforms landmarks from global frame to camera frame, while $(R_i,t_i)$ in~\eqref{eq:sba} transforms landmarks from camera frame to global frame.} and the squared errors in the objective computes 3D distances to the landmarks $p_k$ without the reprojection $\pi(\cdot)$. In \eqref{eq:sba}, we allow weighting the squared errors by confidence values ($w_{ik}>0$) from the depth prediction model. 

\begin{remark}[Connection to Other Perception Problems]
    It is worth noting that (i) without the per-frame scaling, problem~\eqref{eq:sba} recovers the multiple point cloud registration problem~\cite{chaudhury2015global,iglesias2020global}; (ii) when $N=2$ (two frames), problem~\eqref{eq:sba} reduces to (scaled) point cloud registration~\cite{horn1987closed,yang2020teaser}.
\end{remark}

Through depth prediction, we removed the reprojection function $\pi(\cdot)$ from~\eqref{eq:ba-colmap} and resolved one challenge. However, nonconvexity and large scale remain in problem~\eqref{eq:sba}.

\textbf{Contributions}. First, we show the remaining nonconvexity in problem~\eqref{eq:sba} is ``benign''. Our strategy is to first rewrite~\eqref{eq:sba} as a quadratically constrained quadratic program (QCQP), and then design a convex semidefinite program (SDP) relaxation, in the spirit of a growing family of SDP-enabled certifiable algorithms~\cite{yang2022certifiably,rosen2019se,yu2024sim,kang2024fast,barfoot2023certifiably,papalia2024certifiably,tian2021distributed}. We show the SDP relaxation is empirically \emph{tight}, i.e.,~globally optimal solutions of the nonconvex~\eqref{eq:sba} can be computed from the convex SDP relaxation with optimality certificates. Second, we show the convex SDP relaxations can be solved at \emph{extreme scale and speed}---faster and more scalable than even the best local solvers such as \ceres. The enabling technique is to exploit the low-rankness of (tight) SDP optimal solutions via Burer-Monteiro (BM) factorization~\cite{burer2003nonlinear} and solve the resulting Riemannian optimization using a trust-region algorithm~\cite{boumal2023introduction}. \emph{For the first time, we implemented the trust-region Riemannian optimizer directly in C++/CUDA} and show the GPU implementation is up to $100$ times faster than the state-of-the-art CPU-based \manopt package~\cite{boumal2014manopt} and can solve extreme-scale problems beyond the reach of \manopt (e.g., $N>10,000$ camera frames, see Fig.~\ref{fig:demos}). We name our GPU solver \nameshort (conveX bundle adjustMent). Third, we build a full SfM pipeline with \nameshort as the optimization engine and various techniques from prior work, such as feature matching from \colmap, view graph creation from \glomap, \ceres refinement (i.e., warmstart \ceres with \nameshort's solutions for solving~\eqref{eq:ba-colmap}), and outlier-robust filtering and estimation schemes~\cite{antonante2021outlier,fischler1981random,olsson2010outlier,sim2006removing}. We test \xmsfm across six popular datasets and demonstrate that \xmsfm compares favorably with existing SfM pipelines in terms of reconstruction quality while being significantly faster, more scalable, and initialization-free, thanks to convex optimization and GPU-based implementation.

In summary, our contributions are:
\begin{itemize}
    \item designing an empirically tight convex SDP relaxation for the scaled bundle adjustment problem~\eqref{eq:sba};
    \item solving the convex SDP at extreme scales using BM factorization paired with a trust-region Riemannian optimizer directly implemented in C++/CUDA, \ie~\nameshort;
    \item creating a full SfM pipeline called \xmsfm that ``builds Rome with convex optimization''.
\end{itemize}

\textbf{Paper organization}. We derive the QCQP formulation for~\eqref{eq:sba} and design the SDP relaxation in \S\ref{sec:method}. We present BM factorization and the CUDA-based trust-region Riemannian optimizer in \S\ref{sec:implementation}. We describe the \xmsfm pipeline in \S\ref{sec:xm-sfm} and present experimental results in \S\ref{sec:exp}. We conclude in \S\ref{sec:conclusion}.
\section{QCQP and Convex SDP Relaxation}
\label{sec:method}

In this section, let us focus on solving the SBA problem~\eqref{eq:sba} to certifiable global optimality. We proceed in two steps. In \S\ref{sec:qcqp}, we simplify the original formulation as a quadratically constrained quadratic program (QCQP) through a sequence of mathematical manipulations. In \S\ref{sec:sdp-relax}, we apply Shor's semidefinite relaxation to ``convexify'' the nonconvex QCQP. 

Before we get started, we remove the ambiguity of problem~\eqref{eq:sba} through anchoring.


{\bf Anchoring}. Observe that one can choose $s_i \rightarrow 0, \forall i = 1, \dots, N$, $t_1 = \dots = t_N = p_1 = \dots = p_M = {\bf 0}$, and the objective value of~\eqref{eq:sba} can be set arbitrarily close to zero. Additionally, multiplying an arbitary rotation matrix on $R_i, t_i, p_i$ does not change the objective of~\eqref{eq:sba}, leading to infinitely many solutions. To resolve these issues, we anchor the first frame and set $R_1 = \eye_3, t_1 = \zero, s_1 = 1$.

\subsection{QCQP Formulation}
\label{sec:qcqp}

We first show that the unconstrained variables in~\eqref{eq:sba}, namely the translations $t_i$ and the 3D landmarks $p_k$, can be ``marginalized out'', leading to an optimization problem only concerning the scaling factors and 3D rotations.

\begin{proposition}[Scaled-Rotation-Only Formulation]
    \label{prop:formulation_scale_rotation_only}
    Problem \prettyref{eq:sba} is equivalent to the following optimization
    \begin{subequations}\label{eq:problem_scale_rotation_only}
        \begin{align}
            \rho^\star = \min & \ \ \trace{Q U\tran U}  \\
            \subject & \ \ U = \bmat{cccc} \eye_3 & s_2R_2 & \cdots & s_N R_N \emat\\
             &\ \ s_i >0, R_i \in \SOthree, \quad i=2,\cdots,N
            \end{align}
    \end{subequations}
    where $Q \in \sym{3N}$ is a constant and symmetric ``data matrix'' whose expression is given in \prettyref{app:proof_scale_rotation_only}.


    
    Let $U^\star$ represent the optimal solution of \prettyref{eq:problem_scale_rotation_only} and let $t = [t_1;\dots;t_N] \in \Real{3N}$ be the concatenation of translations, $p = [p_1;\dots;p_M] \in \Real{3M}$ be the concatenation of landmark positions. The optimal translations $t^\star$ and landmark positions $p^\star$ of problem~\eqref{eq:sba} can be recovered from $U^\star$ as follows:
    \bea \label{eq:tpstar}
    \bmat{c} t^\star \\ p^\star  \emat = (A \kron \eye_3) \vectorize{U^\star}.
    \eea 
    where the expression of $A$ can be found in \prettyref{app:proof_scale_rotation_only}. 
\end{proposition}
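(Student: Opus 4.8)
The plan is to eliminate the unconstrained variables $t$ and $p$ analytically, collapsing \eqref{eq:sba} to an objective in the scaled rotations alone, and then to package the eliminated variables back into the linear recovery map \eqref{eq:tpstar}. First I would absorb the data term into the matrix $U$. Writing $R_i(s_i \tilde u_{ik}) = (s_i R_i)\tilde u_{ik}$ and stacking the (anchored) scaled rotations into $U = \bmat{cccc}\eye_3 & s_2 R_2 & \cdots & s_N R_N\emat$, each summand's rotated point equals $U b_{ik}$ with $b_{ik} = e_i \kron \tilde u_{ik}$, where $e_i$ is the $i$-th standard basis vector of $\Real{N}$. Setting $u := \vectorize{U}$ and using $U b_{ik} = (b_{ik}\tran \kron \eye_3)\,u =: B_{ik}u$, together with $t_i - p_k = C_{ik}\,x$ for the selection matrix $C_{ik} = c_{ik}\tran \kron \eye_3$ and $x := [t\vcat p]$, the objective becomes $\sum_{(i,k)\in\calE} w_{ik}\norm{B_{ik}u + C_{ik}x}^2$. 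This is jointly quadratic in $(u,x)$, and the block $x$ is entirely unconstrained—the anchor $t_1 = \zero$ is already encoded in the fixed $\eye_3$ block of $U$—so $x$ can be minimized out in closed form.

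Second, I would carry out the marginalization. Collecting $W_{BB} = \sum w_{ik}B_{ik}\tran B_{ik}$, $W_{BC} = \sum w_{ik} B_{ik}\tran C_{ik}$, and $H = \sum w_{ik} C_{ik}\tran C_{ik} \succeq 0$, the inner problem $\min_x\; u\tran W_{BB} u + 2u\tran W_{BC} x + x\tran H x$ is a convex quadratic that is bounded below (the full objective is a nonnegative sum of squares), so its normal equations $Hx = -W_{BC}\tran u$ are consistent and a global minimizer is $x^\star = -H\pinv W_{BC}\tran u$. Substituting back and using $H H\pinv W_{BC}\tran u = W_{BC}\tran u$ on the column space of $H$ collapses the objective to the Schur complement $u\tran(W_{BB} - W_{BC}H\pinv W_{BC}\tran)u$. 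This step simultaneously produces the recovery formula \eqref{eq:tpstar}: since every factor $B_{ik}, C_{ik}$ carries a $\kron\eye_3$, one has $H = H_0\kron\eye_3$ and $W_{BC} = W_0\kron\eye_3$, hence $-H\pinv W_{BC}\tran = (-H_0\pinv W_0\tran)\kron\eye_3 = A\kron\eye_3$ with $A = -H_0\pinv W_0\tran$, giving $[t^\star\vcat p^\star] = (A\kron\eye_3)\vectorize{U^\star}$.

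Third, I would reduce the dimension of the objective. The same $\kron\eye_3$ structure shows $\tilde Q := W_{BB}-W_{BC}H\pinv W_{BC}\tran = Q\kron\eye_3$ for $Q = W_{BB,0}-W_0 H_0\pinv W_0\tran \in \sym{3N}$. The identity $u\tran(Q\kron\eye_3)u = \trace{U\tran U Q} = \trace{Q U\tran U}$ then rewrites the reduced objective as $\trace{Q U\tran U}$, and reattaching the constraints $s_1 R_1 = \eye_3$, $s_i>0$, $R_i\in\SOthree$ reproduces \eqref{eq:problem_scale_rotation_only} exactly; this identity also pins down the explicit $Q$ and $A$ recorded in \prettyref{app:proof_scale_rotation_only}.

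The main obstacle I expect is making the marginalization rigorous rather than merely formal. I must verify (i) that the normal equations are consistent, so that $H\pinv$ returns a genuine global minimizer over $x$ and not just a stationary point—this follows because the objective is a nonnegative sum of squares, forcing $-W_{BC}\tran u$ into $\mathrm{range}(H)$—and (ii) that the $\kron\eye_3$ structure survives the pseudoinverse, which holds since $(M\kron\eye_3)\pinv = M\pinv\kron\eye_3$ and is precisely what licenses the factorizations $\tilde Q = Q\kron\eye_3$ and $A\kron\eye_3$. Finally, I would note that the equivalence of optima holds in both directions: any feasible $(s_i,R_i)$ lifts to a full feasible point of \eqref{eq:sba} via \eqref{eq:tpstar} with identical cost, and conversely the optimal $x$ of \eqref{eq:sba} must coincide with $x^\star(U^\star)$, so $\rho^\star$ equals the optimal value of \eqref{eq:sba}.
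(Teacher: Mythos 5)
Your overall strategy---vectorize the residuals, marginalize out $[t \vcat p]$ as a quadratic subproblem, and exploit the $\kron\eye_3$ structure to obtain $\trace{QU\tran U}$ and a linear recovery map---is the same as the paper's, which performs the elimination by setting the gradient with respect to $y=[t \vcat p]$ to zero and solving the resulting linear system. Your pseudoinverse/Schur-complement variant is legitimate as far as it goes: the facts you invoke (consistency of the normal equations from boundedness below, and $(M\kron\eye_3)\pinv = M\pinv\kron\eye_3$) are both correct, and your route even sidesteps the connectivity-based rank argument the paper needs.

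However, there is a genuine error: you assert that ``$x$ is entirely unconstrained---the anchor $t_1=\zero$ is already encoded in the fixed $\eye_3$ block of $U$.'' That is false. The fixed block $\eye_3$ in $U$ encodes only $s_1=1$ and $R_1=\eye_3$; the constraint $t_1=\zero$ lives in $x$, not in $U$. This matters in two places. First, to equate the anchored problem's value with your unconstrained marginalization you need the translational gauge argument: each residual depends on $t_i - p_k$ only, so shifting all $t_i,p_k$ by a common vector preserves the cost and can be used to achieve $t_1=\zero$; hence $\min_{x}$ equals $\min_{x:\,t_1=\zero}$. This is true but you never argue it---you rely on the false claim instead. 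Second, and more concretely, your recovery map $A=-H_0\pinv W_0\tran$ returns the \emph{minimum-norm} minimizer of the unanchored inner problem, which in general has $t_1\neq\zero$, so it does not produce a feasible solution of the anchored problem~\eqref{eq:sba} as the paper sets it up; your closing claim that ``the optimal $x$ of \eqref{eq:sba} must coincide with $x^\star(U^\star)$'' fails for the same reason, since the unanchored minimizer is not unique ($H$ has the constant-translation gauge direction in its kernel). The paper avoids this by eliminating $t_1$ \emph{before} marginalizing: it shows $Q_{tp}$ is the graph Laplacian of the view graph, so the reduced matrix $\bar Q_{tp}$ obtained by deleting the $t_1$ column has full column rank (rank $N+M-1$ on a connected view graph), giving a unique anchored solution and the specific $A$ in the appendix. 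Your proof is repairable---either eliminate $t_1$ as the paper does, or post-compose your $A$ with the linear shift $t_i\mapsto t_i-t_1$, $p_k\mapsto p_k-t_1$---but as written it establishes the objective reduction while proving a recovery statement different from the one claimed.
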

\begin{proof}
    See \prettyref{app:proof_scale_rotation_only}.
\end{proof}

Proposition~\ref{prop:formulation_scale_rotation_only} reformulates the SBA problem~\eqref{eq:sba} as a lower-dimensional problem~\eqref{eq:problem_scale_rotation_only}. However, problem~\eqref{eq:problem_scale_rotation_only} is not a QCQP because the objective function is a degree-four polynomial in $s_i$ and $R_i$. In the next step, we show that it is possible to combine the ``scaled rotation'' $s_i R_i$ together as a new variable, effectively reducing the degree of the polynomial.



\begin{proposition}[QCQP Formulation]\label{prop:qcqp}
    Define the set of scaled orthogonal group as
    \bea \label{eq:sOthreedef}
    \sOthree = \{ \bar{R} \in \Real{3\times 3} \mid \exists s > 0, R \in \Othree~\text{s.t.}~ \bar{R} = sR \}.
    \eea 
    The set $\sOthree$ can be described by quadratic constraints
    \begin{equation} \label{eq:sOthreequadratic}
    \begin{split}
        \bar{R} = \bmat{ccc} c_1 & c_2 & c_3 \emat \in \sOthree \Longleftrightarrow \\ \begin{cases}
            c_1\tran c_1 = c_2\tran c_2 = c_3\tran c_3 \\
            c_1\tran c_2 = c_2\tran c_3 = c_3\tran c_1 = 0 .
        \end{cases}
    \end{split}
    \end{equation} 
    Consider the following QCQP
    \begin{subequations}\label{eq:problem-qcqp}
        \begin{align} 
            \rho_{\qcqp}^\star = \min  &\ \ \trace{Q U\tran U}  \\
            \subject & \ \ U = \bmat{cccc} \eye_3 & \bar{R}_2 & \cdots & \bar{R}_N \emat\\
            &\ \ \bar{R}_i \in \sOthree, ~i = 2,\dots,N.
        \end{align}
    \end{subequations}
    and let $U^\star = [\eye_3, \bar{R}_2^\star, \dots,\bar{R}_N^\star]$ be a global optimizer. If 
    \bea \label{eq:determinant}
    \det{\bar{R}_i^\star } > 0, i=2,\dots,N,
    \eea 
    then $U^\star$ is a global minimizer to problem \prettyref{eq:problem_scale_rotation_only}.
\end{proposition}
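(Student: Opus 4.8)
The plan is to establish \prettyref{prop:qcqp} in two largely independent pieces: first the quadratic description~\eqref{eq:sOthreequadratic} of the scaled orthogonal group $\sOthree$, and then the claim that a positive-determinant QCQP optimizer is in fact a minimizer of the scaled-rotation-only problem~\eqref{eq:problem_scale_rotation_only}.

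For the characterization, I would just compute $\bar{R}\tran \bar{R}$ in terms of the columns $c_1,c_2,c_3$. For the forward direction, writing $\bar{R} = sR$ with $s>0$ and $R\in\Othree$ gives $\bar{R}\tran \bar{R} = s^2 R\tran R = s^2 \eye_3$; reading off the $(i,j)$ entries, which are precisely $c_i\tran c_j$, yields the equal-norm and mutual-orthogonality conditions in~\eqref{eq:sOthreequadratic} with common squared norm $s^2$. For the converse, these conditions say exactly that $\bar{R}\tran \bar{R} = \gamma \eye_3$ for the common value $\gamma := c_1\tran c_1$; provided $\gamma>0$ I set $s=\sqrt{\gamma}>0$ and $R=\bar{R}/s$, so that $R\tran R = \eye_3$, i.e.~$R\in\Othree$ and $\bar{R}=sR\in\sOthree$. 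The one subtlety to flag is that the quadratic constraints also admit $\bar{R}=\zero$ (the case $\gamma=0$), which does not belong to $\sOthree$ under the strict requirement $s>0$; this degenerate point is excluded exactly by the determinant hypothesis invoked below, since $\det\zero = 0$.

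For the tightness claim, the key observation is that~\eqref{eq:problem-qcqp} is a \emph{relaxation} of~\eqref{eq:problem_scale_rotation_only}. Indeed, any feasible $U$ for~\eqref{eq:problem_scale_rotation_only} has blocks $s_iR_i$ with $s_i>0$ and $R_i\in\SOthree\subset\Othree$, hence $s_iR_i\in\sOthree$ and $U$ is feasible for~\eqref{eq:problem-qcqp}; since both problems share the identical objective $\trace{QU\tran U}$, the feasible-set inclusion gives $\rho_{\qcqp}^\star \le \rho^\star$. Now suppose the QCQP optimizer $U^\star=[\eye_3,\bar{R}_2^\star,\dots,\bar{R}_N^\star]$ satisfies~\eqref{eq:determinant}. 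Each block factors as $\bar{R}_i^\star = s_i R_i$ with $s_i>0$ and $R_i\in\Othree$, so $\det \bar{R}_i^\star = s_i^3 \det R_i$; because $s_i^3>0$ and $\det R_i\in\{\pm 1\}$, the condition $\det\bar{R}_i^\star>0$ forces $\det R_i = +1$, i.e.~$R_i\in\SOthree$. Consequently $U^\star$ is itself feasible for~\eqref{eq:problem_scale_rotation_only}, whence $\rho^\star \le \trace{Q(U^\star)\tran U^\star} = \rho_{\qcqp}^\star$. Combining the two inequalities sandwiches $\rho^\star = \rho_{\qcqp}^\star$ and shows $U^\star$ attains the optimum of~\eqref{eq:problem_scale_rotation_only}, i.e.~it is a global minimizer.

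I do not expect a genuine obstacle here, as the argument is largely bookkeeping; the single point demanding care is the interplay between $\Othree$ and $\SOthree$. Relaxing the scaled-$\SOthree$ constraint to $\sOthree$ (built on $\Othree$) is precisely what makes the constraints purely quadratic and amenable to Shor's SDP relaxation, but it silently admits reflections ($\det = -1$) as well as the trivial scaling $\bar{R}=\zero$. The determinant hypothesis~\eqref{eq:determinant} is exactly the certificate that neither spurious case occurs at the computed optimum, so the proposition should be read as furnishing global optimality \emph{a posteriori} whenever~\eqref{eq:determinant} holds, with no claim made when some $\det\bar{R}_i^\star\le 0$.
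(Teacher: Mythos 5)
Your proof is correct and follows essentially the same route as the paper's appendix proof: observe that the QCQP~\eqref{eq:problem-qcqp} is a relaxation of~\eqref{eq:problem_scale_rotation_only} (since $\SOthree \subset \Othree$), use $\det\bar{R}_i^\star = (s_i^\star)^3\det R_i^\star > 0$ to conclude $R_i^\star \in \SOthree$ so that $U^\star$ is feasible for~\eqref{eq:problem_scale_rotation_only}, and sandwich the two optimal values. If anything, yours is slightly more thorough: the paper never actually verifies the quadratic characterization~\eqref{eq:sOthreequadratic} (and states the inclusion with a typo, ``$\Othree \subseteq \SOthree$'', where you give the correct direction), and your observation that $\bar{R} = \zero$ satisfies the quadratic constraints yet lies outside $\sOthree$ is a genuine subtlety the paper glosses over.
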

\begin{proof}
    See \prettyref{app:proof_qcqp}.
\end{proof}

It is clear that 
\bea\label{eq:qcqp-inequality}
\rho^\star_\qcqp \leq \rho^\star
\eea
because from~\eqref{eq:problem_scale_rotation_only} to~\eqref{eq:problem-qcqp} we have relaxed the $\SOthree$ constraint to $\Othree$.
After solving~\eqref{eq:problem-qcqp},  
if there exists some index $i$ such that $\det{\bar{R}_i^\star} < 0$, we can extract a feasible solution to~\eqref{eq:problem_scale_rotation_only} by projecting onto $\SOthree$ after extracting the scaling from $\bar{R}_i^\star$.

\begin{remark}[Bad Local Minimum]
    Problem~\eqref{eq:problem-qcqp} is a nonconvex QCQP. In fact, it is a smooth Riemannian optimization by identifying $\sOthree$ as a product manifold of the positive manifold and the orthogonal group. Therefore, one can directly use, e.g., \manopt to solve~\eqref{eq:problem-qcqp} (and also~\eqref{eq:problem_scale_rotation_only}). However, as we will show in \S\ref{sec:exp} on the Mip-Nerf 360 dataset~\cite{barron2022mipnerf360}, directly solving~\eqref{eq:problem-qcqp} can get stuck in bad local minima. 
\end{remark}

This motivates and necessitates convex relaxation.




\subsection{Convex SDP Relaxation}
\label{sec:sdp-relax}

For any QCQP, there exists a convex relaxation known as Shor's semidefinite relaxation~\cite[Chapter 3]{yang24book-sdp}. The basic idea is fairly simple: by creating a matrix variable $X := U\tran U$ that is quadratic in the original variable $U$, problem~\eqref{eq:problem-qcqp} becomes \emph{linear} in $X$. The convex relaxation proceeds by using convex positive semidefinite constraints and linear constraints to properly enforce the matrix variable $X$. 

\begin{proposition}[SDP Relaxation]\label{prop:sdprelaxation}
    The following semidefinite program (SDP)
    \begin{subequations}\label{eq:problem-sdp}
        \begin{align}
            \rho^\star_{\sdp} = \min_{X \in \sym{3N}} &\ \  \trace{QX}  \\
            \subject & \ \ X = \bmat{ccc} \eye_3 & \cdots & * \\
            \vdots & \ddots & \vdots \\
            * & \cdots & \alpha_N \eye_3
            \emat \succeq 0 \label{eq:sdp-X-compact}
            \end{align}
    \end{subequations}
    is a convex relaxation to \prettyref{eq:problem-qcqp}, i.e., 
    \begin{equation}\label{eq:sdp-inequality}
        \rho^\star_\sdp \leq \rho^\star_\qcqp.
    \end{equation}
    Let $X^\star$ be a global minimizer of \prettyref{eq:problem-sdp}. 
    
    If $\rank{X^\star} = 3$, then $X^\star$ can be factorized as $X^\star = (\bar{U}^\star) \tran \bar{U}^\star$, and it holds\footnote{$\bar{R}_1^\star$ is in $O(3)$ because we restrict the scale of the first frame to be 1. }
    \bea
    \bar{U}^\star = \bmat{cccc} \bar{R}_1^\star & \bar{R}_2^\star & \cdots & \bar{R}_N^\star \emat \in O(3)\times\sOthree^{N-1}. 
    \eea
    Define 
    \bea
    U^\star = (\bar{R}_1^\star)\tran \bar{U}^\star,
    \eea
    then $U^\star$ is a global optimizer to \prettyref{eq:problem-qcqp}. In this case, we say the relaxation is \emph{tight} or \emph{exact}.
\end{proposition}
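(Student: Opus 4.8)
The plan is to prove the two claims separately: first the validity of the relaxation, i.e., inequality~\eqref{eq:sdp-inequality}, and then that the rank-$3$ condition certifies tightness. Both follow the standard Shor-relaxation-plus-rank-rounding template, so most of the work is block bookkeeping on the $3\times 3$ partition of $X$.

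For the inequality~\eqref{eq:sdp-inequality}, I would use the usual lifting argument. Take any feasible $U = [\eye_3, \bar R_2, \dots, \bar R_N]$ for the QCQP~\eqref{eq:problem-qcqp} and set $X := U\tran U$. Then $X \succeq 0$ as a Gram matrix, its $(1,1)$ block equals $\eye_3\tran \eye_3 = \eye_3$, and its $i$-th diagonal block equals $\bar R_i\tran \bar R_i$. By the quadratic description of $\sOthree$ in~\eqref{eq:sOthreequadratic}, writing $\bar R_i = s_i R_i$ with $R_i \in \Othree$ makes the columns of $\bar R_i$ mutually orthogonal with common squared norm $s_i^2$, so $\bar R_i\tran \bar R_i = s_i^2 \eye_3 =: \alpha_i \eye_3$. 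Hence $X$ satisfies every constraint in~\eqref{eq:sdp-X-compact}, is feasible for~\eqref{eq:problem-sdp}, and achieves $\trace{QX} = \trace{Q U\tran U}$. Minimizing over feasible $U$ gives $\rho^\star_\sdp \le \rho^\star_\qcqp$.

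For tightness, suppose $\rank{X^\star} = 3$. Since $X^\star \in \psd{3N}$, I would factor $X^\star = (\bar U^\star)\tran \bar U^\star$ with $\bar U^\star \in \Real{3 \times 3N}$ and partition $\bar U^\star = [\bar R_1^\star, \dots, \bar R_N^\star]$ into $3\times 3$ blocks. Reading off the diagonal blocks of~\eqref{eq:sdp-X-compact} yields $(\bar R_1^\star)\tran \bar R_1^\star = \eye_3$, so $\bar R_1^\star \in \Othree$, and $(\bar R_i^\star)\tran \bar R_i^\star = \alpha_i \eye_3$ for $i \ge 2$; provided $\alpha_i > 0$, dividing by $\sqrt{\alpha_i}$ exhibits $\bar R_i^\star / \sqrt{\alpha_i} \in \Othree$, hence $\bar R_i^\star \in \sOthree$, which is exactly the membership $\bar U^\star \in \Othree \times \sOthree^{N-1}$. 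I would then undo the anchoring by left-multiplying with $(\bar R_1^\star)\tran$: setting $U^\star = (\bar R_1^\star)\tran \bar U^\star$, the first block becomes $(\bar R_1^\star)\tran \bar R_1^\star = \eye_3$, and because $\sOthree$ is invariant under left multiplication by an orthogonal matrix (if $\bar R = sR$ with $R\in\Othree$ then $O\tran \bar R = s(O\tran R)$ and $O\tran R \in \Othree$), the remaining blocks stay in $\sOthree$, so $U^\star$ is feasible for~\eqref{eq:problem-qcqp}. Since $\bar R_1^\star \in \Othree$, we get $(U^\star)\tran U^\star = (\bar U^\star)\tran \bar R_1^\star (\bar R_1^\star)\tran \bar U^\star = (\bar U^\star)\tran \bar U^\star = X^\star$, whence $\trace{Q (U^\star)\tran U^\star} = \trace{Q X^\star} = \rho^\star_\sdp$. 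The chain $\rho^\star_\sdp \le \rho^\star_\qcqp \le \trace{Q (U^\star)\tran U^\star} = \rho^\star_\sdp$, where the middle inequality holds because $U^\star$ is QCQP-feasible, forces equality throughout; thus $U^\star$ attains $\rho^\star_\qcqp$ and the relaxation is tight.

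The one delicate point I expect is the nondegeneracy $\alpha_i > 0$: the positive-semidefinite constraint only guarantees $\alpha_i \ge 0$, and $\alpha_i = 0$ would collapse $\bar R_i^\star$ to the zero block (scale $0$), which lies outside $\sOthree$ and would break the factorization into genuine scaled-orthogonal blocks. I would argue this cannot happen at an optimizer—intuitively, the anchoring $s_1 = 1$ together with the coupling through $Q$ keeps the recovered solution bounded away from the all-zero scaling that anchoring was precisely introduced to forbid—or otherwise flag it as the single genuine hypothesis needed to read an $\sOthree$ element off the rank-$3$ factor. Everything else is routine linear-algebra manipulation of the block structure.
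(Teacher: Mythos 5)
Your proposal follows essentially the same route as the paper's own proof: lift a QCQP-feasible $U$ to $X = U\tran U$ to obtain $\rho^\star_\sdp \leq \rho^\star_\qcqp$, then factor the rank-$3$ optimizer $X^\star = (\bar U^\star)\tran \bar U^\star$, read the $\sOthree$ memberships off the diagonal block constraints, left-multiply by $(\bar R_1^\star)\tran$ to restore the anchoring, and close the chain of inequalities. The only difference is that you explicitly flag the nondegeneracy $\alpha_i > 0$ (the SDP constraint alone gives only $\alpha_i \geq 0$, and $\alpha_i = 0$ would put a zero block outside $\sOthree$), a point the paper's proof passes over silently when it divides by $s_i$ --- so your version is the same argument, stated slightly more carefully.
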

\begin{proof}
    See \prettyref{app:proof_sdp_relaxation}.
\end{proof}

If $\rank{X^\star} > 3$, then we can extract feasible solutions to~\eqref{eq:problem-qcqp} by taking the top three eigenvectors of $X^\star$ and project the corresponding entries to $\sOthree$ and further to scaled rotations, a step that is typically called \emph{rounding}. Denote $\hat{U}$ as the rounded solution that is feasible for~\eqref{eq:problem_scale_rotation_only}, we can evaluate the objective of~\eqref{eq:problem_scale_rotation_only} at $\hat{U}$ and denote it $\hat{\rho}$. Combining the chain of inequalities from~\eqref{eq:qcqp-inequality} and~\eqref{eq:sdp-inequality}, we get
\bea\label{eq:chain-inequality}
\rho^\star_\sdp \leq \rho^\star_\qcqp \leq \rho^\star \leq \hat{\rho},
\eea 
where the last inequality follows from $\hat{U}$ is a feasible solution for the minimization problem~\eqref{eq:problem_scale_rotation_only}. Assuming we can solve the convex SDP, we compute $\rho^\star_\sdp$ and $\hat{\rho}$ at both ends of the inequality~\eqref{eq:chain-inequality}, allowing us to evaluate a relative suboptimality:
\bea \label{eq:suboptimality}
\eta = \frac{\hat{\rho} - \rho^\star_\sdp}{1 + |\hat{\rho}| + |\rho^\star_\sdp|}.
\eea
$\eta \rightarrow 0$ certifies global optimality of the rounded solution $\hat{U}$, and tightness of the SDP relaxation.

\begin{tcolorbox}
    \begin{center}
        \vspace{-1mm}
        \textbf{Summary}
        \vspace{-1mm}
    \end{center} 
    From~\eqref{eq:sba} to~\eqref{eq:problem_scale_rotation_only}, we first eliminated translations and landmark positions. From~\eqref{eq:problem_scale_rotation_only} to~\eqref{eq:problem-qcqp}, we formulated a QCQP by creating the new constraint set $\sOthree$. From~\eqref{eq:problem-qcqp} to~\eqref{eq:problem-sdp}, we applied Shor's semidefinite relaxation. This sequence of manipulations and relaxations allows us to focus on solving the convex SDP problem~\eqref{eq:problem-sdp} while maintaining the ability to certify (sub)optimality of the original nonconvex problem~\eqref{eq:sba}, through the inequalities established in~\eqref{eq:chain-inequality}. 
\end{tcolorbox}

\begin{remark}[Connection to Prior Work]\label{eq:connection}
    For readers familiar with SDP relaxations, this section should not be surprising at all---that is the reason why we kept this section very brief and only focus on milestone results listed in the propositions. The closest two works to ours are \sesync~\cite{rosen2019se} and \simsync~\cite{yu2024sim}, and the SDP relaxation technique traces back to at least the work by Carlone et al.~\cite{carlone2015lagrangian}. The novelty of our formulation are twofold: (a) we estimate scaling factors with the motivation to correct learned depth while \sesync estimates rotations and translations only; (b) we jointly estimate 3D landmarks and (scaled) camera poses while \simsync does not estimate 3D landmarks. These differences allow us to solve the long-standing bundle adjustment problem with convex optimization.
\end{remark}

We shall focus on how to solve the convex SDP~\eqref{eq:problem-sdp}.

\section{Burer-Monteiro Factorization and \\ CUDA-based Riemannian Optimizer}
\label{sec:implementation}

Let us first write the SDP~\eqref{eq:problem-sdp} in standard primal form:
\begin{subequations}\label{eq:problem-sdp-standard}
    \begin{align} 
        \min_{X \in \sym{3N}} & \ \ \trace{QX} \\
        \subject & \ \ \langle A_i,X \rangle = b_i, i = 1,\dots,m \label{eq:sdp:standard:linear}\\
        & \ \ X \succeq 0  \label{eq:sdp:standard:psd}
        \end{align}
\end{subequations}
where we have rewritten the constraint~\eqref{eq:sdp-X-compact} as a positive semidefinite (PSD) constraint~\eqref{eq:sdp:standard:psd} and $m=5N+1$ linear equality constraints~\eqref{eq:sdp:standard:linear}. To see why this reformulation is true, note that the diagonal $3\times 3$ blocks of $X$ are either $\eye_3$ or scaled $\eye_3$---the former can be enforced using 6 linear equalities and the latter can be enforced using 5, summing up to $5N+1$ linear equalities. Associated with the primal standard SDP~\eqref{eq:problem-sdp-standard} is the following dual standard SDP:
\begin{subequations}\label{eq:problem-sdp-dual}
    \begin{align} 
    \max_{y \in \Real{m}} & \ \ \sum_{i=1}^{m} b_i y_i \\
    \subject & \ \ Z(y) := Q - \sum_{i=1}^{m} y_i A_i \succeq 0. \label{eq:Zofy}
    \end{align} 
\end{subequations}

Since Slater's condition holds ($X = \eye_{3N} \succ 0$ is feasible for the primal~\eqref{eq:problem-sdp-standard}), we know strong duality holds between primal~\eqref{eq:problem-sdp-standard} and dual~\eqref{eq:problem-sdp-dual}, i.e., their optimal values are equal to each other~\cite{yang24book-sdp,wolkowicz2012handbook}.  

\textbf{Scalability}. Once the SDP~\eqref{eq:problem-sdp} is formulated in standard forms, it can be solved by off-the-shelf SDP solvers such as MOSEK~\cite{aps2019mosek}, provided that the SDP's scale is not so large. Our SDP relaxation leads to a matrix variable with size $3 N \times 3N$. This means that when $N$ is in the order of hundreds, MOSEK can solve the SDP without any problem. However, when $N$ is in the order of thousands or tens of thousands, which is not uncommon in bundle adjustment problems, MOSEK will become very slow or even runs out of memory (e.g., MOSEK becomes unresponsive when $N>2000$). Therefore, we decided to customize a solver for our SDP relaxation.




    


\subsection{Burer-Monteiro Factorization}

The key structure we will leverage is, as stated in Proposition~\ref{prop:sdprelaxation}, the optimal solution $X^\star$ has its rank equal to three when the SDP relaxation is tight. In other words, the effective dimension of $X^\star$ can be $3 \times 3N$ instead of $3N \times 3N$.

To exploit the low-rank structure, we will leverage the famous Burer-Monteiro (BM) factorization~\cite{burer2003nonlinear}.

\begin{proposition}[BM Factorization]\label{prop:BM}
    For a fixed rank $r \geq 3$, the Burer-Monteiro factorization of \prettyref{eq:problem-sdp} and~\eqref{eq:problem-sdp-standard} reads:
    \begin{subequations}\label{eq:problem-bm}
        \begin{align} 
            \rho^\star_{\xmbm,r} = \min_{ \bar{R}_i \in \Real{r \times 3},i=1,\dots,N } & \ \ \trace{ Q U\tran U } \\
            \subject & \ \ U = \bmat{ccc}\bar{R}_1& \dots &\bar{R}_N \emat  \\
            & \ \ \bar{R}_i \tran \bar{R}_i = \begin{cases}
                \eye_3 & i=1 \\
                \alpha_i\eye_3 & i\geq 2
            \end{cases}.
            \end{align}
    \end{subequations}
\end{proposition}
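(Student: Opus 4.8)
The plan is to apply the Burer--Monteiro recipe directly to the standard-form SDP~\eqref{eq:problem-sdp-standard} and verify that the resulting low-rank problem coincides, constraint for constraint, with~\eqref{eq:problem-bm}. The recipe substitutes the matrix variable $X \in \sym{3N}$ by the factored form $X = U\tran U$ with $U \in \Real{r \times 3N}$. This substitution is the heart of the argument: it automatically enforces $X \succeq 0$ (any Gram matrix is PSD) and restricts the search to $\rank{X} \le r$, so the PSD-cone constraint~\eqref{eq:sdp:standard:psd} is absorbed into the parametrization and drops out of the reformulation. The objective carries over verbatim, since $\trace{QX} = \trace{Q U\tran U}$.

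The substance lies in translating the $m = 5N+1$ linear equalities~\eqref{eq:sdp:standard:linear} into the block constraints of~\eqref{eq:problem-bm}. First I would partition $U$ conformally with the $3 \times 3$ block structure of $X$, writing $U = [\bar{R}_1 \ \cdots \ \bar{R}_N]$ with each $\bar{R}_i \in \Real{r \times 3}$, so that the $(i,j)$ block of $X = U\tran U$ is exactly $\bar{R}_i\tran \bar{R}_j$. Recall from the discussion preceding~\eqref{eq:problem-sdp-dual} that the linear constraints were constructed precisely to pin down the diagonal blocks of $X$: the $6$ equalities for block $1$ force that block to equal $\eye_3$, while the $5$ equalities for each block $i \ge 2$ force that block to be a scalar multiple $\alpha_i \eye_3$ (three equalities kill the off-diagonal entries, two equate the diagonal entries, leaving the common value $\alpha_i$ free). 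Under the factorization, the $(i,i)$ diagonal block is $\bar{R}_i\tran \bar{R}_i$, so these equalities become exactly $\bar{R}_1\tran \bar{R}_1 = \eye_3$ and $\bar{R}_i\tran \bar{R}_i = \alpha_i \eye_3$ for $i \ge 2$, which are the constraints appearing in~\eqref{eq:problem-bm}. No equality touches the off-diagonal blocks, consistent with the free entries $*$ in~\eqref{eq:sdp-X-compact}; after the substitution those blocks are simply the induced products $\bar{R}_i\tran \bar{R}_j$ and remain unconstrained. Assembling the preserved objective, the vanished PSD constraint, and the translated diagonal-block constraints yields~\eqref{eq:problem-bm}.

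I expect the only delicate point to be bookkeeping rather than any nontrivial estimate: confirming that the scalar $\alpha_i$ appearing in the SDP block structure~\eqref{eq:sdp-X-compact} is the same free auxiliary that the factored constraint $\bar{R}_i\tran \bar{R}_i = \alpha_i \eye_3$ implicitly introduces, and that the count $6 + 5(N-1) = 5N+1$ of linear equalities matches the degrees of freedom removed by the block constraints. Since these checks are routine, the proposition is in essence a faithful transcription of the SDP under the Gram-matrix parametrization, and I would present it as such.
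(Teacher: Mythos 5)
Your proposal is correct and matches the paper's treatment: the paper states Proposition~\ref{prop:BM} without a formal proof, regarding it as a direct transcription of the SDP under the substitution $X = U\tran U$, and its accompanying remarks (PSD-ness holds by construction, the diagonal-block equalities become $\bar{R}_1\tran\bar{R}_1 = \eye_3$ and $\bar{R}_i\tran\bar{R}_i = \alpha_i\eye_3$) are exactly the steps you spell out. Your bookkeeping of the $6 + 5(N-1) = 5N+1$ equalities and of the unconstrained off-diagonal blocks is also consistent with the paper's constraint count preceding~\eqref{eq:problem-sdp-dual}.
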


A few comments are in order. First, by factorizing $X = U\tran U$, $X \succeq 0$ holds by construction. Second, note that when $r = 3$, problem~\eqref{eq:problem-bm} is exactly the same as the original QCQP~\eqref{eq:problem-qcqp} (up to the difference in $\bar{R}_1$), and thus is NOT convex. Third, as long as $r \geq r^\star$ where $r^\star$ is the minimum rank of all optimal solutions of the SDP~\eqref{eq:problem-sdp}, the nonconvex BM factorization has the same global minimum as the convex SDP~\eqref{eq:problem-sdp}~\cite{yang24book-sdp,burer2003nonlinear}. Note that the factor $U$ is a matrix of size $r \times 3N$, i.e., a flat matrix as shown in Fig.~\ref{fig:pipeline} bottom right.

\textbf{Counterintuitive}? The reader might find this confusing. In \S\ref{sec:method}, we applied a series of techniques to relax a nonconvex optimization problem into a convex SDP, enabling us to solve it to global optimality. Surprisingly, the BM factorization appears to ``undo'' this effort, pulling us back into nonconvex optimization. While the low-rank factorization offers a clear scalability advantage, it remains uncertain whether this benefit outweighs the drawbacks of reintroducing nonconvexity.

\textbf{Rank ``staircase''}. The secret ingredient of BM factorization to tackle nonconvexity is that we will solve the factorized problem~\eqref{eq:problem-bm} at \emph{increasing ranks}, like stepping up a ``staircase'' (\cf Fig.~\ref{fig:pipeline} bottom right). We will start with the lowest rank $r=3$ and solve problem~\eqref{eq:problem-bm} using local optimization. One of two cases will happen. (a) The local optimizer is the same as the global optimizer of the SDP, in which case we can leverage the dual SDP~\eqref{eq:problem-sdp-dual} to certify global optimality and declare victory against the SDP. (b) The local optimizer is not the same as the global optimizer of the SDP, in which case we can again leverage the dual SDP~\eqref{eq:problem-sdp-dual} to ``escape'' the bad local minimum via increasing the rank, i.e., going up the staircase. 

We formalize this in Algorithm~\ref{alg:bm} and prove its correctness.

\setlength{\intextsep}{0pt}  
\begin{algorithm}[ht]
    \caption{Riemannian Staircase}
    \label{alg:bm}
    \begin{algorithmic}[1]
        \STATE \textbf{Input:} the data matrix $Q$
        \STATE \textbf{Output:} optimal solution $U^\star$
        \STATE \texttt{\# Initialization}
        \STATE Set $r = 3$, $U_r^0 = [\eye_3, \dots, \eye_3]$ 
        \WHILE{True}
        \STATE \texttt{\# Local optimization of~\eqref{eq:problem-bm}}
        \STATE $U_r^\star= \textsc{LocalOptimizer}(U_r^0)$ \label{line:local_optimize}
        \STATE \texttt{\# Compute dual certificate}
        \STATE $y_r, Z(y_r) \leftarrow \textsc{Solve}~\eqref{eq:bm_first_order_optimality}$ \label{line:bm_first_order}

        \STATE \texttt{\# Certify global optimality}

        \IF{$Z(y_r) \succeq 0$}
        \STATE \textbf{return} $U_r$ \label{line:bm_return}
        \ENDIF
        \STATE \texttt{\# Escape local minimum}
        \STATE $v \leftarrow \textsc{LeastEigenvector}(Z(y_r))$ \label{line:bm_eigen_vector}
        \STATE $U_{r+1} = \bmat{cc} U_r\tran & \alpha v \emat \tran$ with $\alpha=1$
        \STATE \texttt{\# Line search}
        \WHILE{$\trace{QU_{r+1}U_{r+1}\tran} \geq \trace{QU_rU_r\tran}$}
        \STATE $\alpha = \alpha/2$ \label{line:linesearch-1}
        \STATE $U_{r+1} = \bmat{cc} U_r\tran & \alpha v \emat \tran$ \label{line:linesearch-2}
        \ENDWHILE
        \STATE $r \leftarrow r+1$, $U_{r+1}^0 \leftarrow U_{r+1}$ \label{line:bm_increase_rank}
        \ENDWHILE
    \end{algorithmic}
\end{algorithm}

\textbf{Certify global optimality}. At every iteration of Algorithm~\ref{alg:bm}, it first computes a local optimizer of the BM factorization problem~\eqref{eq:problem-bm} in line~\ref{line:local_optimize} using an algorithm to be described in \S\ref{sec:manifold}. Denote the local optimizer as $U^\star_r$. To check whether $(U^\star_r)\tran U^\star_r$ is the optimal solution to the convex SDP~\eqref{eq:problem-sdp}, we need to compute the dual optimality certificate.

\begin{theorem}[Dual Optimality Certificate]\label{thm:BM}
    Let $U_r^{\star}$ be a locally optimal solution of \prettyref{eq:problem-bm}. Then, the linear independence constraint qualification (LICQ) must hold at $U^{\star}_r$, i.e.,
\begin{align}
\hspace{-4mm}\nabla_U (\langle A_i, (U^{\star}_r)\tran U^{\star}_r \rangle - b_i) = 2A_i (U^{\star}_r)\tran, \quad i = 1, \ldots, m
\end{align}
are linearly independent. Hence, there must exist a unique dual variable $y^{\star}$ such that
\begin{equation}\label{eq:bm_first_order_optimality}
Z(y^{\star})(U^{\star}_r)\tran = \zero.
\end{equation}
If further,
\bea
Z(y^{\star}) &\succeq& 0,
\eea
then $U^{\star}_r$ is a global optimizer of \prettyref{eq:problem-bm}, and $(U^{\star})\tran U^{\star}$,$y^{\star}, Z(y^{\star})$ are optimal for the SDP \prettyref{eq:problem-sdp-standard} and its dual \prettyref{eq:problem-sdp-dual}.
\end{theorem}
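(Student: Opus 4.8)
The plan is to prove the three assertions in order: (i) LICQ at $U_r^\star$, (ii) existence and uniqueness of the KKT multiplier $y^\star$ satisfying \eqref{eq:bm_first_order_optimality}, and (iii) global optimality together with SDP tightness under $Z(y^\star)\succeq 0$.

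First I would establish LICQ. The crucial structural fact is that each constraint matrix $A_i$ in \eqref{eq:problem-sdp-standard} acts only on a single diagonal block, i.e., each of the $5N+1$ quadratic constraints involves exactly one factor $\bar{R}_j$. Consequently the constraint gradients $2A_i(U_r^\star)^\tran$ have disjoint block supports, so linear independence across blocks is automatic and it suffices to check independence within each block. Writing $\bar{R}_j^\star = [c_1\ c_2\ c_3]$, the per-block constraints from \eqref{eq:sOthreequadratic} are $\|c_1\|^2=\|c_2\|^2=\|c_3\|^2$ and $c_1^\tran c_2 = c_2^\tran c_3 = c_3^\tran c_1 = 0$ (plus the scale-one anchor for $j=1$). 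Differentiating these yields gradients such as $(2c_1,-2c_2,0)$ and $(c_2,c_1,0)$; setting a linear combination to zero and reading off each column block, I would use that $c_1,c_2,c_3$ are orthogonal with equal positive norm $\sqrt{\alpha_j}$ (feasibility with $\alpha_j>0$, and $r\geq 3$) to force all coefficients to vanish. This gives the $5$ (resp.\ $6$) independent gradients per block, hence full LICQ.

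Next, since $U_r^\star$ is a local minimizer and LICQ holds, the first-order KKT conditions apply with a unique multiplier vector. Computing $\nabla_U \trace{Q U^\tran U} = 2U Q$ and $\nabla_U(\langle A_i, U^\tran U\rangle - b_i) = 2U A_i$, stationarity of the Lagrangian reads $2U_r^\star\left(Q - \sum_i y_i^\star A_i\right) = 0$, i.e.\ $U_r^\star Z(y^\star) = 0$; transposing gives exactly \eqref{eq:bm_first_order_optimality}. Uniqueness of $y^\star$ is immediate from LICQ, since the linear map $y \mapsto \sum_i y_i\, (2A_i (U_r^\star)^\tran)$ is injective. Finally, assuming $Z(y^\star)\succeq 0$, I would certify global optimality by verifying the KKT conditions of the convex SDP at $X^\star := (U_r^\star)^\tran U_r^\star$, $y^\star$, and $Z^\star := Z(y^\star)$: primal feasibility holds because $U_r^\star$ is feasible for \eqref{eq:problem-bm} (giving $\langle A_i, X^\star\rangle = b_i$) and $X^\star \succeq 0$ by construction; dual feasibility is the hypothesis $Z^\star\succeq 0$; and complementary slackness $\langle X^\star, Z^\star\rangle = \trace{U_r^\star Z^\star (U_r^\star)^\tran} = 0$ follows directly from \eqref{eq:bm_first_order_optimality}. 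Because the SDP is convex and Slater's condition holds (so strong duality is already in force), these conditions are sufficient for $X^\star$ and $(y^\star,Z^\star)$ to be primal-dual optimal. To transfer optimality back to the factorized problem, I would sandwich the optimal values: the rank-$r$ BM feasible set is contained in the SDP feasible set, so $\rho^\star_\sdp \leq \rho^\star_{\xmbm,r}$, while feasibility of $U_r^\star$ gives $\rho^\star_{\xmbm,r} \leq \trace{Q X^\star} = \rho^\star_\sdp$; equality forces $U_r^\star$ to be a global minimizer of \eqref{eq:problem-bm}.

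The main obstacle I anticipate is the LICQ step, and specifically the degenerate case $\bar{R}_j^\star = 0$ ($\alpha_j=0$), at which the block columns collapse and independence fails. I would rule this out using the strict positivity $s_j>0$ built into the definition of $\sOthree$ in \eqref{eq:sOthreedef} (equivalently $\alpha_j>0$), so that every feasible block has full column rank; the remaining steps are standard KKT stationarity and convex-duality bookkeeping.
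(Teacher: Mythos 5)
Your proposal is correct and follows essentially the same route as the paper: the LICQ argument matches the appendix proof in substance (constraint gradients have disjoint block supports, and within each block independence follows from feasibility forcing the columns of $\bar{R}_j^\star$ to be orthogonal with equal positive norm, i.e., full column rank since $\alpha_j>0$), with only a cosmetic difference in that the paper verifies in-block independence via the basis matrices $B^\ell$ and the full row rank of $R_k^\tran$ while you work column-by-column. The remaining KKT stationarity, uniqueness via LICQ, and the primal-dual certification with complementary slackness are exactly the standard arguments the paper defers to \cite{burer2003nonlinear} and \cite{yang24book-sdp}, which you have simply written out explicitly.
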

\begin{proof}
    In general, LICQ may not hold at a locally optimal solution of the BM factorization. However, in \prettyref{app:proof_licq}, we show that the unique structure of our \nameshort SDP relaxation leads to guaranteed satisfaction of LICQ. The rest of the Theorem is standard and follows from \cite{burer2003nonlinear} or \cite{yang24book-sdp}.
\end{proof}

Theorem~\ref{thm:BM} provides a simple recipe to certify global optimality by solving the linear system of equations in~\eqref{eq:bm_first_order_optimality} (recall $Z(y^\star)$ is linear in $y^\star$ from~\eqref{eq:Zofy}), forming the dual matrix $Z(y^\star)$, and checking its positive semidefinite-ness.

\textbf{Escape local minimum}. If $Z(y^\star)$ is not PSD, then $(U_r^\star)\tran U_r^\star$ is not optimal for the SDP. In this case, the following theorem states that the eigenvector of $Z(y^\star)$ corresponding to the minimum eigenvalue provides a descent direction.




\begin{theorem}[Descent Direction]\label{thm:BM_decrease}

    Let $U^{\star}_r$ be a local minimizer of problem \prettyref{eq:problem-bm} and $y^{\star}$ be the corresponding dual variable. Suppose $Z(y^\star)$ is not PSD and $v$ is an eigenvector of $Z(y^\star)$ corresponding to a negative eigenvalue. Then, consider the BM factorization \prettyref{eq:problem-bm} at rank $r+1$. The direction
    \bea
    D= \bmat{c} 0 \\ v^T \emat
    \eea
    is a descent direction at the point
    \bea
       \hat{U} = \bmat{c} U^{\star}_r \\ 0 \emat.
    \eea
\end{theorem}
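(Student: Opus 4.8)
The plan is to prove that $\hat U$ is a \emph{feasible critical point} of the rank-$(r+1)$ problem~\eqref{eq:problem-bm} at which the cost has strictly negative curvature in the direction $D$; moving along a feasible curve tangent to $D$ then strictly decreases the objective.

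First I would record two elementary block identities. Since $\hat U = \bmat{c} U^\star_r \\ 0 \emat$ (a zero row appended) and $D = \bmat{c} 0 \\ v^T \emat$ (all old rows zero, last row $v^T$), direct multiplication gives $\hat U\tran D = \zero$ and $D\tran D = vv\tran$. Consequently $(\hat U + tD)\tran(\hat U + tD) = \hat U\tran \hat U + t^2 vv\tran$, with \emph{no} first-order term, and $\hat U\tran\hat U = (U^\star_r)\tran U^\star_r$. Because every constraint $\langle A_j, U\tran U\rangle = b_j$ depends on $U$ only through the Gram matrix $U\tran U$, this immediately shows (i) $\hat U$ is feasible with $\trace{Q\hat U\tran\hat U} = \trace{Q (U^\star_r)\tran U^\star_r}$, and (ii) $D$ is tangent to the feasible set at $\hat U$, since the constraint differentials $\langle A_j, \hat U\tran D + D\tran \hat U\rangle$ all vanish.

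Next I would lift the first-order certificate from rank $r$ to rank $r+1$. Theorem~\ref{thm:BM} gives $Z(y^\star)(U^\star_r)\tran = \zero$, and appending the zero row yields $Z(y^\star)\hat U\tran = \zero$, so $\hat U$ is a critical point of~\eqref{eq:problem-bm} with the \emph{same} dual $y^\star$ and vanishing Riemannian gradient. The decrease is therefore a second-order effect, which I extract through the Lagrangian. On the feasible set, $L(U) := \trace{QU\tran U} - \sum_j y^\star_j(\langle A_j, U\tran U\rangle - b_j) = \langle Z(y^\star), U\tran U\rangle + \mathrm{const}$ coincides with the objective, so I evaluate $L$ along a smooth feasible curve $U(t) = \hat U + tD + t^2 W + o(t^2)$ with $U(0)=\hat U$ and $\dot U(0)=D$. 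Expanding $U(t)\tran U(t)$ and using $\hat U\tran D=\zero$, the order-$t$ term drops and the order-$t^2$ term is $vv\tran + \hat U\tran W + W\tran \hat U$; the unknown correction $W$ then disappears from $L$ because $\langle Z(y^\star), \hat U\tran W\rangle = \trace{Z(y^\star)\hat U\tran W} = 0$ by $Z(y^\star)\hat U\tran = \zero$. Hence $\trace{Q U(t)\tran U(t)} = \trace{Q\hat U\tran \hat U} + t^2\, v\tran Z(y^\star) v + o(t^2)$, and since $v$ is an eigenvector of $Z(y^\star)$ for a negative eigenvalue we have $v\tran Z(y^\star) v < 0$; thus the cost strictly decreases for small $t>0$ and $D$ is a descent direction.

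The single point that must be argued rather than assumed is the \emph{existence} of a feasible curve realizing the tangent vector $D$, i.e.\ that the feasible set is a smooth embedded manifold near $\hat U$. I would obtain this from LICQ at $\hat U$: the lifted constraint gradients $2A_j\hat U\tran = \bmat{cc} 2A_j(U^\star_r)\tran & 0 \emat$ are linearly independent precisely when the rank-$r$ gradients $2A_j(U^\star_r)\tran$ are, which is exactly the LICQ guaranteed by Theorem~\ref{thm:BM}. Regularity then extends $D$ to a smooth feasible curve, completing the proof. I expect this curve-existence/regularity argument to be the only genuine obstacle; everything else reduces to the two-line Gram-matrix computation combined with the certificate identity $Z(y^\star)\hat U\tran = \zero$.
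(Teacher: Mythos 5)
Your proposal is correct. The paper itself does not prove this theorem---it defers entirely to the citations (Burer--Monteiro and the SDP book)---and your argument is essentially the standard one found in those references and in the \sesync line of work: appending a zero row preserves the Gram matrix, the first-order term vanishes since $\hat{U}\tran D = \zero$, and the decrease is a second-order (negative-curvature) effect extracted through the Lagrangian, with $\langle Z(y^\star), \hat U\tran W\rangle = 0$ killing the unknown curve correction. You also correctly identify the two points that a careless proof would miss: the straight line $\hat U + tD$ is generally \emph{infeasible} (the diagonal blocks of $vv\tran$ need not be scaled identities), so one must pass to a feasible curve, and the existence of that curve rests on LICQ at $\hat U$, which reduces to LICQ at $U_r^\star$ exactly as guaranteed by Theorem~\ref{thm:BM}.
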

\begin{proof}
    See \cite{burer2003nonlinear} or \cite{yang24book-sdp}.
\end{proof}
Theorem~\ref{thm:BM_decrease} states that if Algorithm~\ref{alg:bm} gets stuck at rank $r$, it can escape the local minimum by increasing the rank.
Since $D$ is a descent direction, we perform line search in lines~\ref{line:linesearch-1}-\ref{line:linesearch-2} until a point with lower objective value is found.

\textbf{Global convergence}. Algorithm~\ref{alg:bm} is guaranteed to converge to the optimal solution of the SDP, as stated below.

\begin{theorem}[Global Convergence]\label{thm:global}
    Algorithm~\ref{alg:bm} finds a globally optimal solution of the SDP pair \eqref{eq:problem-sdp-standard}-\eqref{eq:problem-sdp-dual} regardless of the initialization point $U_r^0$ at $r=3$.
\end{theorem}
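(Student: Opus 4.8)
The plan is to split the statement into two parts: \emph{correctness upon termination} and \emph{finite termination}, and to assemble them from the three results already proved.

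\textbf{Correctness upon termination.} This part is immediate. Algorithm~\ref{alg:bm} returns (line~\ref{line:bm_return}) only when the test $Z(y_r)\succeq 0$ passes. At that point $U_r^\star$ is a local minimizer of the rank-$r$ factorization~\eqref{eq:problem-bm}, so Theorem~\ref{thm:BM} applies verbatim: the triple $(U_r^\star)\tran U_r^\star$, $y_r$, $Z(y_r)$ is primal-dual optimal for the SDP pair~\eqref{eq:problem-sdp-standard}--\eqref{eq:problem-sdp-dual}. Hence whatever is returned is a certified SDP global optimum; it only remains to show the loop does return.

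\textbf{Finite termination.} Each pass of the while loop either returns or increments $r$ by one, so it suffices to bound the number of rank increases. First I would establish strict monotone descent across rank increases. Whenever the test fails at rank $r$ (i.e. $Z(y_r)\not\succeq 0$), Theorem~\ref{thm:BM_decrease} supplies a descent direction at the lifted point $\hat U = [\,U_r^\star \vcat 0\,]$ for the rank-$(r+1)$ problem, so the line search in lines~\ref{line:linesearch-1}--\ref{line:linesearch-2} terminates with an initialization obeying $\trace{Q(U_{r+1}^0)\tran U_{r+1}^0} < \trace{Q(U_r^\star)\tran U_r^\star}$. Because the local optimizer of line~\ref{line:local_optimize} is a descent method, the local minima at successive ranks have strictly decreasing objective values; consequently no rank can be revisited and the staircase cannot cycle. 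Next I would supply an a priori rank ceiling: by the Barvinok--Pataki bound, the SDP~\eqref{eq:problem-sdp-standard} with its $m=5N+1$ linear constraints admits an optimal solution of rank $r^\star$ with $r^\star(r^\star+1)/2 \le m$, so $r^\star = O(\sqrt N)$. Invoking the standard low-rank optimality theory for the BM factorization~\cite{burer2003nonlinear,yang24book-sdp} (a rank-deficient second-order critical point of~\eqref{eq:problem-bm} projects to a global SDP optimum, exactly as exploited in the Riemannian-staircase analysis of~\cite{rosen2019se}), once $r$ passes this threshold the second-order critical point returned by the local optimizer is necessarily SDP-optimal, forcing $Z(y_r)\succeq 0$ and a return. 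Combining both facts, the loop raises the rank at most $O(\sqrt N)$ times and then halts with a certified global optimum, independently of the rank-$3$ initialization $U_r^0$.

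\textbf{Main obstacle.} The delicate step is the second half of finite termination, namely tying the escape mechanism to a \emph{finite} rank ceiling: monotone descent alone only rules out cycling and does not bound the rank. The crux is the structural fact that, at high enough rank, a second-order critical point of the nonconvex factorization is automatically a global SDP optimum, which is precisely what forces the dual certificate to become PSD and stops the staircase. I would lean on the low-rank/genericity theory already cited in the BM literature rather than reprove it, and I would verify one hypothesis carefully: that the optimizer invoked in line~\ref{line:local_optimize} returns genuine \emph{second-order} critical points (not merely first-order stationary points), since both Theorem~\ref{thm:BM} and Theorem~\ref{thm:BM_decrease} are applied at such points.
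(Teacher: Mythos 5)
Your split into \emph{correctness upon termination} and \emph{finite termination} is sound, and the first half matches the paper: the algorithm returns only when $Z(y_r)\succeq 0$, at which point Theorem~\ref{thm:BM} yields primal--dual optimality for the pair \eqref{eq:problem-sdp-standard}--\eqref{eq:problem-sdp-dual}. The genuine gap is in your rank ceiling. The Barvinok--Pataki bound only guarantees that \emph{some} optimal SDP solution has rank $r^\star$ with $r^\star(r^\star+1)/2\le m$; it does not imply that, once $r$ exceeds this threshold, the second-order critical point returned by line~\ref{line:local_optimize} is globally optimal. The rank-deficiency lemma you invoke applies only to critical points that \emph{happen to be} rank-deficient, and the mere existence of a low-rank optimal solution does not force the local optimizer to land on such a point: it can converge to a full-rank, non-optimal second-order critical point of~\eqref{eq:problem-bm}. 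The result that closes this gap near the Pataki threshold (Boumal--Voroninski--Bandeira) holds only for \emph{generic} cost matrices $Q$, an assumption absent from Theorem~\ref{thm:global}, and counterexamples show genericity cannot be dropped. So your $O(\sqrt N)$ ceiling does not give finite termination with a certified optimum for every instance.

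The paper avoids genericity by taking the worst-case ceiling $r=3N$, the full side length of $X$. At that rank the dichotomy is exhaustive: if the local optimizer $U^\star_r\in\Real{3N\times 3N}$ is rank-deficient, the Burer--Monteiro rank-deficiency lemma gives global optimality of $(U^\star_r)\tran U^\star_r$; if it is full rank, it is invertible, and the first-order condition $Z(y^\star)(U^\star_r)\tran=\zero$ from \eqref{eq:bm_first_order_optimality} forces $Z(y^\star)=\zero\succeq 0$, so the certificate passes and Theorem~\ref{thm:BM} applies. Either way the staircase must halt by $r=3N$ with a certified global optimum, for every $Q$ and every initialization. Your strict-descent observation is fine but not load-bearing ($r$ is monotonically increasing, so cycling is impossible regardless); the fix for your argument is either to add a genericity hypothesis on $Q$ (weakening the theorem) or to replace the $O(\sqrt N)$ step with the $r=3N$ dichotomy above.
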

\begin{proof}
    We show that, in the worst case where $r=3N$, every local optimizer of the BM factorization \eqref{eq:problem-bm} corresponds to a globally optimal solution of the SDP~\cite[Corollary 8]{journee2010low}. To see this, if the locally optimal solution at $r = 3N$ is rank-deficient, then global optimality is guaranteed by the ``rank deficiency'' Lemma~\cite[Proposition 4]{burer2003nonlinear}. Conversely, if the solution is full-rank, then $Z(y^\star)$ must be zero---hence also positive semidefinite---as a result of solving \eqref{eq:bm_first_order_optimality}, which again implies global optimality of the SDP. 
\end{proof}
This property ensures that our approach is initialization-free. To verify the correctness of the theory, we conducted experiments on the BAL-93 dataset using random initializations, and after 1000 trials, our method achieved a 100\% success rate.

It is worth noting that, in the worst case when $r$ is increased to $3N$, the BM factorization does not have any scalability advantage over the original SDP. Fortunately, in almost all numerical experiments, Algorithm~\ref{alg:bm} terminates when $r=3$ or $4$, bringing significant scalability advantage to solving large-scale SDP relaxations.


\subsection{C++/CUDA-based Riemannian Optimization}
\label{sec:manifold}

Everything in Algorithm~\ref{alg:bm} is clear except line~\ref{line:local_optimize} where one needs to locally optimize the nonconvex BM factorization~\eqref{eq:problem-bm}.

\textbf{Riemannian structure}. At first glance, problem~\eqref{eq:problem-bm} looks like a nonconvex constrained optimization problem. However, the constraints of~\eqref{eq:problem-bm} indeed define a smooth manifold. 

\begin{proposition}[BM Riemannian Optimization]
    The BM factorization problem~\eqref{eq:problem-bm} is equivalent to the following \emph{unconstrained} Riemannian optimization problem
    \begin{subequations}\label{eq:BM-Riemannian}
        \bea
        \min  & \trace{ Q U\tran U } \\
\subject & U = \bmat{cccc}R_1 & s_2 R_2 & \cdots & s_N R_N\emat\\
 &s_i \in \mathcal{M}_p, i = 2, \ldots, N,\\
& R_i \in \mathcal{M}_s^{(r)}, i = 1, \ldots, N
        \eea  
    \end{subequations}
    where $\calM_p$ is the positive manifold defined as 
    \bea 
    \calM_p : = \{ s \mid s > 0\},
    \eea
    and $\calM_s^{(r)}$ is the Stiefel manifold of order $r$:
    \bea  
    \mathcal{M}_s^{(r)} = \{R\in\Real{r\times3} \mid R\tran R = \eye_3,\} .
    \eea
\end{proposition}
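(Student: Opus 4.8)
The plan is to prove the equivalence by a \emph{block-wise change of variables}, reparametrizing each scaled-orthogonal block $\bar{R}_i$ of the BM factorization~\eqref{eq:problem-bm} as a positive scalar times a Stiefel matrix, and then verifying that the resulting constraint set is a product of smooth manifolds so that~\eqref{eq:BM-Riemannian} is a genuine unconstrained Riemannian problem. Since the objective $\trace{QU\tran U}$ depends only on the assembled factor $U = [\bar{R}_1, \dots, \bar{R}_N]$, the whole argument reduces to exhibiting an objective-preserving bijection between the two feasible sets and endowing the target set with a manifold structure.

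For the core step, I would fix a block index $i \geq 2$ with constraint $\bar{R}_i\tran\bar{R}_i = \alpha_i\eye_3$, $\alpha_i > 0$, and establish the set identity
\[
\{\bar{R} \in \Real{r\times 3} : \bar{R}\tran\bar{R} = \alpha\eye_3 \text{ for some } \alpha > 0\} = \{sR : s > 0,\ R\tran R = \eye_3\}.
\]
The inclusion $\supseteq$ is immediate, since $\bar{R} = sR$ with $R\tran R = \eye_3$ gives $\bar{R}\tran\bar{R} = s^2\eye_3$. For $\subseteq$, given $\bar{R}$ with $\bar{R}\tran\bar{R} = \alpha\eye_3$, set $s = \sqrt{\alpha} > 0$ and $R = \bar{R}/s$, so that $R\tran R = \bar{R}\tran\bar{R}/s^2 = \eye_3$ and $\bar{R} = sR$. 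I would also record that this decomposition is \emph{unique}: any two representations $s_1 R_1 = s_2 R_2$ force $s_1^2\eye_3 = s_2^2\eye_3$, hence $s_1 = s_2$ by positivity and then $R_1 = R_2$. Thus $(s_i,R_i) \mapsto s_i R_i$ is a bona fide bijective change of variables on each block. For the anchored first block, $\alpha_1 = 1$ pins $s_1 = 1$, so $\bar{R}_1 = R_1 \in \calM_s^{(r)}$ with no scaling, matching the statement.

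With the bijection in hand, the objective values coincide verbatim: substituting $\bar{R}_1 = R_1$ and $\bar{R}_i = s_i R_i$ for $i \geq 2$ reproduces exactly the matrix $U$ in~\eqref{eq:BM-Riemannian}, so $\rho^\star_{\xmbm,r}$ equals the optimal value of~\eqref{eq:BM-Riemannian} and optimizers correspond under the reparametrization. It then remains to identify the constraint set as a smooth manifold: $\calM_p = \{s : s>0\}$ is an open subset of $\R$, a trivial one-dimensional manifold, while $\calM_s^{(r)} = \{R\in\Real{r\times3} : R\tran R = \eye_3\}$ is the standard compact Stiefel manifold, a smooth embedded submanifold of $\Real{r\times 3}$ for every $r \geq 3$ (the map $R \mapsto R\tran R - \eye_3$ being a submersion onto symmetric matrices at feasible points is classical). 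The feasible set of~\eqref{eq:BM-Riemannian} is the product $\calM_s^{(r)} \times (\calM_p \times \calM_s^{(r)})^{N-1}$, itself smooth, so~\eqref{eq:BM-Riemannian} is genuinely \emph{unconstrained} over a product manifold.

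I expect no real obstacle, only two points requiring care: (i) ensuring $r \geq 3$ so that $\calM_s^{(r)}$ is nonempty and the orthonormality constraint is regular—guaranteed by the hypothesis $r \geq 3$ of Proposition~\ref{prop:BM}; and (ii) confirming uniqueness of the split $\bar{R}_i = s_i R_i$, which is what upgrades the equality of optimal values to a genuine correspondence of optimizers. Neither needs new machinery: the Stiefel structure is standard and the scalar/orthogonal factorization is elementary linear algebra, with the strict positivity $s_i > 0$ exactly matching the requirement $\alpha_i > 0$ implicit in the BM constraints.
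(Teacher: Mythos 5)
Your proof is correct and is precisely the detailed version of what the paper dispatches with the words ``By inspection'': the block-wise bijection $(s_i,R_i)\mapsto s_iR_i$ between blocks with scaled-identity Gram matrices and positive-scalar-times-Stiefel factors, together with the observation that $\calM_s^{(r)}\times(\calM_p\times\calM_s^{(r)})^{N-1}$ is a smooth product manifold, is exactly the content of the equivalence. The only subtle point is your reading of the constraint $\bar{R}_i\tran\bar{R}_i=\alpha_i\eye_3$ as implicitly requiring $\alpha_i>0$ (the literal equation would also admit $\bar{R}_i=0$, i.e., $\alpha_i=0$, which lies outside the Riemannian feasible set); this matches the paper's own convention (its LICQ proof likewise asserts $\alpha_k>0$ for feasible points), so no gap arises.
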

\begin{proof}
    By inspection.
\end{proof}

We solve problem~\eqref{eq:BM-Riemannian} using the Riemannian trust-region algorithm with truncated conjugate gradient (Rtr-tCG). Details of this algorithm can be found in \cite{absil2008optimization, boumal2023introduction}. 




\textbf{C++/CUDA implementation}. The Rtr-tCG algorithm is readily available through the \manopt package~\cite{boumal2014manopt}. However, to boost efficiency and enable fast solution of the SDP relaxation, we implement the Rtr-tCG algorithm directly in C++/CUDA, using analytical hessian-vector product.


\begin{itemize}
    \item \textbf{Conjugate gradient method}.
    The conjugate gradient method involves only Hessian-vector products and vector addition. The Hessian-vector product can be decomposed into two components: (a) the Euclidean Hessian-vector product and (b) the Riemannian projection onto the tangent space. The first component primarily requires matrix-vector multiplication, which can be efficiently implemented using \texttt{cuBLAS}. The second component involves batched small matrix-matrix multiplications, matrix-matrix inner products, scalar-matrix multiplications, all of which are implemented using custom CUDA kernels. For vector addition, we directly utilize the \texttt{cublasDaxpy} function from \texttt{cuBLAS}.

    \item \textbf{Retraction}. The retraction operation aims to map a point from the tangent space back to the manifold. In our case, the retraction operation happens both on the positive manifold and the Stiefel manifold. The retraction on the positive manifold is a simple custom kernel, while the retraction on the Stiefel manifold involves QR decomposition. We directly apply Gram-Schmidt process on every batch of $3 \times r$ matrices, which is implemented using custom CUDA kernels.
\end{itemize}

In \S\ref{sec:exp}, we show our GPU-based implementation achieves up to $100$ times speedup compared to the CPU-based \manopt.

\begin{tcolorbox}
    \begin{center}
        \vspace{-1mm}
        \textbf{Summary}
        \vspace{-1mm}
    \end{center} 
    We focused on developing a customized solver capable of solving large-scale SDP relaxations in~\eqref{eq:problem-sdp} (and~\eqref{eq:problem-sdp-standard}). We applied the Burer-Monteiro factorization method to exploit low-rankness of the optimal SDP solutions (\cf problem~\eqref{eq:problem-bm}), and leveraged the Staircase Algorithm~\ref{alg:bm} to solve the nonconvex BM factorization to global optimality. We pointed out the BM factorization problem is indeed an unconstrained Riemannian optimization problem (\cf problem~\eqref{eq:BM-Riemannian}) and developed a C++/CUDA-based implementation that is significantly faster than \manopt.
\end{tcolorbox}

\begin{remark}[Connection to Prior Work]
    This is not the first time BM factorization has been applied in robotics. \sesync and related works \cite{dellaert2020shonan,rosen2021scalable} pioneered the application of BM factorization for solving the pose graph optimization problem. Several works~\cite{garcia2021certifiable,garcia2024certifiable,holmes2023efficient,holmes2024sdprlayers} utilized the dual optimality certificate result in Theorem~\ref{thm:BM} to develop fast certifiers. Our novelty lies in developing the first C++/CUDA-based implementation of the Riemannian trust-region algorithm to push the scalability limitations of BM factorization.
\end{remark}

\begin{figure*}[!t]
    \centering
    \includegraphics[width=0.9\linewidth]{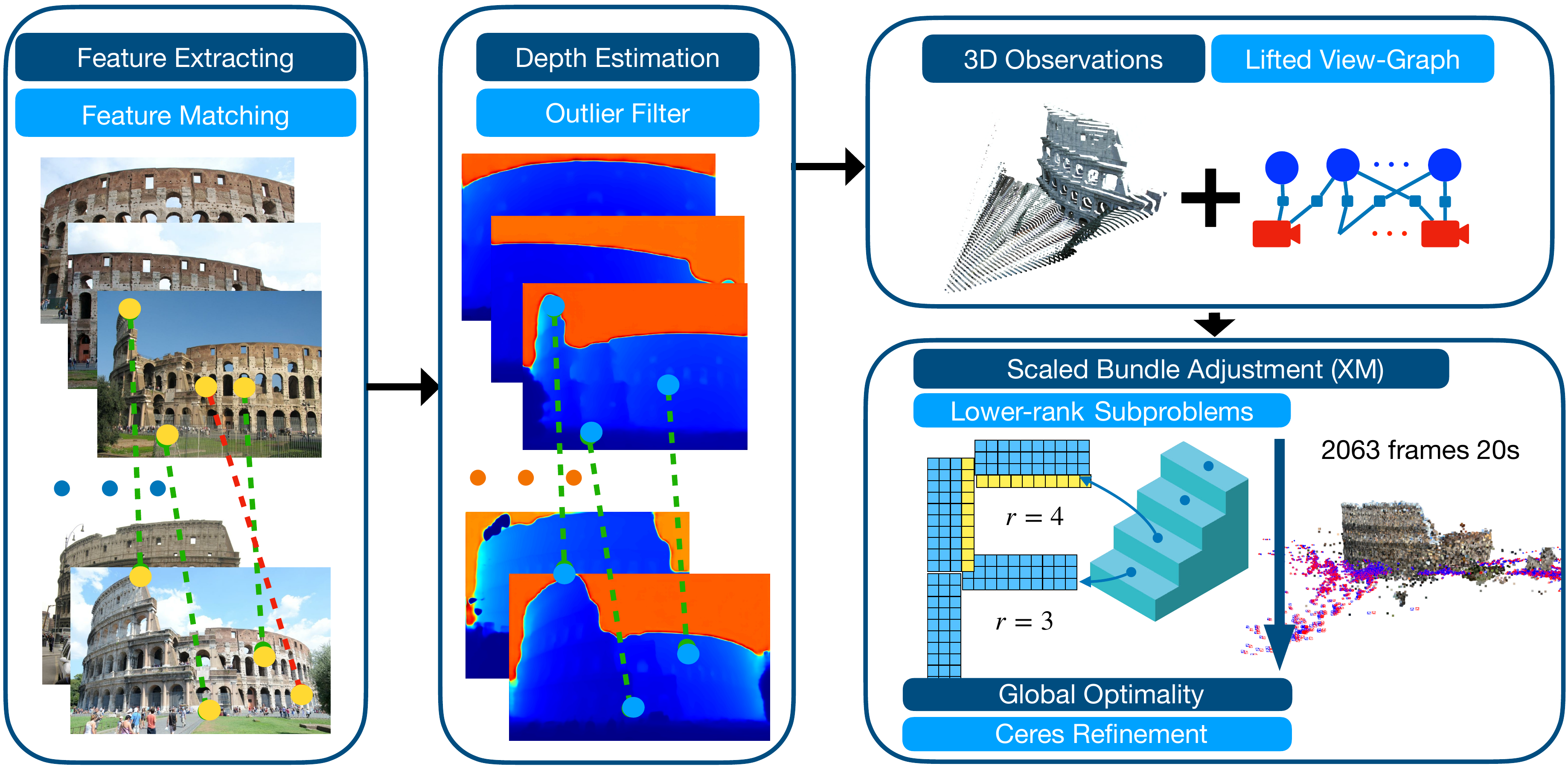}
    \caption{\xmsfm: structure from motion pipeline with \nameshort.}
    \label{fig:pipeline}
    \vspace{-3mm}
\end{figure*}

\vspace{-3mm}
\section{Structure from Motion with XM}
\label{sec:xm-sfm}

In this section, we present our SfM pipeline with \nameshort as the optimization engine, illustrated in Fig.~\ref{fig:pipeline}.

{\bf View graph}.
To construct view graph for an image set, we first run \colmap's \textit{feature extractor} and \textit{exhaustive matcher} to extract 2D correspondences. The {feature extractor} employs SIFT \cite{lowe1999sift} for feature detection and description, while the {exhaustive matcher} matches every image pair. After matching, we apply \glomap's \textit{track establishment} to produce a four-column file where the first two columns represent feature point coordinates, the third indicates the image index, and the fourth corresponds to the 3D landmark indices. Currently, we use the original implementation from \colmap and \glomap. However, we remark that it is possible to speed up the processes further using C++ and GPU implementation. We leave this improvement for future work.


{\bf Depth estimation}.
We use the depth estimation model \textsc{Unidepth}~\cite{piccinelli2024unidepth} to calculate the metric depth of a given image, and lift the view graph from 2D to 3D. It is worth emphasizing that our test datasets do \textbf{NOT} belong to the training dataset of \textsc{Unidepth}. If given the confidence map, we use it to update the weight of different observations.
We also tried other depth prediction models in \prettyref{app:depthcomparison}.

{\bf Filter from two-view estimation}.
Using 2D observations, we estimate the relative pose between two images. Based on this pose, we filter out 3D landmarks with large Euclidean distance errors. Specifically, landmarks with distance errors exceeding three times the median are removed.

{\bf \nameshort solver}.
We then use the lifted 3D measurements and the view-graph to form a $Q$ matrix as shown in \prettyref{eq:problem_scale_rotation_only}. We solve the \sdp~problem in \prettyref{eq:problem-sdp} using our \nameshort solver. If needed, we also delete the 10\% measurements with the largest residuals and re-run the \nameshort solver. This corresponds to a greedy heuristic for outlier removal~\cite{antonante2021outlier,sim2006removing} and we call it \xmdouble (running twice).

{\bf \ceres refinement}.
Usually the depth predictions are quite noisy, leading to inaccurate estimations of \nameshort. We therefore also feed the estimated poses and landmarks to \ceres as a warmstart to solve the original bundle adjustment problem~\eqref{eq:ba-colmap}. As we will show, the solution of \nameshort always provide a strong warmstart for \ceres to quickly optimize~\eqref{eq:ba-colmap}.


\section{Experiments}
\label{sec:exp}
\begin{table*}[t]
    \centering
    \caption{\textnormal{Results on the BAL dataset. We report the ATE, RPE and running time for \ceres, \manopt, and our proposed \nameshort solver. The evaluation is conducted on four BAL datasets with varying numbers of frames to demonstrate that our method is both fast and accurate across datasets of different scales (e.g., BAL-10155 indicates there are 10155 camera frames to be reconstructed). \ceres fails at a bad local minimum without a good initial guess, while \manopt performs significantly slower and even fails to solve within ten hours on the largest dataset. 
    }}
    \label{tab:bal}
    \begin{adjustbox}{width=\linewidth}
        \begin{tabular}{l|c|c|c|c|c|c|c|c|c|c|c|c|}
            \toprule
            Datasets & \multicolumn{3}{c|}{BAL-93\,(6033\,landmarks)} & \multicolumn{3}{c|}{BAL-392\,(13902\,landmarks)} & \multicolumn{3}{c|}{BAL-1934\,(67594\,landmarks)} & \multicolumn{3}{c|}{BAL-10155\,(33782\,landmarks)} \\
            \midrule
            Metrics & \stackon{Solver\,Time}{Processing\,Time} & \stackon{ATE-T}{ATE-R} & \stackon{RPE-T}{RPE-R} &  \stackon{Solver\,Time}{Processing\,Time} & \stackon{ATE-T}{ATE-R} & \stackon{RPE-T}{RPE-R} & \stackon{Solver\,Time}{Processing\,Time} & \stackon{ATE-T}{ATE-R} & \stackon{RPE-T}{RPE-R} & \stackon{Solver\,Time}{Processing\,Time} & \stackon{ATE-T}{ATE-R} & \stackon{RPE-T}{RPE-R}  \\
            \midrule\midrule
            \textsc{Ceres} & \stackon{$4.21$}{$\bf 0.06$} & \stackon{$0.18$}{$19.47^{\circ}$} & \stackon{$0.29$}{$15.46^{\circ}$} & \stackon{$23.31$}{$0.37$} & \stackon{$0.43$}{$23.49^{\circ}$} & \stackon{$0.7$}{$21.80^{\circ}$} & \stackon{$227.15$}{$3.85$} & \stackon{$0.32$}{$177.20^{\circ}$} & \stackon{$0.63$}{$31.50^{\circ}$} & \stackon{$50.97$}{$2.41$} & \stackon{$0.71$}{$84.29^{\circ}$} & \stackon{$1.25$}{$80.19^{\circ}$} \\
            \midrule
            \textsc{Ceres-gt} & \stackon{$0.36$}{$0.06$} & \stackon{$\bf0.0$}{$\bf0.0^{\circ}$} & \stackon{$0.01$}{$\bf0.0^{\circ}$} & \stackon{$3.59$}{$\bf0.36$} & \stackon{$0.01$}{$\bf0.0^{\circ}$} & \stackon{$0.01$}{$\bf0.0^{\circ}$} & \stackon{$16.39$}{$3.78$} & \stackon{$0.01$}{$\bf0.0^{\circ}$} & \stackon{$\bf0.01$}{$\bf0.0^{\circ}$} & \stackon{$\bf21.49$}{$\bf2.4$} & \stackon{$\bf0.0$}{$\bf0.0^{\circ}$} & \stackon{$\bf0.0$}{$\bf0.0^{\circ}$} \\
            \midrule
            \textsc{Ceres-gt-0.01} & \stackon{$11.61$}{$0.06$} & \stackon{$0.01$}{$1.72^{\circ}$} & \stackon{$0.02$}{$2.87^{\circ}$} & \stackon{$25.74$}{$0.38$} & \stackon{$0.04$}{$4.58^{\circ}$} & \stackon{$0.07$}{$5.73^{\circ}$} & \stackon{$427.79$}{$\bf3.73$} & \stackon{$0.01$}{$\bf0.0^{\circ}$} & \stackon{$\bf0.01$}{$\bf0.0^{\circ}$} & \stackon{$380.4$}{$2.45$} & \stackon{$0.04$}{$2.87^{\circ}$} & \stackon{$0.07$}{$5.15^{\circ}$} \\
            \midrule
            \textsc{Ceres-gt-0.1} & \stackon{$9.24$}{$0.06$} & \stackon{$0.56$}{$40.29^{\circ}$} & \stackon{$1.01$}{$34.38^{\circ}$} & \stackon{$21.34$}{$0.37$} & \stackon{$0.37$}{$22.37^{\circ}$} & \stackon{$0.73$}{$30.38^{\circ}$} & \stackon{$70.43$}{$3.76$} & \stackon{$0.82$}{$\bf0.0^{\circ}$} & \stackon{$1.3$}{$9.17^{\circ}$} & \stackon{$35.06$}{$2.38$} & \stackon{$0.33$}{$20.57^{\circ}$} & \stackon{$0.67$}{$37.78^{\circ}$} \\
            \midrule\midrule
            \textsc{Manopt} & \stackon{$0.56$}{$0.94$} & \stackon{$\bf0.0$}{$\bf0.0^{\circ}$} & \stackon{$\bf0.0$}{$\bf0.0^{\circ}$} & \stackon{$21.43$}{$2.7$} & \stackon{$\bf0.0$}{$\bf0.0^{\circ}$} & \stackon{$\bf0.0$}{$\bf0.0^{\circ}$} & \stackon{$236.46$}{$69.15$} & \stackon{$\bf0.0$}{$\bf0.0^{\circ}$} & \stackon{$\bf0.01$}{$\bf0.0^{\circ}$} & \stackon{$**$}{$336.76$} & \stackon{$**$}{$**$} & \stackon{$**$}{$**$} \\
            \midrule
            \textsc{\nameshort} & \stackon{$\bf0.07$}{$0.94$} & \stackon{$\bf0.0$}{$\bf0.0^{\circ}$} & \stackon{$\bf0.0$}{$\bf0.0^{\circ}$} & \stackon{$\bf0.55$}{$2.7$} & \stackon{$\bf0.0$}{$\bf0.0^{\circ}$} & \stackon{$\bf0.0$}{$\bf0.0^{\circ}$} & \stackon{$\bf2.09$}{$69.15$} & \stackon{$\bf0.0$}{$\bf0.0^{\circ}$} & \stackon{$\bf0.01$}{$\bf0.0^{\circ}$} & \stackon{$ 4322.77 $}{$ 336.76$} & \stackon{$ 0.02 $}{$ 0.73 ^{\circ}$} & \stackon{$ 0.02 $}{$ 0.5 ^{\circ}$} \\
            \bottomrule
        \end{tabular}
    \end{adjustbox}
    \vspace{-4mm}
\end{table*}

\begin{table}[t]
    \centering
    \caption{\textnormal{Results on the BAL dataset (suboptimality and min-eig). We report the suboptimality gap and minimum eigenvalue of $Z$ matrix in \prettyref{eq:problem-sdp-dual} for our proposed \nameshort method.
    }}
    \label{tab:bal-xm}
    \begin{adjustbox}{width=\linewidth}
    \begin{tabular}{l|c|c|c|c|}
        \toprule
        Datasets & \multicolumn{1}{c|}{BAL-93} & \multicolumn{1}{c|}{BAL-392} & \multicolumn{1}{c|}{BAL-1934} & \multicolumn{1}{c|}{BAL-10155} \\
        \midrule
        Suboptimality-Gap & $4.8 \times 10^{-4}$ & $4.2 \times 10^{-3}$ & $1.0 \times 10^{-2}$ & $6.2 \times 10^{-1}$ \\
        \midrule
        Min-eig & $-8.8 \times 10^{-5}$ & $1.3 \times 10^{-7}$ & $1.2 \times 10^{-6}$ & $-2.1 \times 10^{1}$ \\

        \bottomrule
    \end{tabular}
\end{adjustbox}
\vspace{-3mm}
\end{table}
\begin{figure*}[!t]
    \includegraphics[width=\linewidth]{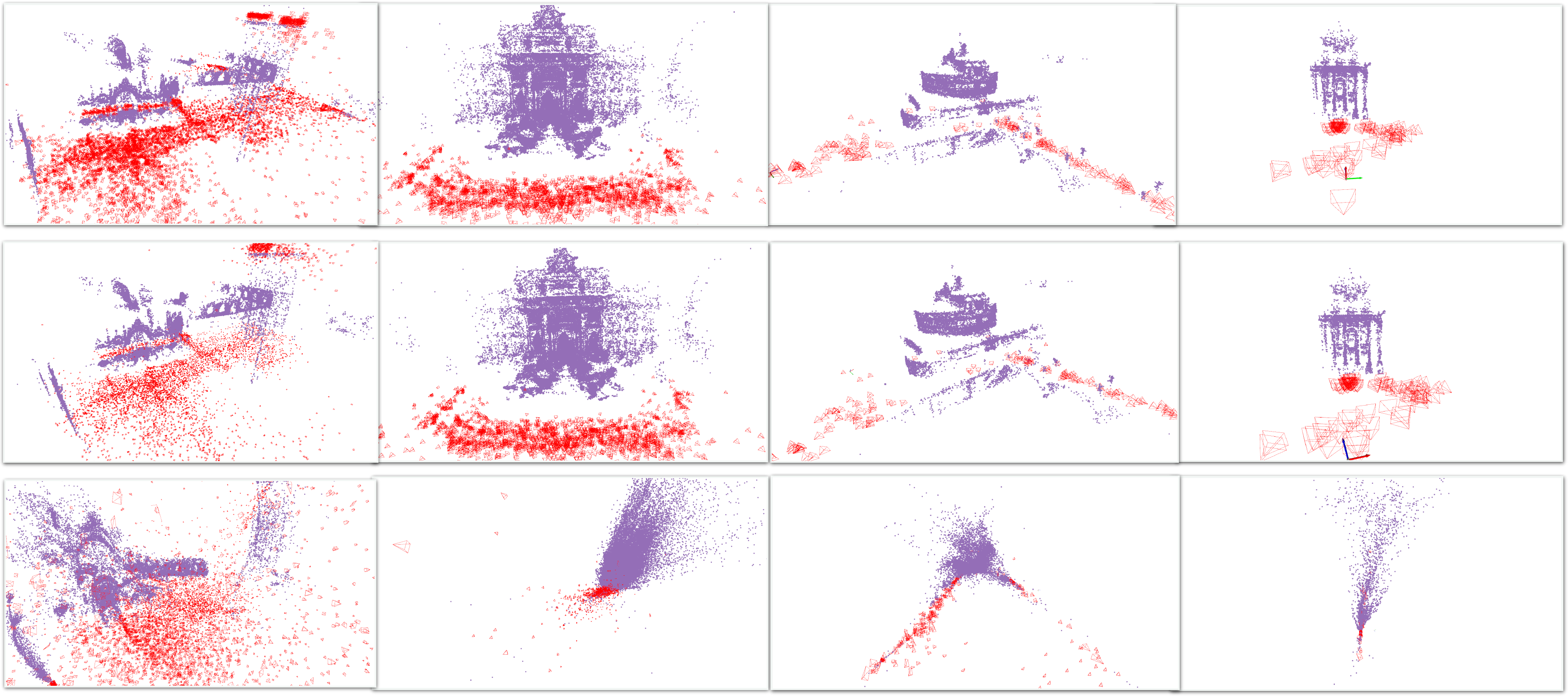}
    \caption{Visualization of BAL datasets. \textbf{Top:} Our \nameshort solver. \textbf{Middle:} \textsc{Ceres-GT-0.01}. \textbf{Bottom:} \textsc{Ceres-GT-0.1}. Both our \nameshort solver and \textsc{Ceres-GT-0.01} accurately recover the ground truth camera poses and landmarks, whereas \textsc{Ceres-GT-0.1} fails.}
    \label{fig:visualization-BAL}
    \vspace{-2mm}
\end{figure*}

\begin{figure*}[!t]
    \includegraphics[width=\linewidth]{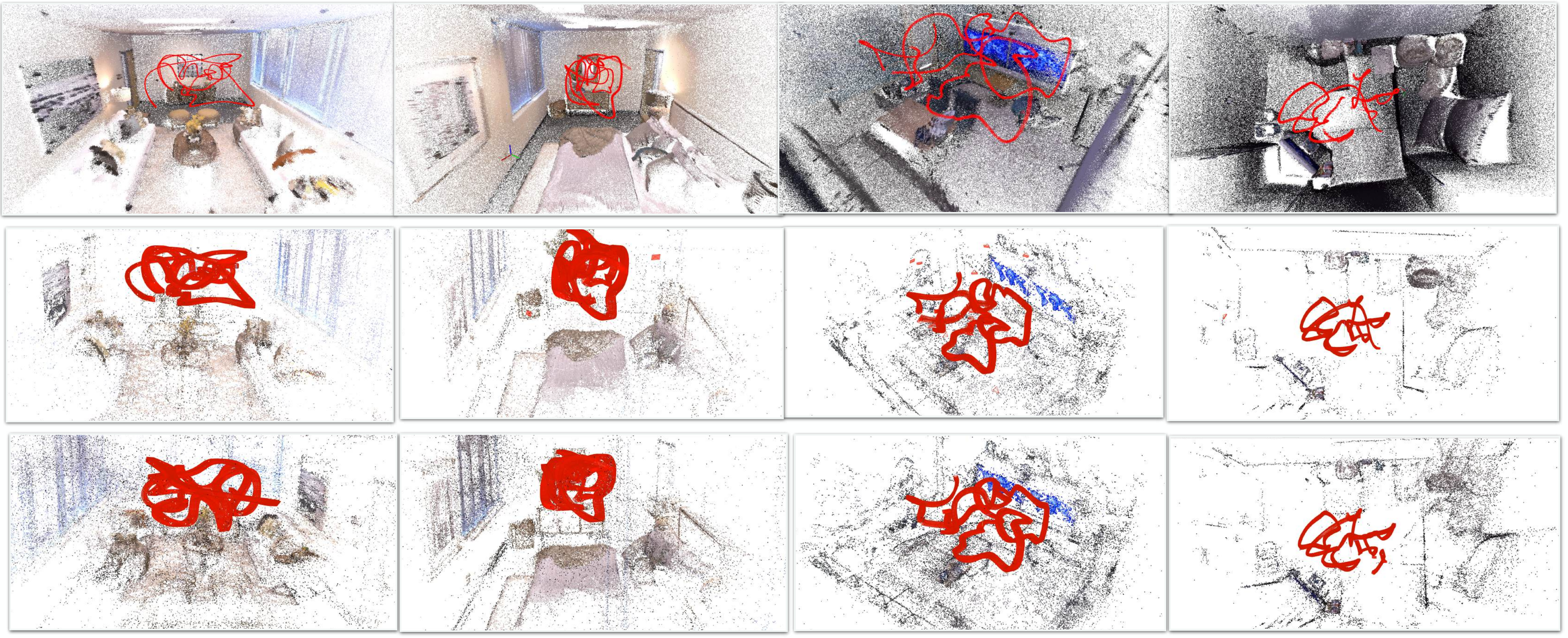}
    \caption{Visualization of Replica datasets. \textbf{Top:} Our \nameshort solver. \textbf{Middle:} \glomap. \textbf{Bottom:} \colmap. All methods achieve high accuracy, producing nearly identical reconstruction results. \glomap sometimes produce outliers (see column 2 and 3).
    }
    \label{fig:visualization-replica}
    \vspace{-5mm}
\end{figure*}

\begin{figure*}[!t]
    \includegraphics[width=\linewidth]{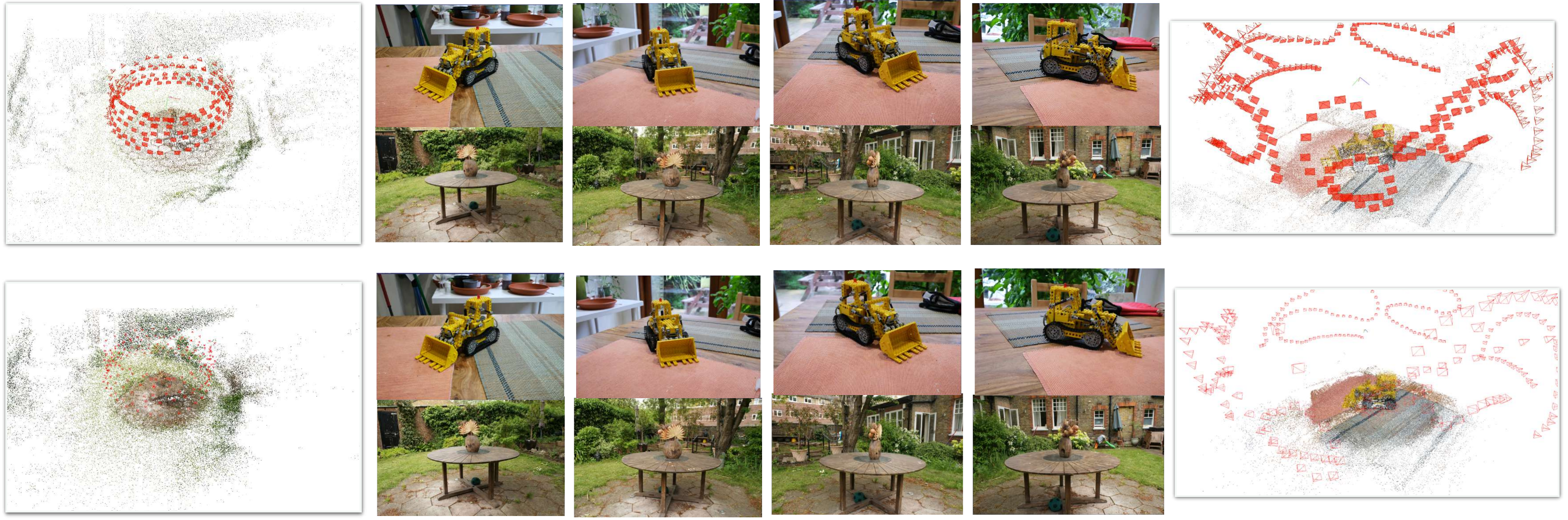}
    \caption{Visualization of Mip-Nerf datasets. \textbf{Top:} \colmap. \textbf{Bottom:} Our \nameshort solver. 3D-gaussian renderings are the same.}
    \label{fig:visualization-mipnerf}
    \vspace{-0mm}
\end{figure*}

\begin{figure*}[!t]
    \includegraphics[width=\linewidth]{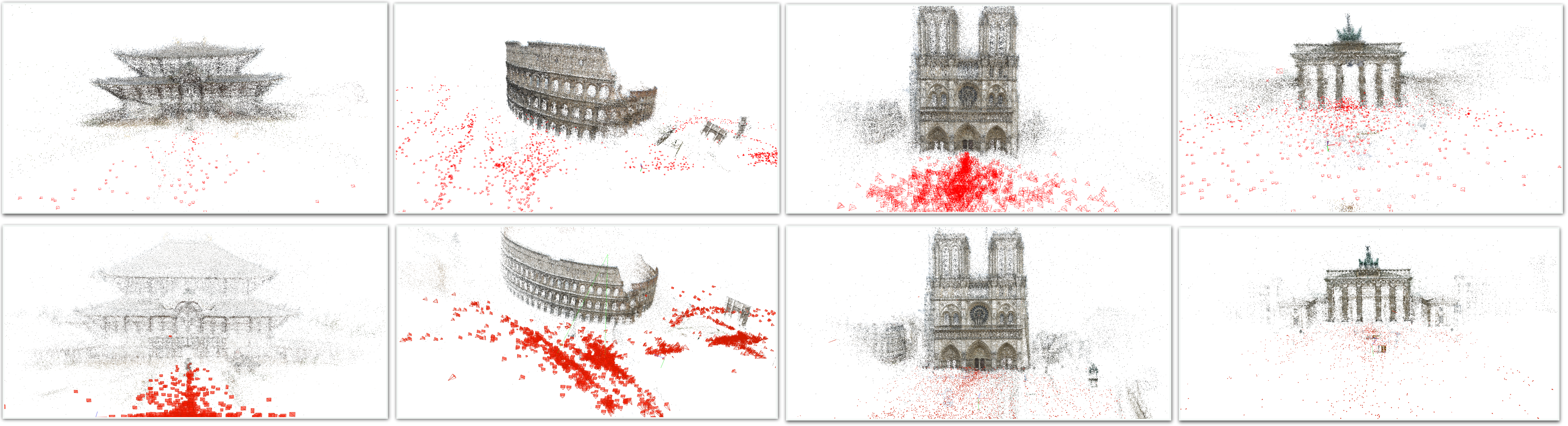}
    \caption{Visualization of IMC2023 datasets. \textbf{Top:} Our \nameshort solver. \textbf{Bottom:} \glomap.}
    \label{fig:visualization-imc}
    \vspace{-0mm}
\end{figure*}

\begin{figure*}[!t]
    \includegraphics[width=\linewidth]{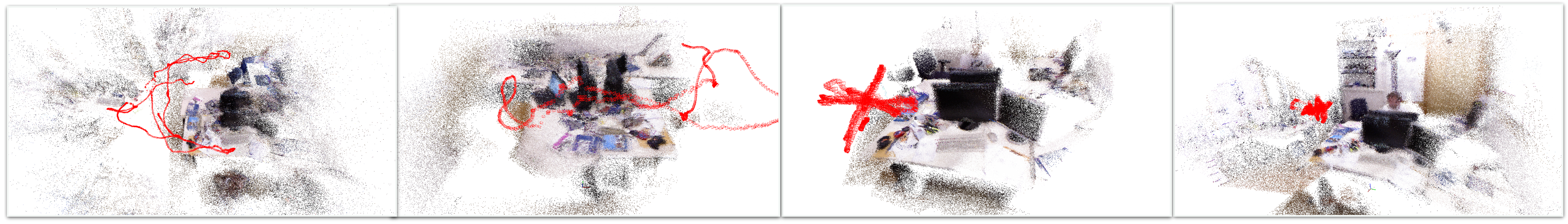}
    \caption{Visualization of TUM datasets using our \nameshort solver.}
    \label{fig:visualization-tum}
    \vspace{-0mm}
\end{figure*}

\begin{figure*}[!t]
    \includegraphics[width=\linewidth]{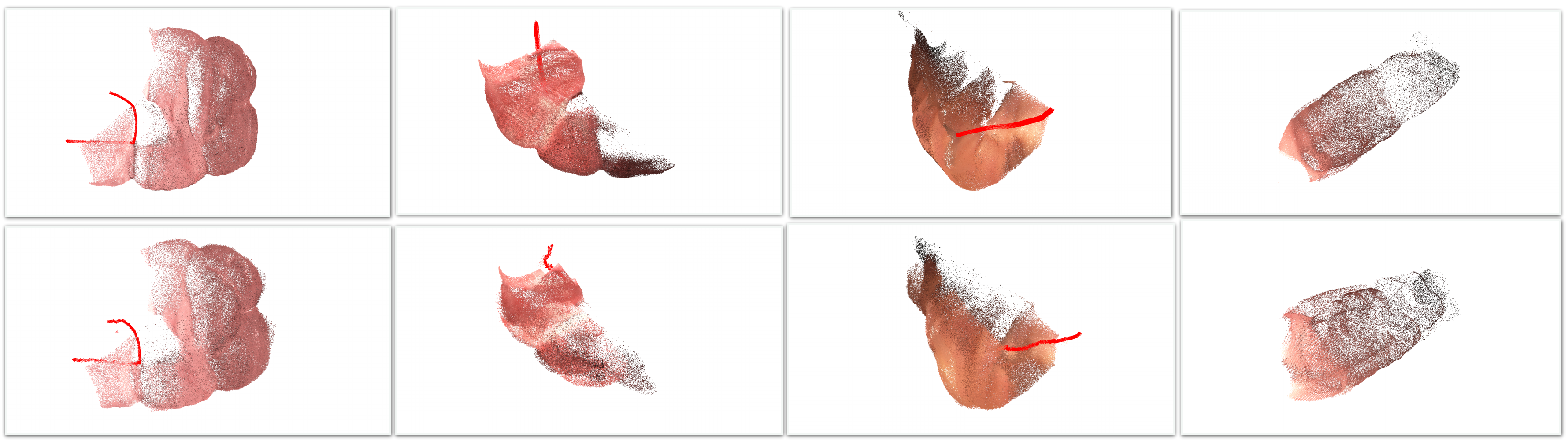}
    \caption{Visualization of C3VD medical datasets. \textbf{Top:}  With ground truth depth. \textbf{Bottom:} With learned depth.}
    \label{fig:visualization-c3vd}
    \vspace{-0mm}
\end{figure*}

We evaluate the \nameshort solver and the \xmsfm pipeline on diverse datasets. \nameshort is benchmarked against the leading bundle adjustment solver \ceres and \xmsfm is compared against the widely adopted SfM pipelines \colmap and \glomap. We further analyze the convergence rate and evaluate robustness to depth estimation noise in \prettyref{app:converge-noise}.

Experiments run on a Lambda Vector workstation with 64-core AMD® Ryzen Threadripper Pro 5975WX CPUs and dual NVIDIA® RTX 6000 GPUs, using CUDA 12.4 and Python 3.11.10. All dependencies are carefully set up to leverage multi-CPU and GPU acceleration.




\subsection{BAL Dataset}\label{sec:exp-bal}

We first evaluate on the Bundle Adjustment in the Large (BAL) dataset \cite{agarwal2010bundle}. This dataset contains reconstruction results from Flickr photographs using Bundler. On BAL we focus on \nameshort  solver performance rather than the full \xmsfm pipeline. 

\textbf{Setup and baselines}.
For input, we use 2D observations for \ceres and 3D observations for \nameshort. The 2D keypoint measurements come directly from the BAL dataset, while the 3D keypoint measurements are lifted by appending $z$-coordinates to the 2D measurements. Though accurate, these 3D observations incorporate slight noise, making them a refined yet imperfect ground truth. To showcase \nameshort's efficiency in solving the SDP relaxation of the SBA problem~\eqref{eq:sba}, we also evaluate \manopt using the same input as \nameshort.


\textbf{Metrics}.
We evaluate performance based on the runtime and the median of Absolute Trajectory Error (ATE) and Relative Pose Error (RPE). More details can be found in \prettyref{app:metrics}. Additionally, we report the suboptimality and the minimum eigenvalue of the $Z(y)$ matrix in \prettyref{eq:problem-sdp-dual} to demonstrate that \nameshort achieves global optimality. Note that the minimum eigenvalue of $Z(y)$ has been used as a metric for global optimality in previous works as well~\cite{carlone2015lagrangian}. Runtime is split into preprocessing time and solver time. The former primarily involves constructing the $Q$ matrix in \prettyref{eq:problem_scale_rotation_only}, which is implemented in Python, while the latter corresponds to the GPU solver. We separate preprocessing time and solver time because there are still ways to further reduce the preprocessing time (while the solver time, to the best of our understanding, has been pushed to the limit). For example, as building the $Q$ matrix requires large dense matrix multiplications, a GPU implementation is expected to accelerate this step by 10 to 100 times. However, we leave this as a future step.

\textbf{Results}.
Table~\ref{tab:bal} summarizes the comparison between \nameshort and other methods. We tested several versions of \ceres. ``\ceres'' indicates running \ceres without any initialization. ``\textsc{Ceres-GT}'' indicates starting \ceres at the groundtruth estimation. ``\textsc{Ceres-GT-0.01}'' means adding noise to the groundtruth with standard deviation 0.01. We make several observations. (a) Without good initialization, \ceres does not work, as shown by the failures of \ceres and \textsc{Ceres-GT-0.1}. (b) \nameshort is up to $100$ times faster than \manopt, showing the superior efficiency of our GPU implementation. Notably, \nameshort's solver time is below a second for $N$ in the order of hundreds, and \nameshort scales to $N > 10,000$ camera frames. 

Table~\ref{tab:bal-xm} presents the suboptimality gap and minimum eigenvalue of \nameshort. As we can see, except the largest instance with $N=10155$ camera frames, \nameshort solved all the other instances to certifiable global optimality. The reason why \nameshort did not solve the largest instance to global optimality is because we restricted its runtime to one hour (\nameshort indeed achieve global optimality if allowed four hours of runtime). 

Fig.~\ref{fig:visualization-BAL} visualizes the 3D reconstructions. Since real images are not available in BAL, all 3D landmarks have the same purple color. The reconstructed cameras are shown in red.

\begin{tcolorbox}
    \begin{center}
        \vspace{-1mm}
        \textbf{Takeaway}
        \vspace{-2mm}
    \end{center} 
    $\bullet$ The SBA problem~\eqref{eq:sba} is easier to solve than the BA problem~\eqref{eq:ba-colmap}---we can design efficient convex relaxations. While \ceres needs good initialization for solving~\eqref{eq:ba-colmap}, \nameshort requires no initializations for solving~\eqref{eq:sba}.
        
    $\bullet$ With GT depth and outlier-free matchings, solving SBA with \nameshort produces the same result as solving BA with \ceres or \colmap.
       
    $\bullet$ \nameshort is fast and scalable.
\end{tcolorbox}




\subsection{Replica Dataset}\label{sec:exp-replica}

\begin{table*}[t]
    \vspace{-5mm}
    \centering
    \caption{\textnormal{Results on the Replica dataset.  \colmap, while highly stable, is extremely slow, taking over 20 hours for datasets with 2000 frames. \glomap improves speed but still requires several hours for the solving stage and occasionally produces outliers. In comparison, our solver achieves similar accuracy in just 10 seconds for the same dataset size.}}
    \label{tab:replica}
  
    \begin{adjustbox}{width=\linewidth}
        \begin{tabular}{l|c|c|c|c|c|c|c|c|c|c|c|c|}
            \toprule
            Datasets & \multicolumn{3}{c|}{\textsc{\xmdouble}} & \multicolumn{3}{c|}{\textsc{Filter + \xmdouble}} & \multicolumn{3}{c|}{\glomap} & \multicolumn{3}{c|}{\colmap} \\
            \midrule
            Metrics & \stackon{Solver\,Time}{Processing\,Time} & \stackon{ATE-T}{ATE-R} & \stackon{RPE-T}{RPE-R} & \stackon{Solver\,Time}{Processing\,Time} & \stackon{ATE-T}{ATE-R} & \stackon{RPE-T}{RPE-R} & \stackon{Solver\,Time}{Processing\,Time} & \stackon{ATE-T}{ATE-R} & \stackon{RPE-T}{RPE-R} & \stackon{Solver\,Time}{Processing\,Time} & \stackon{ATE-T}{ATE-R} & \stackon{RPE-T}{RPE-R} \\
            \midrule\midrule
            Room0-100 
            & \stackon{$\bf 0.12 $}{$ 10.52 $} & \stackon{$ 0.005 $}{$\bf 0.323 ^{\circ}$} & \stackon{$ 0.007 $}{$ 0.083 ^{\circ}$}
            & \stackon{$\bf 0.12 $}{$ 13.71 $} & \stackon{$ 0.005 $}{$ 1.025 ^{\circ}$} & \stackon{$ 0.008 $}{$ 0.077 ^{\circ}$}
            & \stackon{$ 39.57 $}{$ 6.4 $} & \stackon{$\bf 0.003 $}{$ 0.469 ^{\circ}$} & \stackon{$\bf 0.004 $}{$ 0.05 ^{\circ}$} 
            & \stackon{$ 100.61 $}{$\bf 1.71 $}& \stackon{$\bf 0.003 $}{$ 0.582 ^{\circ}$} & \stackon{$\bf 0.004 $}{$\bf 0.038 ^{\circ}$}\\
            \midrule
            Room0-2000
            & \stackon{$\bf 8.5 $}{$ 1142.74 $} & \stackon{$ 0.004 $}{$ 1.38 ^{\circ}$} & \stackon{$ 0.006 $}{$ 0.44 ^{\circ}$}
            & \stackon{$ 16.26 $}{$ 1868.24 $} & \stackon{$ 0.003 $}{$ 0.379 ^{\circ}$} & \stackon{$ 0.005 $}{$ 0.242 ^{\circ}$}
            & \stackon{$ 5371.98 $}{$ 957.31 $} & \stackon{$\bf 0.001 $}{$ 0.136 ^{\circ}$} & \stackon{ $\bf0.001 $}{$ 0.043 ^{\circ}$} 
            & \stackon{$ 74827.56 $}{$\bf 120.02 $} & \stackon{$\bf 0.001 $}{$\bf 0.078 ^{\circ}$} & \stackon{$\bf 0.001 $}{$\bf 0.068 ^{\circ}$}\\
            \midrule\midrule
            Room1-100 
            & \stackon{$\bf 0.14 $}{$ 13.08 $} & \stackon{$\bf 0.009 $}{$ 1.847 ^{\circ}$} & \stackon{$\bf 0.013 $}{$ 0.16 ^{\circ}$}
            & \stackon{$ 0.14 $}{$ 16.19 $} & \stackon{$ 0.01 $}{$ 2.056 ^{\circ}$} & \stackon{$ 0.014 $}{$ 0.176 ^{\circ}$}
            & \stackon{$ 53.95 $}{$ 8.75 $} & \stackon{$ 0.014 $}{$\bf 1.409 ^{\circ}$} & \stackon{$ 0.019 $}{$\bf 0.122 ^{\circ}$} 
            & \stackon{$ 144.39 $}{$\bf 1.99 $} & \stackon{$ 0.06 $}{$ 10.765 ^{\circ}$} & \stackon{$ 0.072 $}{$ 0.726 ^{\circ}$}\\
            \midrule
            Room1-2000
            & \stackon{$ 7.37 $}{$ 1156.6 $} & \stackon{$ 0.006$}{$\bf 0.005 ^{\circ}$} & \stackon{$ 0.009 $}{$\bf 0.018 ^{\circ}$}
            & \stackon{$\bf 6.18 $}{$ 1651.12 $} & \stackon{$ 0.005 $}{$ 0.428 ^{\circ}$} & \stackon{$ 0.008 $}{$ 0.418 ^{\circ}$}
            & \stackon{$ 3806.94 $}{$ 911.0 $} & \stackon{$\bf 0.001 $}{$ 0.166 ^{\circ}$} & \stackon{$\bf 0.002 $}{$ 0.066 ^{\circ}$} 
            & \stackon{$ 64912.5 $}{$\bf 379.45 $} & \stackon{$ 0.002 $}{$0.156 ^{\circ}$} & \stackon{$\bf 0.002 $}{$ 0.06 ^{\circ}$}\\
            \midrule\midrule
            Office0-100 
            & \stackon{$\bf 0.15 $}{$ 12.75 $} & \stackon{$ 0.04 $}{$ 3.554 ^{\circ}$} & \stackon{$ 0.051 $}{$ 0.128 ^{\circ}$}
            & \stackon{$ 0.12 $}{$ 16.53 $} & \stackon{$ 0.024 $}{$ 3.385 ^{\circ}$} & \stackon{$ 0.033 $}{$ 0.127 ^{\circ}$}
            & \stackon{$ 27.99 $ }{$ 8.67 $}& \stackon{$\bf 0.015 $}{$\bf 0.559 ^{\circ}$} & \stackon{$\bf 0.019 $}{$\bf 0.04 ^{\circ}$}
            & \stackon{$ 49.02 $}{$\bf 3.58 $ }& \stackon{$ 0.019 $}{$ 1.53 ^{\circ}$} & \stackon{$ 0.028 $}{$ 0.067 ^{\circ}$}\\
            \midrule
            Office0-2000
            & \stackon{$\bf 4.6 $}{$ 730.15 $} & \stackon{$ 0.016 $}{$\bf 0.012 ^{\circ}$} & \stackon{$ 0.024 $}{$\bf  0.015 ^{\circ}$}
            & \stackon{$ 5.61 $}{$ 1181.42 $} & \stackon{$ 0.016 $}{$ 0.053 ^{\circ}$} & \stackon{$ 0.019 $}{$ 0.051 ^{\circ}$}
            & \stackon{$ 3241.77 $}{$ 613.85 $} & \stackon{$ 0.024 $}{$ 2.822 ^{\circ}$} & \stackon{$ 0.034 $}{$ 0.058 ^{\circ}$} 
            & \stackon{$ 35682.24 $}{$\bf 168.36 $} & \stackon{$\bf 0.003 $}{$ 0.184 ^{\circ}$} & \stackon{$\bf 0.004 $}{$0.06 ^{\circ}$}\\
            \midrule\midrule
            Office1-100 
            & \stackon{$\bf 0.13 $}{$ 8.13 $} & \stackon{$ 0.022 $}{$ 2.708 ^{\circ}$} & \stackon{$ 0.03 $}{$ 0.2 ^{\circ}$}
            & \stackon{$ 0.15 $}{$ 10.81 $} & \stackon{$ 0.015 $}{$ 2.032 ^{\circ}$} & \stackon{$ 0.023 $}{$ 0.138 ^{\circ}$}
            & \stackon{$ 18.7 $}{$ 4.37 $}& \stackon{$\bf 0.006 $}{$\bf 1.423 ^{\circ}$} & \stackon{$\bf 0.008 $}{$\bf 0.021 ^{\circ}$} 
            & \stackon{$ 61.5 $}{$\bf 1.36$ }& \stackon{$ 0.014 $}{$ 3.366 ^{\circ}$} & \stackon{$ 0.021 $}{$ 0.039 ^{\circ}$}\\
            \midrule
            Office1-2000
            & \stackon{$\bf 23.1 $}{$ 759.43 $} & \stackon{$ 0.077 $}{$ 10.516 ^{\circ}$} & \stackon{$ 0.113 $}{$ 3.802 ^{\circ}$}
            & \stackon{$ 230.25 $}{$ 1270.29 $} & \stackon{$ 0.038 $}{$ 5.08 ^{\circ}$} & \stackon{$ 0.052 $}{$ 1.643 ^{\circ}$}
            & \stackon{$ 2838.54 $}{$ 666.34 $} & \stackon{$\bf 0.001 $}{$\bf 0.037 ^{\circ}$} & \stackon{$\bf 0.001 $}{$\bf 0.045 ^{\circ}$}
            &\stackon{$ 25512.84 $}{$\bf 140.53 $} & \stackon{$ 0.078 $}{$ 13.882 ^{\circ}$} & \stackon{$ 0.112 $}{$ 0.157 ^{\circ}$}\\
            \bottomrule
        \end{tabular}
    \end{adjustbox}
    \vspace{-5mm}
\end{table*}

\begin{table}[t]
    \centering
    \caption{\textnormal{Results on the Replica dataset (min-eig and suboptimality). All the min-eig and suboptimality-gap are small, which means our solver find the global minima.
    }}
    \label{tab:rep-xm}
    \begin{adjustbox}{width=\linewidth}
        \begin{tabular}{l|c|c|c|c|}
            \toprule
            Method & \multicolumn{2}{c|}{\xmdouble} & \multicolumn{2}{c|}{\textsc{Filter + \xmdouble}} \\
            \midrule
            Metric & Min-eig & Suboptimality-gap &  Min-eig & Suboptimality-gap\\
            \midrule\midrule
            Room0-100 & $1.1\times10^{-5}$ & $1.9\times10^{-6}$ & $1.1\times10^{-5}$& $7.4\times10^{-8}$\\
            \midrule
            Room0-2000 & $3.5\times10^{-8}$ &$2.2\times10^{-5}$ & $-6.3\times10^{-8}$ &$5.9\times10^{-6}$\\
            \midrule
            Room1-100 & $2.6\times10^{-6}$ & $7.9\times10^{-4}$ & $1.3\times10^{-5}$ &$1.8\times10^{-6}$\\
            \midrule
            Room1-2000 & $5.8\times10^{-8}$ & $2.2\times10^{-4}$ & $-4.6\times10^{-7}$ &$4.5\times10^{-6}$\\
            \midrule
            Office0-100 & $-8.0\times10^{-7}$ & $1.4\times10^{-7}$ & $-1.0\times10^{-6}$ &$1.1\times10^{-7}$\\
            \midrule
            Office0-2000 & $4.7\times10^{-7}$ & $5.2\times10^{-4}$ & $-1.5\times10^{-6}$ &$6.7\times10^{-6}$\\
            \midrule
            Office1-100 & $-8.0\times10^{-7}$ & $1.0\times10^{-7}$ & $8.5\times10^{-7}$ &$2.4\times10^{-8}$\\
            \midrule
            Office1-2000 & $-4.2\times10^{-8}$ & $2.3\times10^{-5}$ & $6.8\times10^{-8}$ &$3.8\times10^{-4}$\\
            \midrule
            \bottomrule
        \end{tabular}        
\end{adjustbox}
\end{table}

We then test on the Replica dataset~\cite{zhu2022nice, sucar2021imap, replica19arxiv}, which contains synthetic images of different virtual scenes. 

\textbf{Setup, baselines, metrics}.
We use the groundtruth depth map but employ the full \xmsfm pipeline. 
We compare \xmsfm, both with and without two-view filtering, against \colmap and \glomap. The evaluation metrics remain the same. For runtime analysis, we categorize all components preceding our \nameshort solver---including Matching, Indexing, Depth Estimation, Filtering, and Matrix Construction---as preprocessing time. The indexing, filtering and matrix construction components can be further accelerated in CUDA. Similarly, for \glomap and \colmap, all steps prior to global positioning and bundle adjustment are counted as preprocessing time.

\textbf{Results}.
Results are presented in \prettyref{tab:replica} and \prettyref{tab:rep-xm}. Each Replica dataset contains 2000 frames, but for a diverse comparison across different dataset sizes, we sample the first 100 frames from each dataset as a separate experiment. ``Room0-100'' refers to the first 100 frames, while ``Room0-2000'' represents the full dataset.  

\emph{\nameshort consistently outperforms baselines by 100 to 1000 times in solver speed}, solving almost all 2000-frame datasets within 10 seconds. At the same time, \nameshort maintains high accuracy, achieving a median translation error of just 1\%. In practice, a 0.1\% and 1\% translation error yield nearly identical reconstruction quality, as illustrated in \prettyref{fig:visualization-replica}.

Additionally, we provide a runtime breakdown for \nameshort in \prettyref{app:breakdown-time}. The solver time is negligible, appearing as only a thin bar in the chart. Matching, indexing, and filtering are the most time-consuming components, with the latter two planned for CUDA implementation as future work.

\begin{tcolorbox}
    \begin{center}
        \vspace{-1mm}
        \textbf{Takeaway}
        \vspace{-1mm}
    \end{center} 
    $\bullet$ With ground truth depth and \colmap matchings, minor filter refinement achieves results comparable to \colmap and \glomap.  

    $\bullet$ \nameshort remains highly efficient and scalable.  
\end{tcolorbox}

\subsection{Mip-Nerf and Zip-Nerf Dataset}\label{sec:exp-mipnerf}

\begin{table}[t]
    \centering
    \caption{\textnormal{Results on the Mip-Nerf and Zip-Nerf datasets. \colmap is very slow, while both \nameshort and \glomap achieve comparable accuracy. However, \nameshort is significantly faster.}}  
    \label{tab:mipnerf}
    \begin{adjustbox}{width=\linewidth}
        \begin{tabular}{l|l|l|l|l|l|l|}
            \toprule
            \multirow{3}{*}{Method} & \multirow{3}{*}{Metrics} & \multicolumn{5}{c|}{Datasets} \\
            \cmidrule{3-7}
            & & \stackon{-187865}{Kitchen-279} & \stackon{-106858}{Garden-185} & \stackon{-41866}{Bicycle-194} & \stackon{-111783}{Room-311} & \stackon{-416665}{Alameda-1724} \\
            \midrule
            \multirow{6}{*}{\stackon{\textsc{ + Ceres}}{\textsc{Filter + \xmdouble}}}
            & \stackon{(\nameshort + \ceres)}{Solver Time} & \stackon{$\bf 19.19$}{$\bf 0.8~+ $} & \stackon{$\bf 3.98 $}{$\bf 0.65~+ $} & \stackon{$\bf 22.47$}{$\bf 0.58~+ $} & \stackon{$\bf 24.35$}{$\bf 1.01~+ $} & \stackon{$\bf 218.13$}{$\bf 62.0~+ $} \\
            \cmidrule{2-7}
            & Processing Time & $229.82$ & $174.57$ & $145.75$ & $223.44$ & $3348.46$ \\
            \cmidrule{2-7}
            & ATE-T & $0.008$ & $0.002$ & $0.019$ & $0.002$ & $0.009$ \\
            & ATE-R & $0.072^\circ$ & $0.021^\circ$ & $0.229^\circ$ & $0.06^\circ$ & $0.317^\circ$ \\
            \cmidrule{2-7}
            & RPE-T & $0.005$ & $0.003$ & $0.028$ & $0.002$ & $0.014$ \\
            & RPE-R & $0.054^\circ$ & $0.025^\circ$ & $0.219^\circ$ & $0.084^\circ$ & $0.493^\circ$ \\
            \midrule
            \multirow{6}{*}{\textsc{glomap}}
            & Solver Time & $363.22$ & $193.86$ & $77.76$ & $269.39$ & $1169.83$ \\
            \cmidrule{2-7}
            & Processing Time & $124.23$ & $90.7$ & $79.84$ & $96.26$ & $2631.69$ \\
            \cmidrule{2-7}
            & ATE-T & $0.016$ & $0.013$ & $0.039$ & $0.003$ & $0.001$ \\
            & ATE-R & $0.107^\circ$ & $0.061^\circ$ & $0.51^\circ$ & $0.083^\circ$ & $0.074^\circ$ \\
            \cmidrule{2-7}
            & RPE-T & $0.023$ & $0.02$ & $0.056$ & $0.003$ & $0.002$ \\
            & RPE-R & $0.135^\circ$ & $0.067^\circ$ & $0.114^\circ$ & $0.029^\circ$ & $0.053^\circ$ \\
            \midrule
            \multirow{6}{*}{\textsc{colmap}}
            & Solver Time & $1422.96$ & $486.84$ & $273.84$ & $958.5$ & $79278.36$\\
            \cmidrule{2-7}
            & Processing Time & $\bf 97.26$ & $\bf 78.1$ & $\bf 75.16$ & $\bf 62.47$ & $\bf 2494.29$ \\
            \cmidrule{2-7}
            & ATE-T & $0.0$ & $0.0$ & $0.0$ & $0.0$ & $0.0$ \\
            & ATE-R & $0.0^\circ$ & $0.0^\circ$ & $0.0^\circ$ & $0.0^\circ$ & $0.0^\circ$ \\
            \cmidrule{2-7}
            & RPE-T & $0.0$ & $0.0$ & $0.0$ & $0.0$ & $0.0$ \\
            & RPE-R & $0.0^\circ$ & $0.0^\circ$ & $0.0^\circ$ & $0.0^\circ$ & $0.0^\circ$ \\
            \bottomrule
        \end{tabular}
    \end{adjustbox}
\end{table}

\begin{table}[t]
    \centering
    \caption{\textnormal{Results on the Mip-Nerf and Zip-Nerf datasets (min-eig and suboptimality). All datasets are solved to global minimum.
    }}
    \label{tab:mipnerf-xm}
    \begin{adjustbox}{width=\linewidth}
        \begin{tabular}{l|c|c|c|c|c|}
            \toprule
            Metric & Kitchen & Garden & Bicycle & Room & Alameda \\
            \midrule
            Min-eig & $1.1\times10^{-10}$ & $-3.6\times10^{-9}$ & $4.8\times10^{-8}$ & $-4.6\times10^{-7}$ & $-3.8\times10^{-7}$ \\
            \midrule
            Suboptimality-gap & $8.9\times10^{-8}$ & $8.8\times10^{-9}$ & $2.5\times10^{-5}$ & $4.5\times10^{-6}$ & $8.1\times10^{-2}$ \\
            \bottomrule
        \end{tabular}        
    \end{adjustbox}
\end{table}

\begin{figure}
    \centering
    \includegraphics[width=0.9\linewidth]{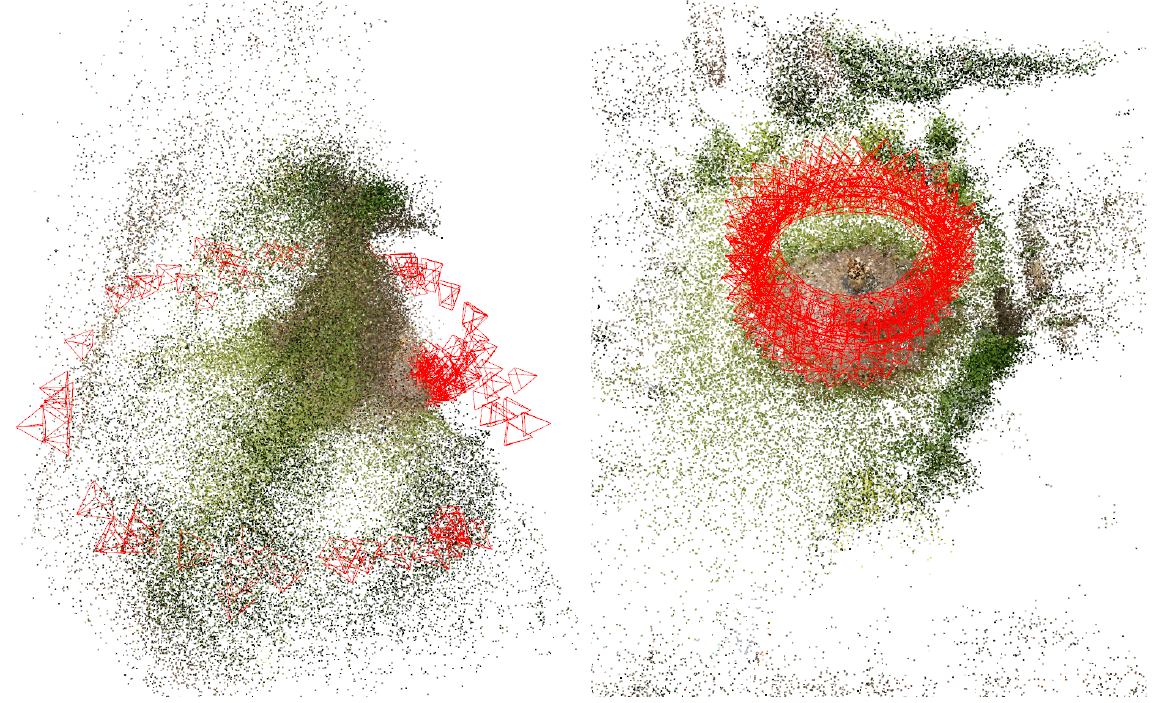}
    \caption{Illustration of local minimum on the Mip-Nerf dataset. \textbf{Left:} Solution of \prettyref{eq:problem-bm} with $r = 3$, where the solver gets stuck in a poor local minimum. \textbf{Right:} Solution after increasing the rank to 4, which successfully escapes the local minimum.
    }
    \label{fig:mipnerf-rank4}
    \vspace{-3mm}
\end{figure}

Mip-Nerf and Zip-Nerf \cite{barron2022mipnerf360,barron2023zipnerf} are real-world image datasets around a single object. We evaluate learned depth and downstream novel view synthesis tasks on these datasets.

\textbf{Setup, baselines, metrics}.
As these datasets are generated by \colmap, we use its camera poses as ground truth to benchmark \nameshort against \colmap and \glomap. To address inaccuracies in learned depth, we apply \ceres refinement, incorporating its runtime into the solver time, denoted as ``\nameshort + \ceres.'' All other evaluation metrics remain unchanged.

\textbf{Results}.
Results are presented in \prettyref{tab:mipnerf}, with suboptimality detailed in \prettyref{tab:mipnerf-xm}. \emph{While adding \ceres increases solver time, the overall runtime remains 10 to 100 times faster than the baselines.} On the garden and room datasets, we achieve a 0.2\% error, demonstrating the same accuracy as the baselines, while on others, the error may be slightly higher. However, as shown in \prettyref{fig:visualization-mipnerf}, this has no noticeable impact on downstream 3D Gaussian Splatting tasks~\cite{kerbl3Dgaussians}.

A runtime breakdown for \nameshort is provided in \prettyref{app:breakdown-time}. Matching and indexing remain slow, while depth estimation now takes even longer. This is due to (a) foundation models requiring much time for estimation and (b) depth estimation runtime scales linearly with the number of frames, whereas Mip-Nerf datasets are relatively small.  

\textbf{The need for convex relaxation}. 
In \prettyref{fig:mipnerf-rank4}, we demonstrate that directly solving \prettyref{eq:problem-bm} with rank 3 (equivalently \prettyref{eq:problem-qcqp}) can lead to a local minimum. Specifically, the solver terminates with a minimal eigenvalue of $Z(y)$ at $-6.2 \times 10^{2}$, resulting in a messy reconstruction. However, by increasing the rank to 4, the solver escapes the local minimum and achieves the global minimum, indicating that while the relaxation remains tight, the Burer-Monteiro method requires a higher rank to find the global minimum. 

\begin{tcolorbox}
    \begin{center}
        \vspace{-1mm}
        \textbf{Takeaway}
        \vspace{-1mm}
    \end{center} 
    $\bullet$ With \ceres refinement to mitigate depth errors, \nameshort achieves nearly the same accuracy as \colmap and \glomap. Moreover, this has no impact on downstream novel view synthesis tasks.  

    $\bullet$ \nameshort maintains a significant speed advantage, even with learned depth and real-world data.  

    $\bullet$ BM factorization and the Riemannian staircase effectively escape local minimum.  
\end{tcolorbox}

\subsection{IMC, TUM and C3VD Datasets}\label{sec:exp-imc-tum}

\begin{table}[t]
    \centering
    \caption{\textnormal{Results on the IMC datasets. \xmsfm achieves comparable accuracy while being much more scalable.
    }}
    \label{tab:imc}
    \begin{adjustbox}{width=\linewidth}
        \begin{tabular}{l|c|c|c|c|c|c|}
            \toprule
            Datasets & \multicolumn{3}{c|}{\textsc{Filter + \xmdouble + Ceres}}& \multicolumn{3}{c|}{\glomap}   \\
            \midrule
            Metrics & \stackon{Solver\,Time}{Processing\,Time} & \stackon{ATE-T}{ATE-R} & \stackon{RPE-T}{RPE-R} & \stackon{Solver\,Time}{Processing\,Time} & \stackon{ATE-T}{ATE-R} & \stackon{RPE-T}{RPE-R}  \\
            \midrule\midrule
            \stackon{-203244}{Rome-2063}
            & \stackon{$\bf 19.17 + 300.16 $}{$ 4877.89 $} & \stackon{$ 0.021 $}{$ 1.252 ^{\circ}$} & \stackon{$ 0.035 $}{$ 0.72 ^{\circ}$}
            & \stackon{$ 9813.04 $}{$\bf3459.04$} & \stackon{$\bf 0.003 $}{$\bf 0.09 ^{\circ}$} & \stackon{$\bf 0.005 $}{$\bf 0.069 ^{\circ}$}
            \\
            \midrule
            \stackon{-62561}{Gate-1363}
            & \stackon{$\bf 2.92 + 99.66 $}{$ 1349.74 $} & \stackon{$ 0.038 $}{$ 1.329 ^{\circ}$} & \stackon{$ 0.089 $}{$ 2.022 ^{\circ}$}
            & \stackon{$ 2263.79 $}{$\bf712.56$} & \stackon{$\bf 0.018 $}{$\bf 0.41 ^{\circ}$} & \stackon{$\bf 0.047 $}{$\bf 0.19 ^{\circ}$}
            \\
            \midrule
            \stackon{-78026}{Temple-904}
            & \stackon{$\bf 1.34 + 93.5 $}{$ 1026.65 $} & \stackon{$\bf 0.008 $}{$ 0.545 ^{\circ}$} & \stackon{$\bf 0.013 $}{$ 0.254 ^{\circ}$}
            & \stackon{$ 1560.86 $}{$\bf600.4$} & \stackon{$ 0.012 $}{$\bf 0.276 ^{\circ}$} & \stackon{$ 0.03 $}{$\bf 0.216 ^{\circ}$}
            \\
            \midrule
            \stackon{-314422}{Paris-3765}
            & \stackon{$\bf 22.17 + 304.49 $}{$ 16645.16 $} & \stackon{$\bf 0.008 $}{$ 0.473 ^{\circ}$} & \stackon{$\bf 0.017 $}{$ 0.561 ^{\circ}$}
            & \stackon{$ 74560.1 $}{$\bf10598.8$} & \stackon{$ 0.009 $}{$\bf 0.117 ^{\circ}$} & \stackon{$\bf 0.017 $}{$\bf 0.087 ^{\circ}$}
            \\
            \bottomrule
        \end{tabular}
    \end{adjustbox}
    \vspace{-4mm}
\end{table}

\begin{table}[t]
    \centering
    \caption{\textnormal{Results on the TUM datasets. The error is slightly higher due to low-resolution images.}}
    \label{tab:tum}
    \begin{adjustbox}{width=\linewidth}
        \begin{tabular}{l|c|c|c|}
            \toprule
            Metrics & Solver Time / Processing Time & ATE-T / ATE-R & RPE-T / RPE-R \\
            \midrule\midrule
             fr1/xyz-798-26078 & $0.63 + 9.69 \, / \, 498.49$ & $0.035 \, / \, 3.148^{\circ}$ & $0.05 \, / \, 1.134^{\circ}$ \\
            \midrule
             fr1/rpy-723-26071 & $0.88 + 18.8 \, / \, 329.84$ & $0.023 \, / \, 6.784^{\circ}$ & $0.038 \, / \, 2.495^{\circ}$ \\
            \midrule
             fr1/desk-613-38765 & $0.63 + 7.95 \, / \, 201.17$ & $0.024 \, / \, 2.831^{\circ}$ & $0.041 \, / \, 2.277^{\circ}$ \\
            \midrule
             fr1/room-1362-86634 & $10.17 + 182.29 \, / \, 464.4$ & $0.049 \, / \, 3.718^{\circ}$ & $0.086 \, / \, 2.196^{\circ}$ \\
            \bottomrule
        \end{tabular}
    \end{adjustbox}
\end{table}

\begin{table}[t]
    \centering
    \caption{\textnormal{Results on the C3VD datasets. Ground truth depth is significantly more accurate than learned depth, while they both fail on the last dataset because of dark environment.}}
    \label{tab:c3vd}
    \begin{adjustbox}{width=\linewidth}
        \begin{tabular}{l|c|c|c|c|c|c|}
            \toprule
            Datasets & \multicolumn{3}{c|}{\textsc{Filter + \xmdouble + GT-depth}} & \multicolumn{3}{c|}{\textsc{Filter + \xmdouble}}\\
            \midrule
            Metrics & \stackon{Solver\,Time}{Processing\,Time} & \stackon{ATE-T}{ATE-R} & \stackon{RPE-T}{RPE-R}& \stackon{Solver\,Time}{Processing\,Time} & \stackon{ATE-T}{ATE-R} & \stackon{RPE-T}{RPE-R}\\
            \midrule\midrule
            medical5
            & \stackon{$ 0.31 $}{$ 55.45 $} & \stackon{$ 0.009 $}{$ 3.284 ^{\circ}$} & \stackon{$ 0.015 $}{$ 0.647 ^{\circ}$}
            & \stackon{$ 0.34 $}{$ 30.19 $} & \stackon{$ 0.109 $}{$ 11.64 ^{\circ}$} & \stackon{$ 0.169 $}{$ 3.191 ^{\circ}$}
            \\
            \midrule
            medical6
            & \stackon{$ 0.46 $}{$ 171.35 $} & \stackon{$ 0.014 $}{$ 4.102 ^{\circ}$} & \stackon{$ 0.019 $}{$ 0.851 ^{\circ}$}
            & \stackon{$ 0.47 $}{$ 166.19 $} & \stackon{$ 0.262 $}{$ 116.79 ^{\circ}$} & \stackon{$ 0.352 $}{$ 4.729 ^{\circ}$}
            \\
            \midrule
            medical7
            & \stackon{$ 0.81 $}{$ 261.58 $} & \stackon{$ 0.01 $}{$ 3.69 ^{\circ}$} & \stackon{$ 0.013 $}{$ 1.231 ^{\circ}$}
            & \stackon{$ 0.89 $}{$ 250.48 $} & \stackon{$ 0.047 $}{$ 131.686 ^{\circ}$} & \stackon{$ 0.062 $}{$ 3.564 ^{\circ}$}
            \\
            \midrule
            medical8
            & \stackon{$ 0.23 $}{$ 56.84 $} & \stackon{$ 0.72 $}{$ 66.107 ^{\circ}$} & \stackon{$ 1.196 $}{$ 0.062 ^{\circ}$}
            & \stackon{$ 0.22 $}{$ 38.25 $} & \stackon{$ 0.452 $}{$ 159.909 ^{\circ}$} & \stackon{$ 0.704 $}{$ 0.091 ^{\circ}$}
            \\
            \bottomrule
        \end{tabular}
    \end{adjustbox}
\end{table}

Followed by Mip-Nerf, we step to the IMC PhotoTourism dataset \cite{imc2023}, TUM dataset \cite{sturm12tum}, and C3VD medical dataset \cite{bobrow2023}. These datasets are quite challenging because of varying environments and low-quality images.

\textbf{Setup, baselines, metrics}.
We only compare our \nameshort solver against \glomap on IMC datasets. In these three datasets we add both outlier filtering and \xmdouble. In C3VD we use the ground truth depth map and the learned depth from a medical-specific depth prediction model~\cite{paruchuri2024leveraging}. 

\textbf{Results}.
The results are presented in \prettyref{tab:imc}, \prettyref{tab:tum}, and \prettyref{tab:c3vd}. The IMC datasets consist of large image collections capturing some famous landmarks. As a result, \glomap requires an extremely long runtime, exceeding 20 hours for the largest dataset. Our accuracy is comparable to \glomap, with visualizations provided in \prettyref{fig:visualization-imc}. The TUM and C3VD datasets produce high-quality reconstructions, though accuracy is affected by the complexity of the environment and insufficient lighting. Visualizations of these reconstructions are provided in \prettyref{fig:visualization-tum} and \prettyref{fig:visualization-c3vd}.  

\begin{tcolorbox}
    \begin{center}
        \vspace{-1mm}
        \textbf{Takeaway}
        \vspace{-1mm}
    \end{center} 
    \nameshort remains efficient and scalable across diverse datasets, achieving results comparable to \glomap while being significantly faster.
\end{tcolorbox}


\section{Conclusion}
\label{sec:conclusion}
We proposed \nameshort, a scalable and initialization-free solver for global bundle adjustment, leveraging learned depth and convex optimization. By relaxing scaled bundle adjustment as a convex SDP and solving it efficiently with Burer-Monteiro factorization and a CUDA-based trust-region Riemannian optimizer, \nameshort achieved certifiable global optimality at extreme scales. Integrated into the \xmsfm pipeline, it maintains the accuracy of existing SfM methods while being significantly faster and more scalable.  

\textbf{Limitation and future work}. First, while our \nameshort solver outperforms baselines in speed, it can be sensitive to noise and outliers. Future work includes refining the filtering process and developing better methods to handle outliers. Second, our GPU solver is built on the \texttt{cuBLAS} dense matrix-vector multiplication library, whereas SLAM camera sequences often have sparse patterns. Extending the \nameshort solver to support sparse matrix-vector multiplications would enhance its applicability to SLAM. Third, our \xmsfm pipeline still has components that potentially can be accelerated 100 to 1000 times through CUDA implementation, e.g., filtering and matrix construction. Lastly, we acknowledge that \nameshort relies on a well-trained depth prediction model. While generic models are available, training effective depth predictors can be particularly challenging in data-scarce domains such as medical or space applications.


\section*{Acknowledgements}

This work was partially funded by Office of Naval Research grant N00014-25-1-2322. We thank Shucheng Kang for the help on CUDA programming; Xihang Yu and Luca Carlone for discussions about depth prediction models; and members of the Harvard Computational Robotics Group for various explorations throughout the project.
\bibliographystyle{plainnat}
\bibliography{refs}

\clearpage
\appendices

\section{Proofs}
\label{sec:app_proofs}

\begin{table*}[ht]
    \centering
    \caption{\textnormal{Results of different Depth Estimation Model on the Mip-Nerf datasets}. }
    \label{tab:depthcomparison}
    \begin{adjustbox}{width=\linewidth}
        \begin{tabular}{l|c|c|c|c|c|c|c|c|c|c|c|c|}
            \toprule
            Method & \multicolumn{3}{c|}{Unidepth} & \multicolumn{3}{c|}{Depth-pro}& \multicolumn{3}{c|}{DepthAnything-v2} & \multicolumn{3}{c|}{Metric3D-v2} \\
            \midrule
            Metrics & \stackon{Solver\,Time}{Processing\,Time} & \stackon{ATE-T}{ATE-R} & \stackon{RPE-T}{RPE-R} & \stackon{Solver\,Time}{Processing\,Time} & \stackon{ATE-T}{ATE-R} & \stackon{RPE-T}{RPE-R} & \stackon{Solver\,Time}{Processing\,Time} & \stackon{ATE-T}{ATE-R} & \stackon{RPE-T}{RPE-R} & \stackon{Solver\,Time}{Processing\,Time} & \stackon{ATE-T}{ATE-R} & \stackon{RPE-T}{RPE-R} \\
            \midrule\midrule
            \stackon{-187865}{Kitchen-279}
            & \stackon{ $\bf 0.8 + 19.19 $}{$ 229.82 $} & \stackon{$ 0.018 $}{$ 0.154 ^{\circ}$} & \stackon{$ 0.027 $}{$ 0.201 ^{\circ}$}
            & \stackon{$ 1.13 + 21.54 $}{$ 436.19 $} & \stackon{$ \bf0.006 $}{$ \bf0.051 ^{\circ}$} & \stackon{$\bf 0.008 $}{$ \bf0.062 ^{\circ}$}
            & \stackon{$ 0.9 + 29.52 $}{$ 224.25 $} & \stackon{$ 0.053 $}{$ 0.384 ^{\circ}$} & \stackon{$ 0.072 $}{$ 0.478 ^{\circ}$}
            & \stackon{$ 1.66 + 107.85 $}{$\bf 216.0 $} & \stackon{$ 0.064 $}{$ 0.621 ^{\circ}$} & \stackon{$ 0.099 $}{$ 0.785 ^{\circ}$}

            \\
            \midrule
            \stackon{-106858}{Garden-185}
            & \stackon{$ 0.65 + 3.98 $}{$ \bf174.57 $} & \stackon{$\bf 0.002 $}{$ 0.021 ^{\circ}$} & \stackon{$ \bf0.003 $}{$ \bf0.025 ^{\circ}$}
            & \stackon{$ 0.54 + 16.96 $}{$ 322.73 $} & \stackon{$ 0.007 $}{$ 0.061 ^{\circ}$} & \stackon{$ 0.01 $}{$ 0.035 ^{\circ}$}
            & \stackon{$ \bf 0.61 + 3.82 $}{$ 179.88 $} & \stackon{$ \bf0.002 $}{$ \bf0.02 ^{\circ}$} & \stackon{$ \bf0.003 $}{$ \bf0.025 ^{\circ}$}
            & \stackon{$ 1.01 + 21.74 $}{$ 197.72 $} & \stackon{$ 0.007 $}{$ 0.059 ^{\circ}$} & \stackon{$ 0.009 $}{$ 0.035 ^{\circ}$}

            \\
            \midrule
            \stackon{-41866}{Bicycle-194}
            & \stackon{$ 0.58 + 22.47 $}{$ \bf145.75 $} & \stackon{$ 0.019 $}{$ 0.229 ^{\circ}$} & \stackon{$ 0.028 $}{$ 0.219 ^{\circ}$}
            & \stackon{$ 0.57 + 11.61 $}{$ 299.62 $} & \stackon{$ \bf0.003 $}{$ 0.05 ^{\circ}$} & \stackon{$ 0.006 $}{$ 0.039 ^{\circ}$}
            & \stackon{$\bf 0.53 + 3.44 $}{$ 152.37 $} & \stackon{$ \bf0.003 $}{$ \bf0.034 ^{\circ}$} & \stackon{$ \bf0.004 $}{$ \bf0.035 ^{\circ}$}
            & \stackon{$ 1.11 + 7.0 $}{$ 168.63 $} & \stackon{$ 1.123 $}{$ 19.257 ^{\circ}$} & \stackon{$ 1.466 $}{$ 7.533 ^{\circ}$}

            \\
            \midrule
            \stackon{-111783}{Room-311}
            & \stackon{$ \bf1.01 + 24.35 $}{$ 223.44 $} & \stackon{$ 0.002 $}{$ 0.06 ^{\circ}$} & \stackon{$ \bf0.002 $}{$ 0.084 ^{\circ}$}
            & \stackon{$ 1.18 + 33.49 $}{$ 429.43 $} & \stackon{$ \bf0.001 $}{$ \bf0.053 ^{\circ}$} & \stackon{$ \bf0.002 $}{$ \bf0.06 ^{\circ}$}
            & \stackon{$ 0.99 + 37.35 $}{$ 190.9 $} & \stackon{$ 0.002 $}{$ 0.067 ^{\circ}$} & \stackon{$ \bf0.002 $}{$ 0.095 ^{\circ}$}
            & \stackon{$ 1.08 + 86.68 $}{$ \bf180.55 $} & \stackon{$ 0.052 $}{$ 2.54 ^{\circ}$} & \stackon{$ 0.083 $}{$ 1.584 ^{\circ}$}

            \\

            \bottomrule
        \end{tabular}
    \end{adjustbox}
\end{table*}

\subsection{Proof of Proposition \ref{prop:formulation_scale_rotation_only}}\label{app:proof_scale_rotation_only}

\begin{proof}
    We begin by expressing the objective function in \prettyref{eq:sba} in its vectorized form: 
    \bea 
    &\displaystyle \sum_{(i,k) \in \calE} w_{ik} \norm{ R_i (s_i\tilde{u}_{ik}) + t_i - p_k } ^2 \nonumber \\
    = & \displaystyle \sum_{(i,k) \in \calE} w_{ik} \norm{ (\tilde{u}_{ik} \tran \kron \eye_3) \vectorize{s_i R_i} + t_i -p_k} ^2 \nonumber 
    \eea
    where $w_{ik}$ denotes the weight assigned to each term.  

    Let $e_i \in \Real{N}$ be a vector of zeros except for the $i$-th entry, which is set to 1. Similarly, let $\bar{e}_k \in \Real{M}$ be a vector of zeros except for the $k$-th entry, which is set to 1. Furthermore, define the following concatenated vectors:  
    $t = [t_1;\dots;t_N] \in \Real{3N}$ for translations,
    $p = [p_1;\dots;p_M] \in \Real{3M}$ for landmark positions,  
    $\displaystyle r = [\vectorize{s_1 R_1};\dots;\vectorize{s_N R_N}] \in \Real{9N}$ for vectorized scaled rotations.  

    With these definitions, we simplify the objective function as $L(t,p,r)$.
    \bea
     \sum_{(i,k) \in \calE} w_{ik}|| ((e_i \tran \kron \tilde{u}_{i,k}) \kron \eye_3) r + (e_i\tran \kron \eye_3)t - (\bar{e}_k \tran \kron \eye_3)p ||^2 \nonumber 
    \eea
    \bea
    = & \displaystyle r\tran \parentheses{ Q_1 \kron \eye_3} r + t\tran \parentheses{ Q_2 \kron \eye_3} t + p\tran \parentheses{ Q_3 \kron \eye_3} p + \nonumber\\
    & 2r \tran \parentheses{ V_1 \kron \eye_3} t - 2r \tran \parentheses{ V_2 \kron \eye_3} p - 2t \tran \parentheses{ V_3 \kron \eye_3} p \nonumber
    \eea
    where $Q_1,Q_2,Q_3,V_1,V_2,V_3$ are as follows
    \bea 
    Q_1 = & \displaystyle \sum_{(i,k) \in \calE} w_{ik} (e_i  \kron \tilde{u}_{i,k} ) (e_i \tran \kron p_{i,k} \tran ) \in \Real{3N \times 3N},  \\
    Q_2 = & \displaystyle \sum_{(i,k) \in \calE} w_{ik} (e_i e_i \tran) \in \Real{N \times N}, \\
    Q_3 = & \displaystyle \sum_{(i,k) \in \calE} w_{ik} (\bar{e}_k \bar{e}_k \tran) \in \Real{M \times M}, \\
    V_1 = & \displaystyle \sum_{(i,k) \in \calE} w_{ik} (e_i \kron \tilde{u}_{i,k} ) e_i \tran \in \Real{3N \times N}, \\
    V_2 = & \displaystyle \sum_{(i,k) \in \calE} w_{ik} (e_i \kron \tilde{u}_{i,k} ) \bar{e}_k \tran \in \Real{3N \times M}, \\
    V_3 = & \displaystyle \sum_{(i,k) \in \calE} w_{ik} e_i \bar{e}_k \tran \in \Real{N \times M}, 
    \eea
    We now solve out $t$ and $p$ by taking the gradient of $L(t,p,r)$ \wrt $t$ and $p$ respectively.  Let $\displaystyle y = \bmat{c} t \\ p \emat $. We can represent $L(t,p,r)$ using $y,r$ as follows:
    
    \begin{align}\label{eq:problem_yr_only}
    L(y,r) =\displaystyle r\tran \parentheses{ Q_1 \kron \eye_3} r - 2r\tran \parentheses{ V_{tp} \kron \eye_3} y \nonumber \\+ y\tran \parentheses{ Q_{tp} \kron \eye_3} y
    \end{align}
    where $Q_{tp}, V_{tp}$ are as follows:
    \bea
    Q_{tp} = &\bmat{cc} Q_2 & -V_3 \\ -V_3 \tran & Q_3 \emat \in \Real{(N+M)\times(N+M)} \label{eq:app-Qtp-restate}\\ 
    V_{tp} = & \bmat{cc} -V_1 & V_2 \emat \in \Real{3N \times (N+M)}.
    \eea
    set $\nabla_y L(y,r) = 0$ we obtain
    \bea\label{eq:nabla_y_zero}
    (Q_{tp} \kron \eye_3) {y} = (V_{tp} \kron \eye_3)\tran r.
    \eea

    Let $\calG$ denote the view-graph. Next we prove $Q_{tp}$ can be represented as:
    \bea
    Q_{tp} = L(\calG) 
    \eea     
    where $L(\calG)$ is the Laplacian of $\calG$:
    \begin{lemma}
        \label{lemma:laplacian}
        $Q_{tp}$ is the Laplacian of $\calG$. \\
        \textit{Proof:} Note that $\calG$ is a weighted undirected graph. Calling $\delta(q)$ for the set of edges incident to a vertex $q$, and $w_e = w_{qp}$ for $e = (q,p)$, the Laplacian of $\calG$ is:
            \bea
            \label{eq:laplacian}
            L(\calG)_{qp} = 
            \begin{cases}
                \sum_{e \in \delta(q)} w_e & \text{if } q = p, \\
                - w_{qp} & \text{if } (q,p)\in \calE, \\
                0 & \text{if } (i,j) \notin \calE.
            \end{cases}
            \eea
        On the other hand, by expanding \prettyref{eq:app-Qtp-restate} and compare it with \prettyref{eq:laplacian}, we finish the proof. \qed
    \end{lemma}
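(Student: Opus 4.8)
The plan is to verify the claimed identity $Q_{tp} = L(\calG)$ \emph{block by block}, exploiting the $2\times 2$ block structure of $Q_{tp}$ in \eqref{eq:app-Qtp-restate} together with the bipartite nature of the view-graph. First I would fix a vertex ordering that makes the two expressions directly comparable: index the $N+M$ vertices of $\calG$ so that vertices $1,\dots,N$ are the camera nodes and vertices $N+1,\dots,N+M$ are the landmark nodes. Under this ordering the Laplacian $L(\calG)$ inherits a natural $2\times 2$ block partition conformal with the blocks $Q_2$, $Q_3$, and $V_3$ appearing in $Q_{tp}$, so it suffices to match each of the four blocks against the corresponding piece of $L(\calG)$ defined in \eqref{eq:laplacian}.

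The key structural observation is that $\calG$ is \emph{bipartite}: every edge $(i,k)\in\calE$ joins a camera node to a landmark node, and there are no camera--camera or landmark--landmark edges. Consequently, in $L(\calG)$ every off-diagonal entry lying \emph{within} the camera block or \emph{within} the landmark block must vanish, since a nonzero such entry would correspond to a forbidden same-type edge. This is precisely the structure of $Q_2$ and $Q_3$: because $Q_2 = \sum_{(i,k)\in\calE} w_{ik}\, e_i e_i\tran$ is a sum of diagonal rank-one terms it is itself diagonal, with $(i,i)$ entry $\sum_{k:(i,k)\in\calE} w_{ik}$, exactly the weighted degree of camera $i$; this matches the diagonal rule of the Laplacian. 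Applying the identical argument to $Q_3 = \sum_{(i,k)\in\calE} w_{ik}\, \bar{e}_k \bar{e}_k\tran$ recovers the landmark degrees on the second diagonal block.

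It then remains to match the off-diagonal camera--landmark block. Here I would compute the $(i,k)$ entry of $V_3 = \sum_{(i,k)\in\calE} w_{ik}\, e_i \bar{e}_k\tran$, which equals $w_{ik}$ when $(i,k)\in\calE$ and $0$ otherwise; hence the corresponding entry of $-V_3$ is $-w_{ik}$ on each edge and $0$ on non-edges, matching the ``$-w_{qp}$ if $(q,p)\in\calE$'' rule of \eqref{eq:laplacian}. The lower-left block $-V_3\tran$ follows immediately by symmetry of the Laplacian, and assembling the four verified blocks yields $Q_{tp}=L(\calG)$.

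I do not expect any genuine obstacle here: the argument is a direct entrywise comparison and no inequality or limiting argument is involved. The only point requiring care is bookkeeping---keeping the combined index $q,p\in\{1,\dots,N+M\}$ consistent with the camera/landmark split so that the blocks line up. The one conceptual step worth stating explicitly is the invocation of bipartiteness, since it is exactly the absence of same-type edges that forces $Q_2$ and $Q_3$ to be pure diagonal degree matrices rather than carrying any within-block off-diagonal mass.
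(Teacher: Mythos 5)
Your proposal is correct and follows essentially the same route as the paper: the paper's proof is exactly a direct expansion of $Q_{tp}$ from \prettyref{eq:app-Qtp-restate} followed by an entrywise comparison with \prettyref{eq:laplacian}, which is what you carry out block by block (your explicit appeal to bipartiteness is simply making precise why the diagonal blocks $Q_2$, $Q_3$ are pure weighted-degree matrices and why $-V_3$ matches the off-diagonal adjacency rule). The only difference is that you supply the bookkeeping details the paper leaves implicit.
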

    
     Calling $\bar{y} = [t_2;\dots;t_{N};p_1;\dots;p_{N+M}] \in \Real{3(N+M)-3}$ and $\bar{Q}_{tp} = [c_2;\dots;c_{N+M}] \in \Real{(N+M) \times (N+M-1)}$ where $c_i$ is the i-th column of $Q_{tp}$. We prove when fix $s_1 = 1$, $R_1 = \eye_3$, $t_1 = 0$, $y$ has a unique solution:
    \bea
    (\bar{Q}_{tp} \kron \eye_3) {\bar{y}} = -(V_{tp} \kron \eye_3)\tran r
    \eea
    Since $\rank{L(\calG)}=N+M-1$ and $\sum_{i=1,..,N} c_i = \mathbf{0}_{N}$, then $span(c_1,...,c_n) = span(c_2,...,c_n)$ and $\rank{\bar{Q}_{tp}}=N+M-1$ which implies that $\bar{Q}_{tp}$ has full column rank. Hence, by taking inverse and rearrange, we obtain
    \bea 
    \bar{y} = & \parentheses{\bar{A} \kron \eye_3} r
    \eea 
    where
    \bea
    \bar{A} = & -(\bar{Q}_{tp} \tran \bar{Q}_{tp})\inv \bar{Q}_{tp} \tran V_{tp} \tran
    \eea
    Together with $t_1 = 0$,
    \bea
    y = & \parentheses{A \kron \eye_3} r
    \eea
    with
    \bea
    A = & \bmat{c} \zero_{1 \times 3N} \\ -(\bar{Q}_{tp} \tran \bar{Q}_{tp})\inv \bar{Q}_{tp} \tran V_{tp} \tran \emat \in \Real{(N+M) \times 3N}
    \eea
    Now we have a closed form solution of $y$. Plug in the solution of $y$ into \eqref{eq:problem_yr_only}, we obtain:
    \bea
    L(r) = & \displaystyle r \tran \parentheses{\parentheses{A \tran Q_{tp} A + V_{tp}A + A\tran V_{tp}\tran + Q_1} \kron \eye_3} r \nonumber
    \eea
    Note that
    \bea
    r = [\vectorize{s_1 R_1};\dots;\vectorize{s_N R_N}] = \vectorize{U} \nonumber
    \eea
    Then $L(r)$ is equivalent to
    \bea
    \label{loss_R}
    \vectorize{U} \tran \parentheses{\parentheses{A \tran Q_{tp} A + V_{tp}A + A\tran V_{tp}\tran + Q_1} \kron \eye_3} \vectorize{U}\nonumber
    \eea
    Rewriting this in a more compact matrix representation yields:
    \begin{align}
        \rho^\star = \min_{R} &~\trace{QU \tran U} \\
        Q &\coloneqq  A \tran Q_{tp} A + V_{tp}A + A\tran V_{tp}\tran + Q_1 \in \sym{3N}
    \end{align}
    concluding the proof.
\end{proof}

\subsection{Proof of \prettyref{prop:qcqp}}\label{app:proof_qcqp}

\begin{proof}
    We show that under \prettyref{eq:determinant}, if $U^\star$ is a global optimizer of \prettyref{eq:problem-qcqp}, then it is also a global optimizer of \prettyref{eq:problem_scale_rotation_only}.  

    First, note that \prettyref{eq:problem-qcqp} is a relaxation of \prettyref{eq:problem_scale_rotation_only}, because $\Othree \subseteq \SOthree$. Thus, we have: 
    \bea\label{eq:qcqp_relaxation} 
      \trace{Q(U^\star)\tran U^\star} = \rho^\star_{\qcqp} \leq \rho^\star.
    \eea  
    From \prettyref{eq:determinant} and the definition $\bar{R}^\star_i = s^\star_i R^\star_i$, we obtain  
    \bea
      \det(R^\star_i) = \frac{\det(\bar{R}^\star_i)}{(s^\star_i)^3} > 0.
    \eea  
    This implies that $U^\star$ is also a feasible solution to \prettyref{eq:problem_scale_rotation_only}. By \prettyref{eq:qcqp_relaxation}, $U^\star$ is therefore the global optimizer of \prettyref{eq:problem_scale_rotation_only}.
\end{proof}

\subsection{Proof of \prettyref{prop:sdprelaxation}}\label{app:proof_sdp_relaxation}

\begin{proof}
    We first establish that \prettyref{eq:problem-sdp} is a relaxation of \prettyref{eq:problem-qcqp}. Let $X = U\tran U$. By definition, we have $x\tran X x = (Ux)\tran Ux \geq 0$ for all $x \in \Real{3N}$, which implies that $X \succeq 0$.  

    Furthermore, since $\bar{R}_i \in \sOthree$, it follows that $\bar{R}^\tran_i \bar{R}_i = s_i^2 \eye_3 \triangleq \alpha \eye_3$. This ensures that the feasible set of \prettyref{eq:problem-sdp} is broader than that of \prettyref{eq:problem-qcqp}, confirming that it is indeed a relaxation.

    Next we prove if $\rank{X^\star} = 3$, we can extract the global minimizer of \prettyref{eq:problem-qcqp} from $X^\star$. Let $X^\star = (\bar{U}^\star)\tran \bar{U}^\star$, then $\bar{U}^\star$ is rank 3. We can decompose $\bar{U}^\star$ as 
    \bea
    \bar{U}^\star = \bmat{cccc} \bar{R}^\star_1 & \bar{R}^\star_2 & \dots & \bar{R}^\star_N \emat
    \eea
    Since $X$ is feasible, we have $\bar{R}^\star_i \bar{R}^\star_i = \alpha \eye_3 = s_i^2 \eye_3$, which implies that $\bar{R}^\star_i / s_i \in \Othree$.  

    Define $U^\star = \bar{R}_1^{\star T} \bar{U}^\star$, making $U^\star$ a feasible solution to \prettyref{eq:problem-qcqp}. Moreover, since  
    \bea
    \trace{Q(U^\star)\tran U^\star} = \rho^\star_{\sdp} \leq \rho^\star_{\qcqp},  
    \eea  
    it follows that $U^\star$ is the global minimizer of \prettyref{eq:problem-qcqp}.
\end{proof}

\subsection{Proof of LICQ condition}\label{app:proof_licq}

\begin{proof}    
    Since $U_r = [R_1,\dots,R_n], R_k \in \mathbb{R}^{r \times 3}$ satisfies \eqref{eq:problem-bm}, we have $R_k^T R_k = \alpha_k I_3, \alpha_k > 0$. Therefore, $\text{rank}(R_k) = 3,\forall k$, i.e., $R_k$ is column full-rank. 

    We show the structure of the $\{A_i \}_{i=1}^m$ matrices guarantees LICQ. Recall $\{A_i \}_{i=1}^m$ enforces the diagonal $3\times 3$ blocks of $X$ in (19) to be scaled identity matrices (for the first block an additional constraint enforces the diagonal entries to be $1$). Therefore, the $k$-th group of $\{ A_i\}_{i=1}^m$ has the following form
\begin{equation}
\adjustbox{max width=0.6\linewidth}{$
    A_{5k - 4 + \ell} = 
\begin{bmatrix}
0 & \cdots & 0 & \cdots & 0 \\
\vdots & \ddots & \vdots & \ddots & \vdots \\
0 & \cdots & B^\ell & \cdots & 0 \\
\vdots & \ddots & \vdots & \ddots & \vdots \\
0 & \cdots & 0 & \cdots & 0 \\
\end{bmatrix}, \ell = 1,...,5
$} \nonumber
\end{equation}
where the matrix is all zero except in the $k$-th $3\times 3$ block:
\begin{equation}
\adjustbox{max width=0.95\linewidth}{$
B^1 = \begin{bmatrix}
    1 & 0 & 0 \\
    0 & -1 & 0 \\
    0 & 0 & 0 \\
\end{bmatrix},B^2 = \begin{bmatrix}
    0 & 0 & 0 \\
    0 & 1 & 0 \\
    0 & 0 & -1 \\
\end{bmatrix},
B^3 = \begin{bmatrix}
    0 & 1 & 0 \\
    1 & 0 & 0 \\
    0 & 0 & 0 \\
\end{bmatrix},B^4 = \begin{bmatrix}
    0 & 0 & 1 \\
    0 & 0 & 0 \\
    1 & 0 & 0 \\
\end{bmatrix},B^5 = \begin{bmatrix}
    0 & 0 & 0 \\
    0 & 0 & 1 \\
    0 & 1 & 0 \\
\end{bmatrix}.
$}\nonumber
\end{equation}
Due to this structure, 
$A_{5k - 4 + \ell} U_r^T$ is entirely zero except for rows \(3k-2\) through \(3k\). So $A_{5k - 4 + \ell} U_r^T$ for different values of \(k\) are clearly linearly independent since their nonzero regions do not overlap. Hence, we only need to investigate LICQ given a fixed $k$. 

Proof by contradiction: suppose there exists $\gamma_\ell,\ell=1,\dots,5$ such that 
$$
(\sum_{\ell=1}^5 \gamma_\ell B^\ell) R_k^T = 0,
$$
then, because $R_k^T$ is row full-rank, we have $\sum_{\ell=1}^5 \gamma_\ell B^\ell=0$, and thus $\gamma_\ell =0, \forall \ell$. This proves that $\{A_i U_r^T\}_{i=1}^m$ must be linearly independent.
\end{proof}

\section{Depth Models}
\label{app:depthcomparison}

We compare the performance of different depth estimation models on the Mip-Nerf datasets. The models we consider are \textbf{Unidepth}~\cite{piccinelli2024unidepth}, \textbf{Depth-pro}~\cite{bochkovskii2024depth}, \textbf{DepthAnything-V2}~\cite{yang24depthv2}, and \textbf{Metric3D-v2}~\cite{yin24metric3d}. We evaluate these models using ATE/RTE and runtime as detailed in \prettyref{sec:exp}. The results are shown in \prettyref{tab:depthcomparison}.

All four models produce accurate results, demonstrating the robustness of our \xmsfm pipeline to different depth models. Unidepth is the fastest, while DepthAnything-V2 and Metric3D-v2 are slightly slower but occasionally more accurate. Depth-Pro is the slowest yet consistently stable in accuracy.


\begin{figure*}[!t]
    \begin{center}
		\begin{tabular}{cc}
    \begin{minipage}{0.34\textwidth} 
        \begin{adjustbox}{width=\linewidth}
            \begin{tikzpicture}
                \pgfplotstableread{
                    Label  Match Index  Depth Filter Matrix Solver
                    R1     0 0 0 0 0 0
                    R2     0 0 0 0 0 0
                    R3     0 0 0 0 0 0
                    R4     0 0 0 0 0 0
                    R5     0 0 0 0 0 0
                    A1     349.77 607.54 55.16 0 130.26 8.5
                    A2     379.45 531.55 63.0 0 182.6 7.37
                    A3     168.36 445.49 40.72 0 75.59 4.6
                    A4     140.53 525.81 40.98 0 52.12 23.1
                    A5     0       0      0     0      0     0
                    B1     349.77 607.54 53.63 727.3 130.0 16.26
                    B2     379.45 531.55 63.7 511.36 165.06 6.18
                    B3     168.36 445.49 43.41 480.9 71.59 5.61
                    B4     140.53 525.81 37.18 543.33 52.36 20.72
                    B5     0       0      0         0     0         0
                }\testdata

                \begin{axis}[
                    ybar stacked,
                    ymin=0,
                    ymax=2000,
                    xtick={1,6,11},  
                    xticklabels={\,, \xmdouble, \textsc{Filter + }\xmdouble}, 
                    legend style={cells={anchor=west}, legend pos=north west, font = \scriptsize},
                    bar width=10pt, 
                    enlarge x limits=0.1, 
                    reverse legend=true,
                ]
    
                \addplot [fill=green!80] 
                    table [x expr=\coordindex, y=Match] 
                    {\testdata};
                \addlegendentry{Match}
            
                \addplot [fill=blue!60] 
                    table [x expr=\coordindex, y=Index]
                    {\testdata};
                \addlegendentry{Index}
            
                \addplot [fill=purple!60] 
                    table [x expr=\coordindex, y=Depth]
                    {\testdata};
                \addlegendentry{Depth}
    
                \addplot [fill=orange!60] 
                    table [x expr=\coordindex, y=Filter] {\testdata};
                \addlegendentry{Filter}
    
                \addplot [fill=yellow!60] 
                    table [x expr=\coordindex, y=Matrix] {\testdata};
                \addlegendentry{Matrix}
    
                \addplot [fill=red!60] 
                table [x expr=\coordindex, y=Solver] {\testdata};
                \addlegendentry{Solver}
            
                \end{axis}
                
            \end{tikzpicture}
        \end{adjustbox}
        \caption{Breakdown Results on the Replica dataset (2000 frames)}
        \label{fig:replica-time-2000}
    \end{minipage}
    \hfill 

    \begin{minipage}{0.32\textwidth}
        \begin{adjustbox}{width=\linewidth}
            \begin{tikzpicture}
                \pgfplotstableread{
        Label  Match   Index  Depth Filter Matrix Solver
        R1     0 0 0 0 0 0
        R2     0 0 0 0 0 0
        R3     0 0 0 0 0 0
        R4     0 0 0 0 0 0
        R5     0 0 0 0 0 0
        A1     1.71 4.69 1.57 0 2.54 0.12
        A2     1.99 6.76 1.62 0 2.71 0.14
        A3     3.58 5.09 1.53 0 2.55 0.15
        A4     1.36 3.01 1.52 0 2.32 0.13
        A5     0       0      0     0      0     0 
        C1     1.71 4.69 1.6 3.34 2.58 0.12
        C2     1.99 6.76 1.6 3.21 2.62 0.14
        C3     3.58 5.09 1.56 3.49 2.81 0.12
        C4     1.36 3.01 1.52 2.8 2.41 0.15
        C5     0       0      0         0     0  0        
        }\testdata

                \begin{axis}[
                    ybar stacked,
                    ymin=0,
                    ymax=20,
                    xtick={1, 6, 11},  
                    xticklabels={\,,\xmdouble, \textsc{Filter + }\xmdouble}, 
                    legend style={cells={anchor=west}, legend pos=north west, font = \scriptsize},
                    bar width=10pt, 
                    enlarge x limits=0.1, 
                    reverse legend=true,
                ]
        
                 \addplot [fill=green!80] 
                 table [x expr=\coordindex, y=Match] 
                 {\testdata};
             \addlegendentry{Match}
         
             \addplot [fill=blue!60] 
                 table [x expr=\coordindex, y=Index]
                  {\testdata};
             \addlegendentry{Index}
         
             \addplot [fill=purple!60] 
                 table [x expr=\coordindex, y=Depth]
                 {\testdata};
             \addlegendentry{Depth}
        
             \addplot [fill=orange!60] 
                 table [x expr=\coordindex, y=Filter] {\testdata};
             \addlegendentry{Filter}
        
             \addplot [fill=yellow!60] 
                 table [x expr=\coordindex, y=Matrix] {\testdata};
             \addlegendentry{Matrix}
        
             \addplot [fill=red!60] 
             table [x expr=\coordindex, y=Solver] {\testdata};
             \addlegendentry{Solver}
            
                \end{axis}
            \end{tikzpicture}
        \end{adjustbox}
        \caption{Breakdown Results on the Replica dataset (100 frames)}
        \label{fig:replica-time-100}
    \end{minipage}
    \hfill 

    \begin{minipage}{0.32\textwidth}
        \begin{adjustbox}{width=\linewidth}
            \begin{tikzpicture}
                \pgfplotstableread{
                    Label  Match Index  Depth Filter Matrix XM Ceres
                    R1     0 0 0 0 0 0 0
                    B1     87.65 36.58 55.22 16.06 30.66 0.8 19.19
                    B2     53.77 37.93 59.62 8.8 14.46 0.65 3.98
                    B3     56.23 23.99 50.69 2.17 12.67 0.58 22.47
                    B4     61.77 34.49 59.5 18.72 24.6 1.01 24.35
                }\testdata

                \begin{axis}[
                    ybar stacked,
                    ymin=0,
                    ymax=300,
                    xtick={1,2,3,4},  
                    xticklabels={Kitchen,Garden,Bicycle,Room}, 
                    legend style={cells={anchor=west}, legend pos=north west, font = \scriptsize},
                    bar width=10pt, 
                    enlarge x limits=0.1, 
                    reverse legend=true,
                ]
    
                \addplot [fill=green!80] 
                    table [x expr=\coordindex, y=Match] 
                    {\testdata};
                \addlegendentry{Match}
            
                \addplot [fill=blue!60] 
                    table [x expr=\coordindex, y=Index]
                    {\testdata};
                \addlegendentry{Index}
            
                \addplot [fill=purple!60] 
                    table [x expr=\coordindex, y=Depth]
                    {\testdata};
                \addlegendentry{Depth}
    
                \addplot [fill=orange!60] 
                    table [x expr=\coordindex, y=Filter] {\testdata};
                \addlegendentry{Filter}
    
                \addplot [fill=yellow!60] 
                    table [x expr=\coordindex, y=Matrix] {\testdata};
                \addlegendentry{Matrix}
    
                \addplot [fill=red!60] 
                table [x expr=\coordindex, y=XM] {\testdata};
                \addlegendentry{XM}
    
                \addplot [fill=pink!60] 
                table [x expr=\coordindex, y=Ceres] {\testdata};
                \addlegendentry{Ceres}

                \end{axis}
            \end{tikzpicture}
        \end{adjustbox}
        \caption{Breakdown Results on the Mip-Nerf dataset}
        \label{fig:mipnerf-time}
    \end{minipage}
    \vspace{-6mm}
\end{tabular}
\end{center}
\end{figure*}
\section{Breakdown plot of run time}
\label{app:breakdown-time}

We present time breakdown plots (\prettyref{fig:replica-time-2000}, \prettyref{fig:replica-time-100}, \prettyref{fig:mipnerf-time}), for three different cases. Depth estimation scales linearly with the number of frames, making it more significant in smaller datasets. Matching and indexing become dominant in larger datasets, as exhaustive matching time grows quadratically with the number of frames. Solver time remains consistently minimal across all datasets.  


\section{Scale Regularization}
\label{app:scale-regularization}

Due to inaccuracies in real-world data, the scale of frames $2,\dots,N$ is often significantly smaller than that of the first frame\footnote{Anchored to be 1.}. As a result, the global minimum tends to collapse all cameras $2,\dots,N$ into a single point to avoid large errors in their estimates. To mitigate this, we introduce a scale regularization term in the objective function of \prettyref{eq:problem-sdp}.  

\begin{align}\label{eq:problem-sdp-scale}
    \min_{X \in \sym{3N}} & \trace{QX} + \lambda \sum_{i=2}^{N} (X_{3i,3i}-1)^2 \nonumber\\
    \subject & \langle A_i,X \rangle = b_i, \forall i \in 1,\dots,5N+1\nonumber\\
    & X \succeq 0
\end{align}
This modification does not violate the previously established theorems; Slater’s Condition and strong duality remain valid, and the optimality condition remains unchanged. However, the $Z(y)$ matrix is modified to  

\bea
    Z(y) = Q - \sum_{i=1}^{5N+1} y_i A_i + \lambda \nabla(\sum_{i=2}^{N} (X_{3i,3i}-1)^2)
\eea
To prove this, we first write \prettyref{eq:problem-sdp-scale} in the standard form:
\begin{align}
\min ~~&\langle Q, X \rangle + F(X)\\
s.t. ~~&X \succ 0\\
& \langle A_i, X \rangle = b_i, ~~\forall i
\end{align}
$F(X)$ is a quadratic term defined by $\displaystyle F(X) = \lambda \sum_{i=2}^{N} (X_{3i,3i}-1)^2$. This is still a convex problem and the Lagrangian is:
\begin{align}
L(X, y, Z) =& \langle Q, X \rangle + F(X) - \\&\sum_i y_i (\langle A_i, X \rangle - b_i) - \langle Z, X \rangle \\
=& \langle Q - \sum_i y_i A_i - Z, X \rangle + F(X) + \sum_i y_i b_i
\end{align}
The KKT condition is:
\begin{align}
Z =& \ Q + \nabla F(X) - \sum_i y_i A_i \succ 0 \\
\langle A_i, X \rangle =& \ b_i, ~~\forall i \\
X \succ & \ 0 \\
\langle Z, X \rangle = & \ 0
\end{align}
Note that $L$ is linear in the off-diagonal elements of $X$ and quadratic in the diagonal elements. To ensure a finite minimum, we require $(\nabla L)_{ij} = 0$ for $i \neq j$. For the diagonal elements, achieving the minimum requires $(\nabla L)_{ii} = 0$. Thus, together, we obtain $\nabla L = 0$, i.e.
\begin{align}
Q - \sum_i y_i A_i - Z = -\nabla F(X)
\end{align}
From the definition of $F(x)$ we know $X_{3j,3j} = -\frac{(Q - \sum_i y_i A_i - Z)_{3j,3j}}{2\lambda} + 1$. So we plug in $X$ then we get the dual problem:
\begin{align}
\max ~~&\sum_i b_i y_i - \frac{\sum_{j=2}^N (C - \sum_i y_i A_i - Z)_{3j,3j}^2}{4\lambda} 
\\ & + \sum_{j=2}^N (Q - \sum_i y_i A_i - Z)_{3j,3j} \\
s.t. ~~&Z \succ 0\\
        ~~&(Q - \sum_i y_i A_i - Z)_{ij} = 0, ~~i\neq j .   
\end{align}


\section{Suboptimality for SDP}
\label{app:suboptimality}

In \prettyref{eq:suboptimality}, we established suboptimality when \prettyref{eq:problem-sdp} is solved to global optimality. However, in practice, numerical errors arise, and variations in gradient tolerance settings can lead to increased suboptimality. Here, we provide a rigorous proof for computing suboptimality in SDP problems and apply it directly to the scale regularization problem in \prettyref{eq:problem-sdp-scale}.  

\begin{theorem}
    Given the primal problem \prettyref{eq:problem-sdp-scale} we have 
    \bea
        \rho_\sdp \geq \rho^\star_\sdp \geq \text{max}(0,\lambda_{\text{min}}(Z)) \text{tr}(X) + \rho_{\text{dual}}
    \eea
\end{theorem}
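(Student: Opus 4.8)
The plan is to prove the two inequalities separately, since only the right one carries content. The left inequality $\rho_\sdp \ge \rho^\star_\sdp$ is immediate from primal feasibility: here $\rho_\sdp = \inprod{Q}{X} + F(X)$ is the objective of~\eqref{eq:problem-sdp-scale} evaluated at the computed primal point $X$, which satisfies $X \succeq 0$ and $\inprod{A_i}{X}=b_i$, so it is feasible and therefore upper-bounds the minimum $\rho^\star_\sdp$. The substance is the right inequality, which I would read as a \emph{weak-duality-with-correction} bound: a rigorous lower bound that survives the numerical perturbations (from gradient-tolerance settings) mentioned in the preamble, with the $\max(0,\lambda_{\min}(Z))$ coefficient guarding against a spuriously tiny negative value of the computed minimum eigenvalue while still reporting the conservative correction.

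The single nonstandard ingredient is an elementary Rayleigh-type inequality: for any symmetric $Z$ and any $X \succeq 0$, writing the eigendecomposition $Z = \sum_j \lambda_j v_j v_j^\tran$ gives $\inprod{Z}{X} = \sum_j \lambda_j\,(v_j^\tran X v_j) \ge \lambda_{\min}(Z)\sum_j v_j^\tran X v_j = \lambda_{\min}(Z)\,\trace{X}$, where each $v_j^\tran X v_j \ge 0$ because $X \succeq 0$ and where $\sum_j v_j^\tran X v_j = \trace{X}$. This is the only point at which the (possible) non-positive-definiteness of $Z$ enters, and it is exactly what converts a defect in dual feasibility into an additive correction proportional to $\lambda_{\min}(Z)$ and $\trace{X}$.

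I would then combine this with weak Lagrangian duality for the regularized program. Using the Lagrangian $L(X,y,Z) = \inprod{Q}{X} + F(X) - \sum_i y_i(\inprod{A_i}{X}-b_i) - \inprod{Z}{X}$ and its closed-form dual $\rho_{\text{dual}} = g(y,Z) = \min_X L(X,y,Z)$ derived in \prettyref{app:scale-regularization}, I evaluate $L$ at the primal optimizer $X^\star$. Since $X^\star$ is feasible the linear-constraint terms vanish, giving $\rho^\star_\sdp = \inprod{Q}{X^\star}+F(X^\star) = L(X^\star,y,Z) + \inprod{Z}{X^\star} \ge g(y,Z) + \inprod{Z}{X^\star}$. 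Applying the Rayleigh inequality to the last term yields $\rho^\star_\sdp \ge \rho_{\text{dual}} + \lambda_{\min}(Z)\,\trace{X^\star}$. In the certified regime where the staircase returns $Z \succeq 0$ one has $\lambda_{\min}(Z) = \max(0,\lambda_{\min}(Z)) \ge 0$, and identifying the trace with that of the recovered solution gives exactly the stated bound; replacing $\lambda_{\min}(Z)$ by $\max(0,\lambda_{\min}(Z))$ keeps the reported correction nonnegative and guards against numerical noise in the smallest eigenvalue.

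I expect the main obstacle to be the quadratic regularizer $F$, which breaks the clean linear-SDP identity $\inprod{Q}{X^\star} = \inprod{Z}{X^\star} + \sum_i y_i b_i$ that one would use in the unregularized case. One must instead carry $F(X)$ through the Lagrangian and verify that the concave closed-form dual $g(y,Z)$ from \prettyref{app:scale-regularization} is precisely the quantity $\rho_{\text{dual}}$, so that the inequality $\rho^\star_\sdp \ge g(y,Z) + \inprod{Z}{X^\star}$ is both valid and informative. A secondary, more routine subtlety is that the correction naturally features $\trace{X^\star}$ rather than the trace of the \emph{computed} $X$; making the bound fully computable requires either substituting the computed primal trace (justified because the scale-regularization constraints control the diagonal blocks, so $\trace{X}$ is close to $\trace{X^\star}$) or an explicit upper bound on $\trace{X^\star}$, together with careful bookkeeping of the sign of $\lambda_{\min}(Z)$ so the correction is always applied in the conservative direction. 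The Rayleigh step and weak duality themselves are standard; the attention should go to these two reconciliations.
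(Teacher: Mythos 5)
Your core derivation is essentially the paper's own proof: the left inequality from primal feasibility, and the right one by combining weak Lagrangian duality for the regularized program (the closed-form dual of \prettyref{app:scale-regularization}, whose value is exactly $\rho_{\text{dual}}$) with the Rayleigh bound $\inprod{Z}{X^\star} \ge \lambda_{\min}(Z)\,\trace{X^\star}$ for $X^\star \succeq 0$. The paper's displayed chain is your computation written out for a generic feasible $X$, with convexity of $F$ doing the work of your step $L(X^\star,y,Z) \ge g(y,Z)$; so the substance and the order of the moves coincide.

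One correction to your bridging discussion, though: your claim that replacing $\lambda_{\min}(Z)$ by $\max(0,\lambda_{\min}(Z))$ ``keeps the correction conservative'' and ``guards against numerical noise'' is backwards. What your argument (and the paper's) actually establishes is $\rho^\star_\sdp \ge \rho_{\text{dual}} + \lambda_{\min}(Z)\,\trace{X^\star}$; when $\lambda_{\min}(Z)<0$, clipping it to zero \emph{raises} the claimed lower bound to $\rho_{\text{dual}}$, which amounts to asserting weak duality at a dual-infeasible point $Z \not\succeq 0$ --- the direction supported by the argument is $\min(0,\lambda_{\min}(Z))$, not $\max$. The same caution applies to swapping $\trace{X^\star}$ for the computed $\trace{X}$: the constraints pin down only the first diagonal block (the other blocks are free scaled identities), so nothing forces the two traces to be close, and your ``the scale-regularization constraints control the diagonal blocks'' justification does not hold. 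Both soft spots are inherited from the paper's own proof --- its final inequality is precisely the unjustified $\max$ step, stated for all feasible $X$ --- so you have reproduced the argument faithfully and were right to flag these as the delicate points; just be aware that the patches you sketch do not close them, and the conservativeness rationale for the $\max$ should be dropped.
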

\begin{proof}
    for all feasible $X$ we have:
    \begin{align}
    \langle Q, X \rangle + F(X) &\geq \langle Q - \sum_i y_i A_i + \nabla F(x) , X \rangle \\
    &+ \underbrace{\sum_i y_i b_i + F(X) - \langle \nabla F(X), X \rangle}_{\text{dual value}}\\
    &= \langle Z, X \rangle + \rho_{\text{dual}}\\
    &\geq \text{max}(0,\lambda_{\text{min}}(Z)) \text{tr}(X) + \rho_{\text{dual}}
    \end{align}
    and we have $\rho_\sdp \geq \rho^\star_\sdp$.
\end{proof}

$\rho_\sdp$ and $\text{max}(0, \lambda_{\text{min}}(Z)) \text{tr}(X) + \rho_{\text{dual}}$ can be directly evaluated within the algorithm. A small gap between them indicates that $\rho_\sdp$ is close to $\rho^\star_\sdp$, confirming that the problem has been solved to global optimality. We define this new suboptimality gap as:  

\bea \label{eq:suboptimality_new}
\eta = \frac{\hat{\rho} - \text{max}(0,\lambda_{\text{min}}(Z)) \text{tr}(X) - \rho_{\text{dual}}}{1 + |\hat{\rho}| + |\text{max}(0,\lambda_{\text{min}}(Z)) \text{tr}(X) + \rho_{\text{dual}}|}.
\eea

\section{Convergence and Noise Analysis}
\label{app:converge-noise}

\begin{figure}[!t]
    \centering
    \includegraphics[width=0.9\linewidth]{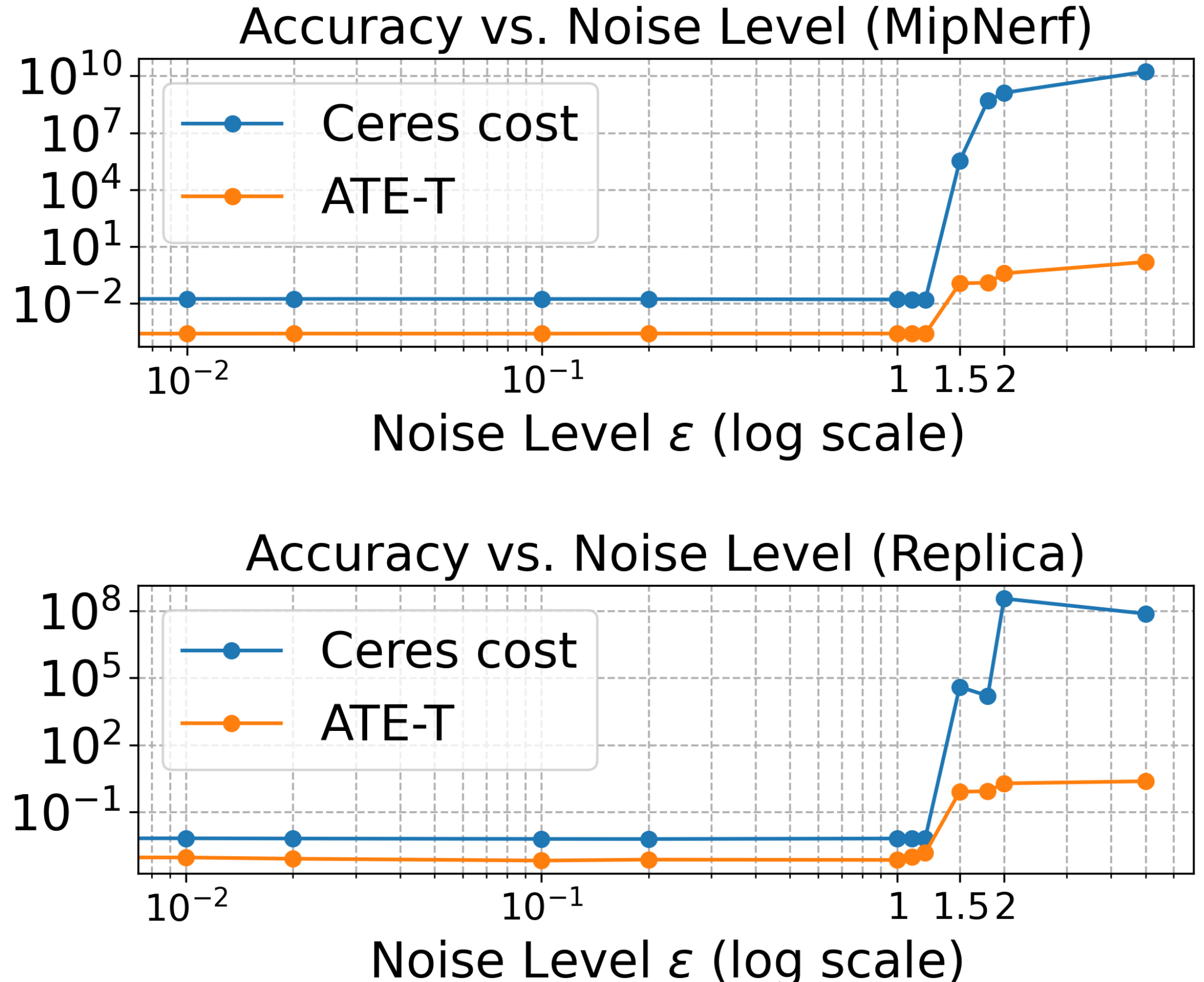}
    \caption{Accuracy (ATE-T) and the final Ceres objective value of Replica and Mip-NeRF datasets as the depth noise level $\epsilon$ increases. When $\epsilon<1.5$ they fall into the same local minima. The SDP relaxation breaks at $\epsilon=5$ for Replica and never for Mip-NeRF.}
    \label{fig:noise_test}
    \vspace{-3mm}
\end{figure}

\begin{figure}[!t]
    \centering
    \includegraphics[width=0.9\linewidth]{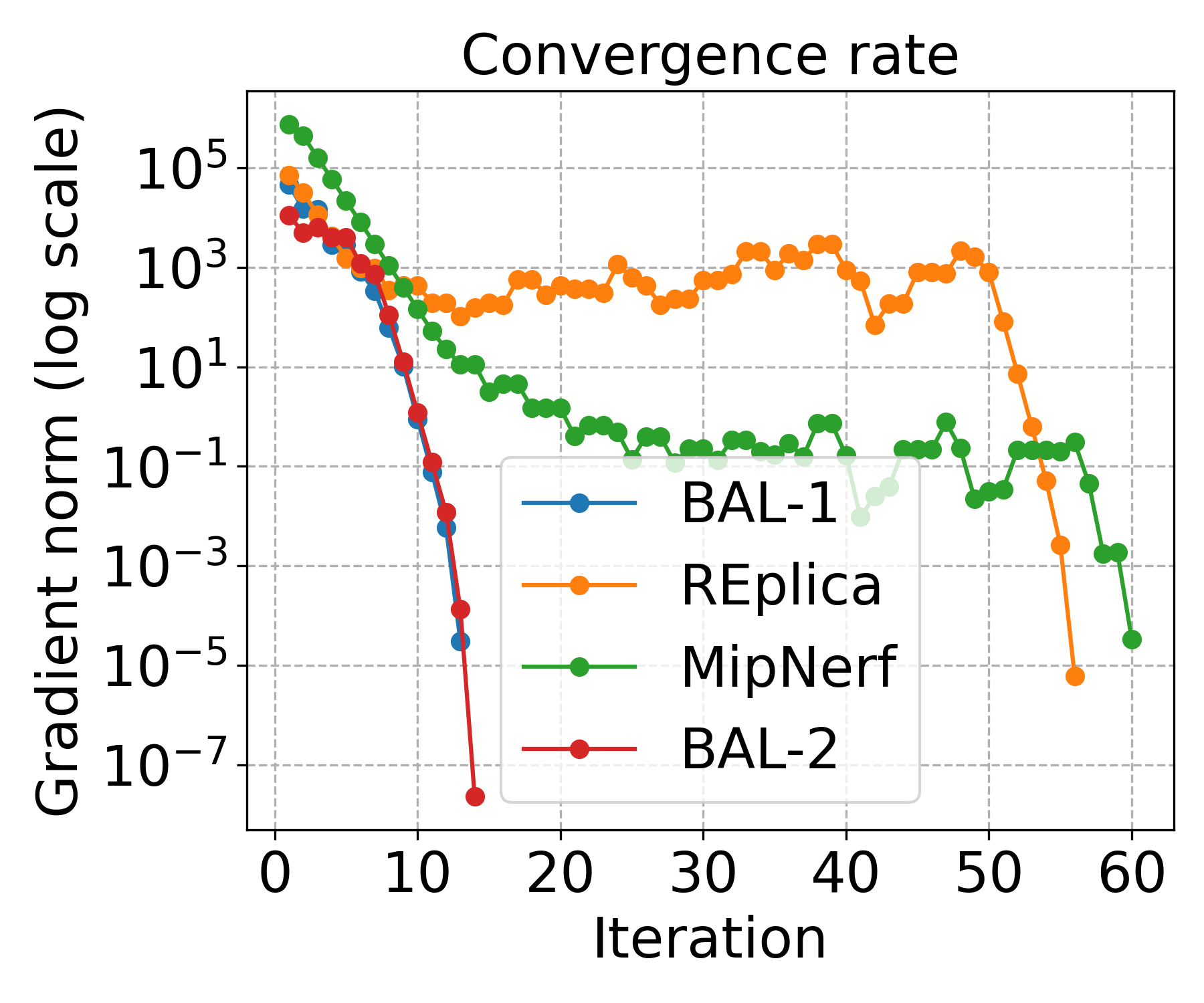}
    \caption{Convergence rate of XM on different datasets. BAL datasets show quadratic convergence. The other two enter quadratic convergence after a slow convergence region.}
    \label{fig:convergence_rate}
    \vspace{-3mm}
\end{figure}

\subsection{Convergence Rate}
We show the convergence rate of our algorithm on different datasets. From Riemannian trust-region
algorithm with truncated conjugate gradient (Rtr-tCG) we already know that it has local quadratic convergence \cite{boumal2023introduction,absil2008optimization}. Our experimental results (Fig.~\ref{fig:convergence_rate}) verified this.

\subsection{Noise Analysis} 
We test our pipeline to investigate tolerance for noise in depth estimation. Specifically, we add a noisy scaling factor $s$ to the depth of each observation, defined as
$s = (1+\epsilon)^x, x \sim \mathcal{U}(-1,1)$,
where \(\epsilon\) controls the noise level. Fig.~\ref{fig:noise_test} right plots the relationship between \(\epsilon\) and the reconstruction accuracy. The pipeline breaks at $\epsilon=1.5$ which shows robustness against inaccurate depth. The SDP relaxation does not break until $\epsilon=5$, a stronger level of robustness.

\section{Metrics}
\label{app:metrics}

The classical absolute Trajectory Error (ATE) and Relative
Pose Error (RPE) are defined as:

\begin{align}
    &\text{ATE}_{i} = \| \hat{T}_i - T_i \|_F^2, \\
    &\text{RPE}_{ij} = \| (\hat{T}_j \hat{T}_i)^{-1} T_i T_j^{-1} \|_F^2,
\end{align}
where $T_i$ represents ground truth pose and $\hat{T}_i$ represents estimated transformation. Because of the potential transformation between ground truth and estimation, an alignment between the two trajectories is required. The alignment is done by solving a point-cloud registration problem\footnote{We ignore the rotation in each camera frame and align position of cameras through TEASER \cite{yang2020teaser}.} between two clusters of cameras. 

\end{document}